\definecolor{light-gray}{gray}{0.85}
\theoremstyle{plain}
\newtheorem{theorem}{Theorem}[section]
\newtheorem{proposition}[theorem]{Proposition}
\newtheorem{lemma}[theorem]{Lemma}
\newtheorem{claim}[theorem]{Claim}
\newtheorem{corollary}[theorem]{Corollary}
\theoremstyle{definition}
\newtheorem{definition}[theorem]{Definition}
\newtheorem{assumption}[theorem]{Assumption}
\theoremstyle{remark}
\newtheorem{remark}[theorem]{Remark}
\begin{document}

\title{Shuffle Private Linear Contextual Bandits}

\author {
    % Authors
    Sayak Ray Chowdhury\thanks{Equal contributions} \footnote{Boston University, Massachusetts, USA. Email: \texttt{sayak@bu.edu}  }\quad
    Xingyu Zhou\footnotemark[1] \footnote{Wayne State University, Detroit, USA.  Email: \texttt{xingyu.zhou@wayne.edu}}
}

\date{}

\maketitle

\begin{abstract}
Differential privacy (DP) has been recently introduced to linear contextual bandits to formally address the privacy concerns in its associated personalized services  to participating users (e.g., recommendations).
Prior work largely focus on two trust models of DP -- the central model, where a central server is responsible for protecting users’ sensitive data, and the (stronger) local model, where information needs to be protected directly on users' side. However, there remains a fundamental gap in the utility achieved by learning algorithms under these two privacy models, 
e.g., $\widetilde{O}(\sqrt{T})$ regret in the central model as compared to $\widetilde{O}(T^{3/4})$ regret in the local model, if all users are \emph{unique} within a learning horizon $T$. In this work, we aim to achieve a stronger model of trust than the central model, while suffering a smaller regret than the local model by considering recently popular \emph{shuffle} model of privacy. We propose a general algorithmic framework for linear contextual bandits under the shuffle trust model, where there exists a trusted shuffler  -- in between users and the central server-- that randomly permutes a batch of users data before sending those to the server. 
We then instantiate this framework with two specific shuffle protocols -- one relying on privacy amplification of local mechanisms, and another incorporating a protocol for summing vectors and matrices of bounded norms. We prove that both these instantiations lead to regret guarantees that significantly improve on that of the local model, and can potentially be of the order $\widetilde{O}(T^{3/5})$ if all users are unique. We also verify this regret behavior with simulations on synthetic data. Finally, under the practical scenario of non-unique users, we show that the regret of our shuffle private algorithm scale as $\widetilde{O}(T^{2/3})$, which \emph{matches} that the central model could achieve in this case. 
\end{abstract}
% \xingyu{Can we use small fonts in abstract?}

\section{Introduction}

In the linear contextual bandit problem \cite{Auer03confidence,Chu2011}, a learning agent observes the context information $c_t$ of an user at every round $t$. The goal is to recommend an action $a_t$ to the user so that the resulting reward $y_t$ is maximized. The mean reward is given by a linear function of an \emph{unknown} parameter vector $\theta^* \in \Real^d$, $d \in \Nat$, i.e., 
\begin{align*}
    \expect{y_t\mid c_t,a_t} = \inner{\theta^*}{\phi(c_t,a_t)}~,
\end{align*}
where $\phi: \cC \times \cX \to \Real^d$ maps a context-action pair to a $d$-dimensional feature vector, and $\inner{\cdot}{\cdot}$ denotes the standard Euclidean inner product. The context and action sets $\cC$ and $\cX$ are arbitrary, and can also possibly be varying with time.
An agent's performance over $T$ rounds is typically measured through the cumulative pseudo-regret 
\begin{align*}
    \reg(T) = \sum\nolimits_{t=1}^T\left[ \max_{a \in \cX} \inner{\theta^*}{\phi(c_t,a)} -\inner{\theta^*}{\phi(c_t, a_t)}\right],
\end{align*}
which is the total loss suffered due to not recommending the actions generating highest possible rewards corresponding to observed contexts. This framework has found applications in many real-life settings such as internet advertisement selection \cite{abe2003reinforcement}, article recommendation in web portals \cite{li2010contextual}, mobile health \cite{tewari2017ads}, to name a few.
The general applicability
of this framework has motivated a line of work \cite{shariff2018differentially,zheng2020locally} studying linear contextual bandit problems under
the additional constraint of \emph{differential privacy} \cite{dwork2008differential}, which guarantees that the users' contexts and generated rewards will not be inferred by an adversary during this learning process.
% the recommended actions
% reveal little information about .

To illustrate the privacy concern in the contextual bandit problem, let us consider a mobile medical application in which an mobile app recommends a tailored treatment plan (i.e., action) to each patient (i.e., user) based on her personal information such as age, weight, height, medical history etc. (i.e., context). Meanwhile, this mobile app's recommendation algorithm also needs to be updated once in a while in a cloud server after collecting data from a batch of patients, including treatment outcomes (i.e., rewards) and contexts, which are often considered to be private and sensitive information. Hence, each patient would like to obtain a personalized and effective treatment plan while guaranteeing their sensitive information remains protected against a potential adversarial attack in this interactive process. 
% To this end, \emph{differential privacy} has been recently introduced to 
Protection of privacy is typically achieved by injecting sufficient noise in users' data \cite{arora2014privacy,xin2014controlling}, which results in a loss in utility (i.e., an increase in regret) of the recommended action. Hence, the key question is how to balance utility and privacy carefully.

% The deployed algorithm (e.g., local algorithm in the mobile app) will be constantly updated by the learning agent (e.g., cloud server) relying on history interactive data with users such as feature vector $\phi(c,a)$ and rewards. In this process, an adversary could possibly attack the learning protocol to infer a particular user's sensitive personal information. To formally alleviate the above privacy concerns, Differential privacy has become a standard criterion in designing private contextual bandit algorithms. Depending on the attack point (or equivalently the adversary's view), we have different notions of differential privacy (DP), which in turn assume different trust models, i.e., who the user can trust trusts with her sensitive data.

This has motivated studies of linear contextual bandits under different trust models of differential privacy (i.e., who the user will trust with her sensitive data). On one end of the spectrum lies the \emph{central} model, which guarantees privacy to users who trust the learning agent to store their raw data in the server and use those to update its strategy of recommending actions. Under this trust model,~\cite{shariff2018differentially} has shown that the cumulative regret is  
$\widetilde{O}(\frac{\sqrt{T}(\log(1/\delta))^{1/4}}{\sqrt{\epsilon}})$, where $\epsilon$ and $\delta$ are privacy parameters with smaller values denoting higher level of protection. Perhaps unsurprisingly, this regret bound -- due to the high degree of trust -- matches the optimal $\Theta(\sqrt{T})$ scaling for non-private linear contextual bandits \cite{Chu2011}.
However, this relatively high trust model is not always   feasible since the users may not trust the agent at all.
This is captured by the \emph{local} model, where any data sent by the users must already be private, and the agent can only store those randomized data in the server. This is  a strictly stronger notion of privacy, and hence, often comes at a price. Under this trust model, \cite{zheng2020locally} has shown that the cumulative regret is $\widetilde{O}(\frac{T^{3/4}(\log(1/\delta))^{1/4}}{\sqrt{\epsilon}})$, which, as expected, is much worse than that in the central model.
 This naturally leads to the following question: 
\begin{center}
    \vspace{-2mm}
   \emph{Can a finer trade-off between privacy and regret in linear contextual bandits be achieved?} 
  \vspace{-2mm}
\end{center}
Furthermore, in both \cite{shariff2018differentially} and \cite{zheng2020locally}, the learning agents update their strategy at every round. This not only puts excessive computational burden on the server (due to $T$ updates each taking at least $O(d^2)$ time and memory) but also could be be practically infeasible at times. For example, consider the above mobile health application. The cloud server is often infeasible to update the algorithm deployed in mobile app after interactions with each single user. Rather, a more practical strategy is to update the algorithm after collecting a batch of users' data (e.g., a one-month period of data). 

% \footnote{\url{https://fuchsia.googlesource.com/cobalt/}}

Motivated by these, we consider the linear contextual bandit problem under an intermediate trust model of differential privacy, known as the \emph{shuffle} model \cite{cheu2019distributed,erlingsson2019amplification} in the hope to attain a finer regret-privacy trade-off, while only using batch updates. In this new trust model, there exists a shuffler between users and the central server which permutes a \emph{batch} of users' randomized data before they are viewed by the server so that it can't distinguish between two users' data. Shuffling thus adds an another
layer of protection by decoupling data from the users that sent them. Here, as in the local model, the users don't trust the server. However, it is assumed that they have a certain degree of trust in the shuffler
since it can be efficiently implemented using cryptographic primitives (e.g., mixnets) due to its simple operation \cite{bittau2017prochlo,apple}. The shuffle model provides the possibility to achieve a stronger privacy guarantee than the central model while suffering a smaller utility loss than the local model. The key intuition behind this is that the additional randomness of the shuffler creates a \emph{privacy blanket}~\cite{balle2019privacy} so that each user now needs much less random noise to hide her information in the crowd.
Indeed, the shuffle model achieves a better trade-off between utility and privacy as compared to central and local model in several learning problems such as empirical risk minimization \cite{girgis2021shuffled}, stochastic convex optimization \cite{Lowy2021,cheu2021shuffle}, and standard multi-arm bandits \cite{tenenbaum2021differentially}.
However, little is known about (linear) contextual bandits in the shuffle model due to its intrinsic challenges. That is, in addition to rewards, the contexts are also sensitive information that need to be protected, which not only results in the aforementioned large gap in regret between local and central model\footnote{In contrast, for MAB, the problem-independent upper bounds in the local and central model are both $\widetilde{O}(\sqrt{T})$~\cite{ren2020multi}.}, but also leads to new challenges in the shuffle model. Against this backdrop, we make the following contributions:

% \paragraph{Outline and contributions}

% However, note that there is a fundamental difference between MAB and linear contextual bandits in the private case. That is, in addition to rewards, the contexts are also sensitive information that need to be protected, which not only results in the aforementioned large gap between local and central model\footnote{In contrast, for MAB, the problem-independent upper bounds in the local and central model are both $\tilde{O}(\sqrt{T})$~\cite{ren2020multi}}, but leads to new challenges in the shuffle model.

\begin{itemize}
\vspace{-1.5mm}
    % \item We design a general algorithmic framework (Algorithm~\ref{alg:BOFUL}) for private linear contextual bandits in the shuffle model. It decomposes the learning process into three black-box components: a local randomizer at each user, an analyzer at the central server and a shuffler in-between. 
    % \item We instantiate the framework with two specific shuffle protocols: The first one directly builds on privacy amplification of existing local mechanisms, hence minimum modifications. The other utilizes a a particularly efficient and accurate mechanism for summing vectors with bounded $\ell_2$ norms, which enjoys several benefits including communication of bits rather than real numbers and privacy guarantee for a wide range of $\epsilon$.
    \vspace{-1.5mm}
    \item We design a general algorithmic framework (Algorithm~\ref{alg:BOFUL}) for private linear contextual bandits in the shuffle model. It decomposes the learning process into three black-box components: a local randomizer at each user, an analyzer at the central server and a shuffler in-between. We instantiate the framework with two specific shuffle protocols. The first one directly builds on privacy amplification of existing local mechanisms. The other one utilizes an efficient mechanism for summing vectors with bounded $\ell_2$ norms. 
    \vspace{-1.5mm}
    \item We show that both shuffle 
    protocols provide stronger privacy protection compared to the central model. Furthermore, when all users are \emph{unique}, we prove a regret bound of $\widetilde{O}\left( T^{3/5}\right)$ for both the protocols, which improves over the $\widetilde{O}\left( T^{3/4}\right)$ regret of local model. Hence, we achieve a finer trade-off between regret and privacy. We further perform simulations on synthetic data that corroborate our theoretical results.
    % \xingyu{I was wondering if we include $\epsilon$ here since reviewers may say the facebook paper has better dependence than ours if they did not look at that remark carefully (many reviewers may not be careful). We should not hurt ourselves.}
    % \item Our theoretical results show that both instantiations can provide a stronger privacy guarantee than the central model and  achieve a much better regret $\tilde{O}(T^{3/5})$ compared to the local model. Simultaneously, both instantiations also offer a certain degree of local privacy. We also verify this finer regret-privacy trade-off behavior with simulations on synthetic data
    \vspace{-1.5mm}
    \item
    As a practical application of our general framework, we show that under the setting of non-unique or \emph{returning} users, the regret of both our shuffle protocols \emph{matches} the one that the central model would achieve in the same setting. This, along with the fact that both shuffle protocols also offer a certain degree of local privacy, further elaborate usefulness of shuffle model in private linear contextual bandits.
    % \item  Along the way, we provide a 
    % generic template of regret bound for private linear contextual bandits including the shuffle model, which, we believe, could be of interest to the community.
    % unified regret analysis for private (batched) linear contextual bandits in all three trust models and a highlight of the subtlety regarding the adaptive model update in the private case. 
\end{itemize}

% \sayak{Please try to use conference versions in citations if possible.}\xingyu{sure, will update}

\textbf{Related work.} 
Due to the utility gap present between central and local models, a significant body of recent work have focused on the shuffle model~\cite{balle2019privacy,feldman2020hiding,ghazi2019power,balle2019differentially}. A nice overview of recent work in the shuffle model is presented in~\cite{cheu2021differential}.
Regret performance of multi-armed bandit algorithms under central and local trust models have been considered in ~\cite{mishra2015nearly,sajed2019optimal, ren2020multi,chen2020locally,zhou2020local,dubey2021no,tossou2017achieving}, whereas online learning algorithms under full information have appeared in ~\cite{guha2013nearly,agarwal2017price}. Recently, the two models have also been adopted to design differentially private control and reinforcement learning algorithms~\cite{vietri2020private,garcelon2020local,chowdhury2021adaptive,sayakPO}. \cite{han2021generalized} consider linear bandits with stochastic contexts, and show that $\widetilde{O}(\sqrt{T}/\epsilon)$ regret can be achieved even in the local model.
In contrast, in this work, we allow the contexts to be arbitrary and can even be adversarially generated, which pose additional challenges.

Batched linear bandits are studied in~\cite{han2020sequential,ren2020batched}, where the authors show that only $O(\sqrt{T})$ model update is sufficient to achieve corresponding minimax optimal regrets. In the shuffle private model, batched learning not only reduces the model update frequency, but more importantly plays a key role in amplifying privacy via shuffling a batch of users' data. Interestingly, as a by-product, our established generic regret bound also improves over the non-private one in~\cite{ren2020batched} in the sense that no restriction is required for the regularizer.

\textbf{Concurrent and independent work.} While preparing this submission, we have noticed that ~\cite{garcelon2021privacy} also study linear contextual bandits in the shuffle model. The authors claim that a single fixed batch schedule is not sufficient to obtain a better regret-privacy trade-off in shuffle model. They propose to use separate asynchronous schedules --  a fixed batch scheme for the shuffler and an adaptive model update scheme for the server. 
% based on standard determinant condition~\cite{abbasi2011improved}. 
In contrast, thanks to a tighter analysis, we show that \emph{a single fixed batch schedule} is indeed sufficient to attain the same regret-privacy trade-off in shuffle model. Moreover, we believe, there exists a fundamental gap in their analysis for the adaptive model update, which might make their results ungrounded. 
% In fact, we believe that the use of adaptive batch size at the server will at least enlarge the regret bound by a constant factor, which is reflected in the standard sequential case. 
We provide a detailed discussion on this in Section~\ref{sec:conclude}, which highlights the key difference in dealing with adaptive update in the non-private and the private settings. 
% In fact, this subtlety not only exists in the shuffle model, but universal in all three trust models when handling adaptive model update with noisy design matrix via determinant condition. 
% In fact, as already reflected in the sequential update case,  the use of adaptive rate of update at the server will at least enlarge the regret bound by a constant factor due to its infrequent model update.
Finally, in addition to the above differences in theoretical results,
our established generic framework enables to design flexible shuffle private protocols for linear contextual bandits that are able to handle a wide range of practically interested privacy budget $\epsilon$ rather than a restricted small value $\epsilon \ll 1$  in the concurrent work~\cite{garcelon2021privacy}.

\section{Privacy in the Shuffle Model}
In this section, we introduce the shuffle model, and its corresponding privacy notion called the \emph{shuffle differential privacy}. Before that, we recall definitions of differential privacy under central and local models \cite{dwork2014DPbook}.
% \xingyu{I was wondering if  we can try to use the general definition of DP as in the appendix A in \url{https://arxiv.org/pdf/1812.06210.pdf}. This in some sense allows to define various DPs in our case by giving the adjacent relation, especially useful for the section on returning users.} \sayak{We can give. We need to make sure the definition of LDP and SDP do not change as written now (or can be done with minimal change).}\xingyu{Or I can give the general one in the appendix when I want to talk about user-level DP.}\sayak{Yes, that is also a good idea. Because only in one section, we talk about returning users. Before that we can mention this alternative definition in words.}\xingyu{yes, in the appendix, I can give as much detail as possible.}
\subsection{Central and Local Differential Privacy}
 Throughout, we let $\cD$ denote the data universe, and $n \in \Nat$ the number of (unique) users. Let $D_i \in \cD, i =1,2,\ldots, n$, denote the data point of user $i$, and $D_{-i} \in \cD^{n-1}$ denote collection of data points of all but the $i$-th user. Let $\epsilon > 0$ and $\delta \in (0,1]$ be given privacy parameters.

\begin{definition}[Differential Privacy (DP)]
\label{def:DP}
A mechanism $\cM$ satisfies $(\epsilon,\delta)$-DP if for each user $i \in [n]$, each data set $D, D' \in \cD^n$, and each event $\cE$ in the range of $\cM$, 
\begin{align*}
    \prob{\cM(D_i,D_{-i}) \in \cE} \leq \exp(\epsilon) \prob{\cM(D'_i,D_{-i}) \in \cE} + \delta.
\end{align*}
\end{definition}
\begin{definition}[Local Differential Privacy (LDP)]
\label{def:LDP}
A mechanism $\cM$ satisfies $(\epsilon,\delta)$-LDP if for each user $i \in [n]$, each data point $D_i,D'_i \in \cD$ and each event $\cE$ in the range of $\cM$,
 \begin{align*}
    \prob{\cM(D_i) \in \cE} \leq \exp(\epsilon) \prob{\cM(D'_i) \in \cE} + \delta.
\end{align*}
\end{definition}
Roughly speaking, a central DP (or, simply, DP) mechanism ensures that the outputs of the mechanism on two neighbouring data sets (i.e., those differ only on one user) are approximately indistinguishable. In contrast, local DP ensures that the output of the local mechanism for each user is indistinguishable.

% will formally introduce the shuffle model for linear contextual bandits and define a new notion of privacy called \emph{shuffle differential privacy} (SDP) for it. To this end, we present a general algorithmic framework for linear contextual bandits by leveraging batched learning and the shuffle primitive. 
% As we will see, this framework not only allows us to design and analyze various learning protocols in the shuffle model, but more importantly offers the first unified view of linear contextual bandits under three trust models (i.e., JDP, LDP, SDP). 

\subsection{Shuffle Differential Privacy}
% Our general framework builds on batched learning and the shuffle primitive, i.e., a shuffler $\kS$ that randomly permutes a collection (batch) of users' messages to provide anonymity to each of the messages. 
% We briefly introduce the well-studied one-round shuffle model over a data set of $n$ users, which will serve as a building block for our case. 
% In this setting, 
% To better introduce it, we first recall the standard one-round shuffle model . 
A (standard) shuffle protocol $\cP = (\cR, \cS, \cA)$ consists of three parts: (i) a (local) randomizer $\cR$, (ii) a shuffler $\cS$ and (iii) an analyzer $\cA$. For $n$ users, the overall protocol works as follows. Each user $i$ first applies the randomizer on its raw data $D_i$ and then sends the resulting messages $\cR(D_i)$ to the shuffler. The shuffler $\cS$ permutes messages from all the users uniformly at random and then reports the permuted messages $\cS(\cR(D_1), \ldots, \cR(D_n))$ to the analyzer. Finally, the analyzer $\cA$ computes the output using received messages. In this protocol, the users trust the shuffler but not the analyzer. Hence, the privacy objective is to ensure that the outputs of the shuffler on two neighbouring datasets are indistinguishable in the analyzer's view. To this end, define the mechanism $(\cS \circ \cR^n)(D)\!:=\! \cS(\cR(D_1), \ldots, \cR(D_n))$, where $D \in \cD^n$.
\begin{definition}[Shuffle differential privacy (SDP)]
A protocol $\cP = (\cR, \cS, \cA)$ for $n$ users satisfies $(\epsilon,\delta)$-SDP if the mechanism $\cS \circ \cR^n$ satisfies $(\epsilon,\delta)$-DP.
\end{definition}

% This standard shuffle model has been utilized to achieve a better utility-privacy trade-off in many applications, e.g., a utility that is close or equal to the central model while enjoying similar privacy protection as in the local model.  The key intuition behind this is that the additional randomness of the shuffler creates a \emph{privacy blanket} so that each user now only needs much less random noise to hide her information in the crowd, i.e., privacy amplification by shuffling. 

To achieve benefits of the shuffle model in intrinsically adaptive algorithms (e.g., gradient descent, multi-armed bandits etc.), one needs to divide the users into multiple batches, and run a potentially different shuffle protocol
on each batch \cite{cheu2021shuffle,tenenbaum2021differentially}. This is quite natural since the shuffler needs enough users' data to infuse sufficient randomness so as to amplify the privacy. Moreover, each protocol might depend on the output of the preceding protocols to foster adaptivity. Formally, a general $M$-batch, $M \in \Nat$, shuffle protocol $\cP$ for $n$ users works as follows. In each batch $m$, we simply run a standard single-batch shuffle protocol for a subset of $n_m$ users (such that $n=\sum_{m}n_m$) with randomizer $\cR_m$, shuffler $\cS$ and analyzer $\cA$. To ensure adaptivity, the randomizer $\cR_{m}$ and number of users $n_{m}$ for the $m$-th batch could be chosen depending on outputs of the shuffler from all the previous batches, given by $\left\lbrace\cS\left(\cR_{m'}(D_1),\ldots,\cR_{m'}(D_{n_{m'}})\right)\right\rbrace_{m' <m}$. The objective of privacy is same as in the single-batch protocol -- the analyzer's view must satisfy DP.
However, instead of a single-batch output, one need to protect outputs of all the $M$ batches. To this end, define the (composite) mechanism $\cM_{\cP}=(\cS \circ R_1^{n_1},\ldots,\cS \circ \cR_m^{n_m})$, where each individual mechanism $\cS \circ \cR_m^{n_m}$ operates on $n_m$ users' data, i.e., on datasets from $\cD^{n_m}$.
% \xingyu{This is nice. In some sense, $M$ is kind of necessary in such definition. }

\begin{definition}[$M$-batch SDP]
An $M$-batch shuffle protocol $\cP$ is $(\epsilon,\delta)$-SDP if the mechanism $\cM_{\cP}$ is $(\epsilon,\delta)$-DP.
\end{definition}

\begin{algorithm}[tb]
   \caption{Shuffle Private LinUCB}
   \label{alg:BOFUL}
\begin{algorithmic}[1]
   \STATE {\bfseries Parameters:} Batch size $B \in \Nat$, regularization $\lambda \!>\! 0$, confidence radii $\lbrace\beta_m\rbrace_{m \geq 0}$, feature map $\phi:\cC \!\times\! \cX \!\to\! \Real^d$
   \STATE {\bfseries Initialize:} Batch counter $m\!=\!1$, end-time $t_0\!=\!0$, batch statistics ${V}_{0} \!=\! \lambda I_d$, ${u}_{0} \!=\! 0$, parameter estimate $\hat\theta_0\!=\!0$
 \FOR{local user $t\!=\!1, 2,\dots$}
%   \STATE The local algorithm is updated with $\hat{\theta}_m$ and $V_m$ 
   \STATE Observe user's context information $c_t \in \cC$
   \STATE Choose action
  $a_t \in \argmax_{a \in \cX}  \inner{\phi(c_t,a)}{\hat{\theta}_{m-1}} + \beta_{m-1} \norm{\phi(c_t, a)}_{{V}_{m-1}^{-1}} $
   \STATE Observe reward $y_t$
   \STATE \textcolor{gray}{\# For the local randomizer:}
   \STATE Send randomized messages $M_{t,1}=R_1(\phi(c_t,a_t)y_t)$ and $M_{t,2}=R_2(\phi(c_t,a_t)\phi(c_t,a_t)^\top)$ to the shuffler
%   \STATE $M_{t,1}=R_1(\phi(c_t,a_t)y_t)$ 
%   \STATE $M_{t,2}=R_2(\phi(c_t,a_t)\phi(c_t,a_t)^\top)$ 
   \IF{$t = mB$} 
        \STATE \textcolor{gray}{\# For the shuffler:}
        \STATE Set batch end-time: $t_m=t$
        \STATE Permute all received messages uniformly at random
        $Y_{m,1}=S_1\left(\lbrace M_{\tau,1} \rbrace_{t_{m-1}+1 \le \tau \le t_m}\right)$ and $Y_{m,2}=S_2\left(\lbrace M_{\tau,2} \rbrace_{t_{m-1}+1 \le \tau \le t_m}\right)$
        % \STATE Send uniformly permuted messages to the shuffler
        % \STATE  $Y_{m,1}=S_1\left(\lbrace M_{\tau,1} \rbrace_{t_{m-1}+1 \le \tau \le t_m}\right)$
        % \STATE $Y_{m,2}=S_2\left(\lbrace M_{\tau,2} \rbrace_{t_{m-1}+1 \le \tau \le t_m}\right)$
        \STATE \textcolor{gray}{\# For the analyzer (server):}
         \STATE Compute per-batch statistics $\widetilde{u}_m =A_2(Y_{m,1})$ and $\widetilde{V}_m =A_1(Y_{m,2})$ using shuffled messages
         \STATE Update overall batch statistics: $u_m = u_{m-1} + \widetilde{u}_m$, $V_m = V_{m-1} + \widetilde{V}_m$
         \STATE  Compute parameter estimate $\hat{\theta}_{m} = V_{m}^{-1} u_{m}$
         \STATE Send updated models $(\hat{\theta}_{m}, V_m)$ to users
         \STATE Increase batch counter: $m=m+1$ 
   \ENDIF
   \ENDFOR
\end{algorithmic}
\end{algorithm}

\section{A Shuffle Algorithm for Contextual Bandits}\label{sec:shuffleModel}
In this section, we introduce a general algorithmic framework (Algorithm~\ref{alg:BOFUL}) for linear contextual bandits under the shuffle model. We build on the celebrated LinUCB algorithm \cite{Chu2011,abbasi2011improved}, which is an application of the
\emph{optimism in the face of uncertainty} principle to linear bandits. Throughout the paper, we make the following assumptions, which are standard in the literature \cite{Chu2011,shariff2018differentially}.

\begin{assumption}[Boundedness]
\label{ass:bounded}
The rewards are bounded for all $t$, i.e., $y_t \in [0,1]$. Moreover, the parameter vector and the features have bounded norm, i.e., $\norm{\theta^*}_2 \leq 1$ and $\sup_{c,a}\norm{\phi(c,a)}_2 \leq 1$.\footnote{All terms are assumed to be bounded by one via normalization.}
\end{assumption}

% \xingyu{The shuffler in two cases also need care, and I will write them down first.}

\subsection{Algorithm: Shuffle Private LinUCB}
Our shuffle algorithm for contextual bandits consist of batches with a fixed size $B$, i.e., we have total $M = T/B$ batches.\footnote{We assume, wlog, total number of rounds $T$ is multiple of $B$.} The central idea is to construct, for each batch $m$, a $d$-dimensional ellipsoid $\cE_m$ with centre $\hat\theta_m$, shape matrix $V_m$ and radius $\beta_m$ so that it contains the unknown parameter $\theta^*$ with high probability. Moreover, the ellipsoids are designed while keeping the privacy setting in mind. They depend on the randomizer, shuffler and analyzer employed in the shuffle protocol based on required privacy levels $\epsilon,\delta$. The personal data of user $t$ in batch $m$ is given by the feature vector $\phi(c_t,a_t)$ and reward $y_t$, where the action $a_t$ is selected given the context $c_t$ as
\begin{align*}
    a_t \!\in\! \argmax_{a \in \cX} \{\! \inner{\phi(c_t,a)}{\hat{\theta}_{m\!-\!1}}\! +\! \beta_{m\!-\!1} \!\norm{\phi(c_t, a)}_{{V}_{m\!-\!1}^{-1}}\!\}.
\end{align*}
 % \xingyu{why we use $m'$ rather than $m-1$?}\sayak{To sound nice, maybe?}\xingyu{I think people maybe confused by these? Because in our algorithm, there is no $m'$ Anyway, I am fine. Just worry it may cause some confusion since later you also use $m'$ as an index, which is a little confused}
We consider a fixed randomizer across all the batches given by two functions $R_1$ and $R_2$ that locally operate on the vectors $\phi(c_t,a_t)y_t$ and matrices $\phi(c_t,a_t)\phi(c_t,a_t)^\top$, respectively. Similarly, we have shuffler functions $S_1$ and $S_2$ operating on batches (of size $B$) of those respective randomized messages. Finally, the analyzer functions $A_1$ and $A_2$
receive permuted messages from $S_1$ and $S_2$, and output, for each batch $m'$, an aggregate vector $\widetilde u_{m'}$ and matrix $\widetilde V_{m'}$, respectively. The central server uses this aggregate batch statistics to construct the ellipsoid: $V_m=\lambda I_d+\sum_{m'=1}^{m}\widetilde V_{m'}$ and $\hat\theta_m=V_m^{-1}\sum_{m'=1}^{M}\widetilde u_{m'}$. For a given confidence level $\alpha \in (0,1]$, the radius of the ellipsoid is set as
$\beta_m = O\left(\sqrt{2\log\left(\frac{2}{\alpha}\right) + d\log\left(1+\frac{t_m}{d\lambda}\right)} + \sqrt{\lambda}\right)$, where $t_m$ is the time when batch $m$ ends. The regularizer $\lambda$ and thus, in turn, the confidence radius $\beta_m$ typically depend on the total noise infused in the shuffle protocol.
% and complexities of the contextual bandit model (i.e., parameter norm and feature dimension). 
On a high level, these randomizer, shuffler and analyzer functions together provide suitable random perturbations to the Gram matrices and feature-reward vectors based on
the privacy budget $\epsilon,\delta$, and in turn,
they affect the regret performance via the noise levels of these perturbations. Next, we turn to discuss specific choices of these functions, and the associated performance guarantees of Algorithm~\ref{alg:BOFUL} under those choices.

\subsection{Achieving SDP via LDP Amplification}\label{sec:amplification}

In this section, we show that our general framework (Algorithm~\ref{alg:BOFUL}) enables us to directly utilize existing LDP mechanisms for linear contextual bandits to achieve a finer utility-privacy trade-off. The key idea here is to leverage the explicit privacy amplification property of the shuffle protocol \cite{feldman2020hiding}. Roughly, the privacy guarantee can be amplified by a factor of $\sqrt{B}$ by randomly permuting the output of an LDP mechanism independently operating on a batch of $B$ different users. In other words, the same level of privacy can be achieved for each user by adding a $\sqrt{B}$ factor \emph{less} noise in the presence of shuffler, yielding a better utility. Specifically, we instantiate Algorithm~\ref{alg:BOFUL} with the shuffle protocol $\mathcal{P}_{\text{Amp}} \!=\! (\cR_{\text{Amp}}, \cS_{\text{Amp}}, \cA_{\text{Amp}})$, where we employ standard Gaussian mechanism \cite{dwork2014algorithmic} as randomizer functions. Essentially, we inject independent Gaussian perturbation to each entry of the vector $\phi(c_t,a_t)y_t$ and the matrix $\phi(c_t,a_t)\phi(c_t,a_t)^{\top}$ with variances $\sigma_1^2$ and $\sigma_2^2$, respectively. We make sure the noisy matrix is symmetric by perturbing upper diagonal entries, and copying those to the lower terms. The noise variances are properly tuned depending on the sensitivity of these elements to achieve desired level of privacy. In this case, the shuffler functions simply permute its data uniformly at random, and the job of the analyzer is to simply add its received data (i.e., vectors or matrices). We defer further details on the protocol $\cP_{\text{Amp}}$ to Appendix~\ref{app:amp} and focus on performance guarantees first.

\begin{theorem}[Performance under LDP amplification]
% \xingyu{shall we replace regret by something performance? since we also have privacy? small issue anyway}\sayak{Ok with anything}\xingyu{you decide it:)}
\label{thm:amp-main}
Fix time horizon $T \!\in\! \Nat$, batch size $B \!\in\! [T]$, confidence level $\alpha \!\in\! (0,1]$, privacy budgets $\delta \!\in\! (0,1]$, $\epsilon \!\in\! (0,\sqrt{\frac{\log(2/\delta)}{B}}]$. Then, Algorithm~\ref{alg:BOFUL} instantiated using shuffle protocol $\cP_{\text{Amp}}$ with noise $\sigma_1\!=\!\sigma_2\!=\! \frac{4\sqrt{2\log(2.5B/\delta)\log(2/\delta)}}{\epsilon\sqrt{B}}$, and regularizer $\lambda \!=\! \Theta(\sqrt{T}\sigma_1 (\sqrt{d}\!+\! \sqrt{\log(T/B\alpha)})$, enjoys the regret
\begin{align*}
     \text{Reg}(T) \!=\! O\!\!\left(\!\!dB\log T \!+\! \frac{\log^{1/2}(B/\delta)}{\epsilon^{1/2}B^{1/4}} d^{3/4} T^{3/4}\log^2\!(T\!/\alpha)\!\!\right)\!,
\end{align*}
% \xingyu{there will be an additional $\log^{1/4}(2/\delta)$ in the second term. Tricky thing in the amp method.  Now it makes sense to me since in both protocols, the dependence on $\delta$ should be roughly $\log^{1/2}(1/\delta)$}\xingyu{To save space, instead of $\log^{1/4}(B/\delta) \log^{1/4}(\delta/2)$, I just write $\log^{1/2}(B/\delta)$.}
% \xingyu{there is an additional $\log T$ in the second term.One way is to use $\tilde{O}$ to remove all $\log T$. see Lemma A.4 }\sayak{$\log^2 T$? Let's keep logs to sound formal here.}\xingyu{$\log(T)$ and $\log(T/\alpha)$ is kind of different?}\sayak{Its fine.All log and polylogs}\xingyu{sure, we keep the more accurate one the appendix anyway.}\sayak{yes. This one is also accurate. But loose}\xingyu{yes, but loose}
with probability at least $1-\alpha$. Moreover, it satisfies $O(\epsilon,\delta)$-shuffle differential privacy (SDP). 
\end{theorem}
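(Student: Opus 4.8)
The plan is to establish the two claims separately: first the $O(\epsilon,\delta)$-SDP privacy guarantee, then the regret bound, with the latter resting on a private confidence-ellipsoid argument.

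\textbf{Privacy.} First I would isolate the per-batch local mechanism. Within a single batch of $B$ users, each user applies $\cR_{\text{Amp}}$: a Gaussian mechanism on the vector $\phi y \in \Real^d$ and on the (upper-triangular part of the) matrix $\phi\phi^\top$. Under Assumption \ref{ass:bounded}, $\norm{\phi y}_2 \le 1$ and the Frobenius norm of $\phi\phi^\top$ is at most $1$, so the $\ell_2$-sensitivities of these two quantities are bounded by $2$ (neighbouring datasets differ in one user). With $\sigma_1=\sigma_2$ as chosen, the standard Gaussian-mechanism calculation shows each coordinate-wise Gaussian release is $(\epsilon_0,\delta_0)$-LDP for appropriate $\epsilon_0 = \Theta(\epsilon\sqrt{B}/\sqrt{\log(1/\delta)})$ — i.e. deliberately a $\sqrt{B}$-factor weaker than the target. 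Then I would invoke the privacy amplification-by-shuffling result of \cite{feldman2020hiding}: permuting the outputs of $B$ independent $(\epsilon_0,\delta_0)$-LDP randomizers yields a mechanism that is $(\epsilon,\delta)$-DP with $\epsilon = O(\epsilon_0 \sqrt{\log(1/\delta)/B})$, which recovers the target budget per batch (the constraint $\epsilon \le \sqrt{\log(2/\delta)/B}$ is exactly what keeps the amplification lemma's hypotheses valid). Finally, since the two released objects (vector and matrix) and the $M=T/B$ batches must be protected jointly, I would close with composition: the basic composition over the two channels costs a constant factor, and advanced composition over $M$ batches — combined with the fact that regret only needs each batch private with a slightly inflated budget — gives the overall $O(\epsilon,\delta)$-SDP claim. (Alternatively one tunes per-batch budgets to $\epsilon/\sqrt{M}$ and absorbs it; the $O(\cdot)$ in the statement hides this bookkeeping.)

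\textbf{Regret.} The skeleton is the standard LinUCB / OFUL regret decomposition, adapted to batching and to the injected noise. The analyzer forms $V_m = \lambda I_d + \sum_{m'\le m}\widetilde V_{m'}$ and $\hat\theta_m = V_m^{-1}\sum_{m'\le m}\widetilde u_{m'}$, where $\widetilde V_{m'}$ is the true batch Gram matrix plus a sum of $B$ independent Gaussian noise matrices, and similarly for $\widetilde u_{m'}$. The two pillars are: (i) a \emph{concentration / valid-confidence-set} step showing $\theta^* \in \cE_m$ for all $m$ with probability $\ge 1-\alpha$, i.e. $\norm{\hat\theta_m - \theta^*}_{V_m} \le \beta_m$; and (ii) an \emph{elliptical-potential} step bounding $\sum_t \norm{\phi(c_t,a_t)}_{V_{m(t)-1}^{-1}}$. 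For (i), I would write $\hat\theta_m - \theta^* = V_m^{-1}(\text{noise-free self-normalized term}) + V_m^{-1}(\text{accumulated noise on } u) + V_m^{-1}(\text{accumulated noise on } V)\theta^*$; the first term is controlled by the self-normalized martingale bound of \cite{abbasi2011improved}, and the two noise terms are controlled because the total injected noise has magnitude $\Theta(\sqrt{T}\sigma_1)$ in the relevant matrix/vector norms (a sum of $T$ independent Gaussians of variance $\sigma_i^2$), which is precisely why $\lambda$ is chosen as $\Theta(\sqrt{T}\sigma_1(\sqrt d + \sqrt{\log(T/B\alpha)}))$ — large enough that $\lambda I_d$ dominates the smallest eigenvalue perturbation from the noise matrices and keeps $V_m \succeq \frac{\lambda}{2} I_d$, hence invertible and well-conditioned. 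This yields $\beta_m = O(\sqrt{d\log(1+t_m/d\lambda) + \log(1/\alpha)} + \sqrt{\lambda})$ as stated. For (ii), batching costs a factor: actions within batch $m$ are played using $V_{m-1}$ rather than the up-to-date matrix, so I would use the standard "the Gram matrix at most doubles over a batch" type argument (or directly bound the per-batch regret by $B$ when the ratio is large, contributing the $dB\log T$ term) together with $\sum_m \log\det(V_m/V_{m-1}) = O(d\log T)$. Combining, $\text{Reg}(T) \lesssim \beta_M \sqrt{T \cdot d\log T} + dB\log T$, and substituting $\beta_M = O(\sqrt{\lambda}) = O(T^{1/4}\sigma_1^{1/2}(d^{1/4}+\log^{1/4}(\cdot)))$ with $\sigma_1 = \Theta(\log^{1/2}(B/\delta)/(\epsilon\sqrt B))$ produces the claimed $\frac{\log^{1/2}(B/\delta)}{\epsilon^{1/2}B^{1/4}} d^{3/4} T^{3/4}\log^2(T/\alpha)$ leading term.

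\textbf{Main obstacle.} The delicate part is the noise-aware concentration step (i): unlike the central model where a single calibrated noise matrix is added, here $V_m$ is the true Gram matrix plus a \emph{sum of $B$ per-user} Gaussian matrices per batch, so I must (a) control the operator norm of a sum of $\sim T$ independent symmetric Gaussian random matrices — via a matrix-Bernstein / Gaussian-series tail bound, giving the $\sqrt d$ and $\sqrt{\log(T/B\alpha)}$ factors inside $\lambda$ — and (b) ensure this regularization is simultaneously large enough to absorb the noise yet small enough that $\sqrt\lambda$ does not blow up $\beta_m$ beyond the advertised rate. Getting the $\lambda$ calibration exactly right so that both the privacy amplification scaling ($\sigma_1 \propto 1/(\epsilon\sqrt B)$) and the elliptical potential interact to give $T^{3/4}$ (and, after optimizing $B$, $T^{3/5}$) is the crux; everything else is a careful but routine assembly of known LinUCB and shuffle-DP lemmas.
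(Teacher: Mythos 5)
Your regret argument is essentially the paper's: a noise-aware confidence ellipsoid (self-normalized bound for the reward noise, operator-norm concentration for the accumulated Gaussian noise matrices, $\lambda$ calibrated to dominate that noise) combined with a batched elliptical-potential argument in which rounds where the delayed Gram matrix is badly out of date are charged to the $dB\log T$ term. That part is sound and lines up with the paper's Lemmas~\ref{thm:general} and~\ref{cor:subG}.

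The genuine gap is in your privacy accounting across batches. You correctly get per-batch $(\epsilon,\delta)$-SDP from the Gaussian mechanism at level $\epsilon_0=\Theta(\epsilon\sqrt{B}/\sqrt{\log(2/\delta)})$ followed by amplification-by-shuffling, but you then propose to combine the $M=T/B$ batches via advanced composition, "tuning per-batch budgets to $\epsilon/\sqrt{M}$" and claiming the $O(\cdot)$ absorbs this. It cannot: $\sqrt{M}=\sqrt{T/B}$ is polynomial in $T$, so either (a) you keep the noise $\sigma_1\propto 1/(\epsilon\sqrt{B})$ stated in the theorem, in which case advanced composition only certifies roughly $(\epsilon\sqrt{M\log(1/\delta)},\,M\delta)$-DP overall, not $O(\epsilon,\delta)$; or (b) you inflate the per-batch noise by $\sqrt{M}$ to make composition work, in which case the leading regret term becomes $\widetilde O(T/\sqrt{\epsilon B})$ and the $T^{3/5}$ rate of Corollary~\ref{cor:best_reg_amp} is lost (you land on the returning-users rate $T^{2/3}$ instead). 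The correct step — and the one the paper uses — is that under the unique-users assumption the batches operate on \emph{disjoint} data, so changing one user's record perturbs the input of exactly one sub-mechanism $\cS\circ\cR^{B}$; parallel composition (which is valid even with the adaptive choice of randomizers, since earlier batches' outputs are identically distributed on the two neighboring datasets) then gives overall $(\epsilon,\delta)$-SDP from per-batch $(\epsilon,\delta)$-SDP at no extra cost. Advanced composition only enters in Section~\ref{sec:return}, precisely when users may return across batches.
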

% \textbf{Cost of privacy.}\xingyu{remove this line?}

% \sayak{Moreover, setting $\lambda = \Theta(\max\{\sqrt{T}\sigma_1 (\sqrt{d}+ \sqrt{\log(M/\alpha)}),1\}$ and $\beta_m = O\left(\sqrt{2\log\left(\frac{1}{\alpha}\right) + d\log\left(1+\frac{t_m}{d\lambda}\right)} + \sqrt{\lambda}\right)$, it enjoys the regret bound

% \begin{corollary}[Best possible regret]
% Setting batch size $B = O(d^{-1/5}\epsilon^{-2/5}T^{3/5}(\log(T/\delta))^{1/5})$, we achieve regret 
% \begin{align*}
%     R(T) = \tilde{O}\left(d^{4/5} T^{3/5}\epsilon^{-2/5}\left(\log(T/\delta)\right)^{1/5}\right).
% \end{align*}
% \end{corollary}

\begin{corollary}\label{cor:best_reg_amp}
Setting batch size $B = O(T^{3/5})$ in Algorithm~\ref{alg:BOFUL}, we can achieve regret $\widetilde{O}\left( \frac{T^{3/5}}{\sqrt{\epsilon}}\log^{1/2}(T/\delta)\right)$.\footnote{Note that with a careful choice of $B$ (depending on privacy parameters $\epsilon,\delta$), we can have a better regret dependence on $\epsilon,\delta$. See Corollary~\ref{cor:amp_util} for details.}
% \xingyu{the $\log$ term is $\log(T/\delta)$ since $B$ is on the order of $T$}\xingyu{Due to this, we may not have the same terms in $\delta$ as in previous works. We can directly setting $B = T^{3/5}\epsilon^{-2/5}$ and the regret bound. In the following discussion, we can mention that our dependence on $\epsilon$ is better than previous works, and on $T$ is perfectly in-between. is this ok? I understand your point that setting $B = T^{3/5}$ allows us to easily catch the trade-off compared to previous works. But now, due to $\delta$ term, we cannot achieve it.}
\end{corollary}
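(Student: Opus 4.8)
The plan is to simply optimize the two-term regret bound from Theorem~\ref{thm:amp-main} over the batch size $B$, treating all other quantities ($d$, $\epsilon$, $\delta$, $\alpha$) as constants absorbed into the $\widetilde{O}$ notation. Recall the theorem gives, up to polylogarithmic factors in $T$ and constants depending on $d,\epsilon,\delta,\alpha$,
\begin{align*}
    \text{Reg}(T) = \widetilde{O}\!\left( dB + \frac{d^{3/4}}{\epsilon^{1/2}B^{1/4}}\, T^{3/4} \right).
\end{align*}
The first term is increasing in $B$ while the second is decreasing in $B$, so the bound is minimized (up to constants) by balancing the two: set $B \asymp B^{-1/4} T^{3/4}$, i.e. $B^{5/4} \asymp T^{3/4}$, which yields $B \asymp T^{3/5}$. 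Substituting $B = \Theta(T^{3/5})$ back into either term gives $dB = \widetilde{O}(T^{3/5})$ and $\frac{d^{3/4}}{\epsilon^{1/2}}T^{3/4}B^{-1/4} = \widetilde{O}\!\left(\frac{T^{3/4}}{\epsilon^{1/2}}\,T^{-3/20}\right) = \widetilde{O}\!\left(\frac{T^{3/5}}{\epsilon^{1/2}}\right)$, so the overall regret is $\widetilde{O}\!\left(\frac{T^{3/5}}{\sqrt{\epsilon}}\log^{1/2}(T/\delta)\right)$, keeping track of the $\log^{1/2}(B/\delta) = \widetilde{O}(\log^{1/2}(T/\delta))$ factor explicitly as in the statement.

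I would also check that this choice of $B$ is admissible: Theorem~\ref{thm:amp-main} requires $B \in [T]$, which holds since $T^{3/5} \le T$, and it requires the privacy budget to satisfy $\epsilon \le \sqrt{\log(2/\delta)/B}$; with $B = \Theta(T^{3/5})$ this is the mild condition $\epsilon = \widetilde{O}(T^{-3/10})$, which is the regime in which the stated SDP guarantee from the theorem applies. The number of batches is $M = T/B = \Theta(T^{2/5})$, which is finite and the batch schedule is well-defined. Under these conditions Theorem~\ref{thm:amp-main} directly delivers both the regret bound and the $O(\epsilon,\delta)$-SDP guarantee, so the corollary follows immediately.

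There is essentially no obstacle here: the corollary is a one-line consequence of balancing the two terms in Theorem~\ref{thm:amp-main}, and the only thing to be careful about is bookkeeping — ensuring the polylog factors and the explicit $\log^{1/2}(T/\delta)$ dependence are tracked correctly through the substitution, and confirming that $B = \Theta(T^{3/5})$ lies in the valid parameter range of the theorem. (The footnote's sharper $\epsilon,\delta$-dependence, deferred to Corollary~\ref{cor:amp_util}, would instead choose $B$ as a function of $\epsilon$ and $\delta$ to balance the terms more tightly, but that refinement is not needed for the stated bound.)
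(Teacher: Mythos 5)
Your proposal is correct and takes essentially the same route as the paper: the corollary is obtained by balancing the $dB$ term against the $B^{-1/4}T^{3/4}$ term in Theorem~\ref{thm:amp-main}, giving $B \asymp T^{3/5}$, exactly as the paper does (its proof of the sharper Corollary~\ref{cor:amp_util} likewise just "chooses a balanced $B$"). Your additional check that $B=\Theta(T^{3/5})$ forces the small-$\epsilon$ regime $\epsilon \lesssim \sqrt{\log(2/\delta)/B}$ is a worthwhile observation that the paper itself only acknowledges later when motivating $\cP_{\text{Vec}}$.
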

% \xingyu{Shall we mention that a careful choice of $B$, can have a better dependence on $\epsilon$, and refer to Appendix~\ref{app:amp}, since here $B$ is independent of $\epsilon$. This is kind of important when people may look at facebook paper, and argue that their result is better than ours.}\sayak{YEs let's point to appendix. Bot not compare with them much.}\xingyu{agree, just point that we can do better when $B$ depends on $\epsilon$, that's it}

\paragraph{Comparsion with central and local DP models.} At this point, we turn to compare the regret of our Shuffle Private LinUCB algorithm to that of LinUCB under central model with JDP guarantee\footnote{JDP, or, joint differential privacy, is a notion of privacy under central trust model specific to contextual bandits. See Appendix~\ref{app:JDP}.}
\cite{shariff2018differentially} and local model with LDP \cite{zheng2020locally} guarantee.
As mentioned before, LinUCB achieves
$\tilde{O}\left(\sqrt{\frac{T}{\epsilon}}\right)$ and $\tilde{O}\left(\frac{T^{3/4}}{\sqrt{\epsilon}}\right)$ regret under JDP and LDP guarantees, respectively. As seen in Corollary \ref{cor:best_reg_amp}, our regret bound in the shuffle trust model lies perfectly in between these two extremes. Importantly, it improves over the $T^{3/4}$ scaling in the (stronger) local trust model, achieving a better trade-off between regret and privacy.
However, it couldn't achieve the optimal $\sqrt{T}$ scaling in the (weaker) central trust model. It remains an open question whether $\sqrt{T}$ regret can be achieved under any notion of privacy stronger than the central model. 
% \xingyu{Another remark is needed to say that we can achieve better trade-off without requiring the mixed batch algorithm as claimed to be necessary in Facebook paper.}\sayak{yes, one comparison can be here or concurrent work is also fine. BTW is JDP $\log^{1/4}(1/\delta)$. I thought its $\log^{1/2}$}\xingyu{it is $\log^{1/4}(1/\delta)$ for both LDP and JDP. You can easily verify it via our general regret bound. since each Gassuian is $\sqrt{\log}/\epsilon$}
\begin{remark}
Our shuffle protocol $\cP_{\text{Amp}}$, by design, provides a certain level of local privacy to each user. Specifically, for batch size $B$, Algorithm~\ref{alg:BOFUL} is $O(\epsilon\sqrt{B/\log(2/\delta)},\delta/B)$-LDP. Furthermore, since shuffe model ensures a higher level of trust than the central model, Algorithm~\ref{alg:BOFUL} is also $O(\epsilon,\delta)$-JDP. See Appendix~\ref{app:amp} for details.
\end{remark}

Apart from achieving a refined utility-privacy trade-off, the above shuffle protocol $\cP_{\text{Amp}}$ requires minimum modifications over existing LDP mechanisms. However, the privacy guarantee in Theorem~\ref{thm:amp-main} holds only for small privacy budget $\epsilon$ particularly when the batch size $B$ is large, which could potentially limit its application in some practical scenarios (e.g., when $\epsilon$ is around $1$ or larger~\cite{apple}). Moreover, $\cP_{\text{Amp}}$ needs to communicate and shuffle \emph{real} vectors and matrices, which
are often difficult to encode on finite computers in practice~\cite{canonne2020discrete,kairouz2021distributed} and a naive use of finite precision approximation may lead to a possible failure of privacy protection~\cite{mironov2012significance}. To overcome these limitations of $\cP_{\text{Amp}}$, we introduce a different instantiation of Algorithm~\ref{alg:BOFUL} in the next section. 

% also enjoys the benefit of simple implementation, i.e., a minimum amount of modifications on existing LDP algorithm. However, there also exist two fundamental limitations.

% First, the SDP guarantee in Theorem~\ref{thm:amp-main} is only attained for a sufficiently small $\epsilon$ when $B$ is large,  where the privacy budget $\epsilon$ is often set to around $1$ or larger []. Second, users under $\cP_{\text{Amp}}$ would send messages consisting of real vectors or matrices, which is often difficult in practice []. 

\subsection{Achieving SDP via Vector Summation}\label{sec:vector-sum}

We instantiate Algorithm 1 with the shuffle protocol $\cP_{\text{Vec}} \!=\! (\cR_{\text{Vec}}, \cS_{\text{Vec}}, \cA_{\text{Vec}})$, where we rely on a
particularly efficient and accurate mechanism for summing vectors with bounded $\ell_2$ norms \cite{cheu2021shuffle}. First, the local randomizer $\cR_{\text{Vec}}$ adopts a one-dimensional randomizer that operates independently on each entry of the vector $\phi(c_t,a_t)y_t$ and the matrix $\phi(c_t,a_t)\phi(c_t,a_t)^{\top}$, respectively.
This adopted one-dimensional randomizer transmits only bits ($0/1$) via a fixed-point encoding scheme~\cite{cheu2019distributed}, and ensures privacy by injecting binomial noise. In particular, given any entry $x \!\in\! [0,1]$, it is first encoded as $\hat{x} \!=\! \bar{x} \!+\! \gamma_1$, using an accuracy parameter $g \!\in\! \Nat$, where $\bar{x} \!=\! \lfloor{x g}\rfloor$ and $\gamma_1 \!\sim\! \texttt{Ber}(xg -\bar{x})$. Then a binomial noise is generated, $\gamma_2 \!\sim\! \texttt{Bin}(b,p)$, where parameters $b \!\in\! \Nat, p\!\in\!(0,1)$ control the privacy noise. The output of the one-dimensional randomizer is simply a collection of total $g+b$ bits, in which $\hat{x} \!+\! \gamma_2$ bits are $1$ and the rest are $0$. Combining the outputs of the one-dimensional randomizer for each entry of vector $\phi(c_t,a_t)y_t$ and matrix $\phi(c_t,a_t)\phi(c_t,a_t)^{\top}$, yield final outputs of randomizer. The shuffler functions in $\cS_{\text{Vec}}$ simply permutes all the received bits uniformly at random. The job of the analyzer $\cA_{\text{Vec}}$ is to add the received bits for each entry, and remove the bias introduced due to encoding and binomial noise. This is possible since bits are already labeled entry-wise when leaving $\cR_{\text{Vec}}$. The constants $g,b,p$ are left as tunable parameters of $\cP_{\text{Vec}}$, and need to be set properly depending on the desired level of privacy. The detailed implementation of this scheme is deferred to Appendix~\ref{sec:app_vec}.  The following theorem states the performance guarantees of Algorithm~\ref{alg:BOFUL} instantiated with $\cP_{\text{Vec}}$.

\begin{theorem}[Performance under vector sum]
\label{thm:vec-main}
Fix batch size $B \!\in\! [T]$, privacy budgets $\epsilon \!\in\! (0,15]$, $\delta \!\in\! (0,1/2)$.
Then, Algorithm~\ref{alg:BOFUL} instantiated with $\cP_{\text{Vec}}$ with parameters $p \!=\! 1/4$, $g \!=\! \max\{2\sqrt{B}, d, 4\}$ and $b \!=\! \frac{C\cdot g^2\cdot \log^2\left(d^2/\delta\right)}{\epsilon^2B}$ is $(\epsilon,\delta)$-SDP, where $C >> 1$ is some sufficiently large constant. 
% there are choices of parameters $g,b \!\in\! \Nat$ and $p \!\in\! (0,1/2)$ depending on $B,\epsilon,\delta$ and feature dimension $d$ such that Algorithm~\ref{alg:BOFUL} instantiated with $\cP_{\text{Vec}}$ is $(\epsilon,\delta)$-SDP. 
 %  let $p = 1/4$,
% \begin{align*}
%     g = \max\{2\sqrt{B}, d, 4\},  b = \frac{24\cdot 10^4\cdot g^2\cdot \left(\log\left(\frac{4\cdot(d^2+1)}{\delta}\right)\right)^2}{\epsilon^2B}.
% \end{align*}
Furthermore, for any $\alpha \!\in\! (0,1]$, setting $\lambda \!=\! \Theta\!\left(\! \frac{\log(d^2/\delta)\sqrt{T}}{\epsilon\sqrt{B}} (\!\sqrt{d}\!+\! \sqrt{\log(T/B\alpha)}\!\right)$, it enjoys the regret 
\begin{align*}
     \text{Reg}(T) \!=\! O\!\left(\!\!dB\log T \!+\! \frac{\log^{1/2}(d^2/\delta)}{\epsilon^{1/2}B^{1/4}} d^{3/4} T^{3/4}\log^2\!(T\!/\alpha)\!\!\right)\!,
\end{align*}
with probability at least $1-\alpha$.
\end{theorem}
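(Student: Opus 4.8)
The plan is to prove the two assertions -- $(\epsilon,\delta)$-SDP and the high-probability regret bound -- separately, instantiating the generic structure of Algorithm~\ref{alg:BOFUL} with the noise profile of $\cP_{\text{Vec}}$ (throughout I write $\phi_t$ for $\phi(c_t,a_t)$). For privacy, first observe that under Assumption~\ref{ass:bounded} every user's contribution $\phi_t y_t$ lies in the unit $\ell_2$-ball of $\Real^d$ and the vectorization of $\phi_t\phi_t^\top$ lies in the unit $\ell_2$-ball of $\Real^{d^2}$ (its norm is $\norm{\phi_t}_2^2\le1$). I would then invoke the DP guarantee of the bounded-$\ell_2$-norm vector-summation shuffle protocol of~\cite{cheu2021shuffle}: with $p=1/4$, $g=\max\{2\sqrt B,d,4\}$ and $b=\Theta(g^2\log^2(d^2/\delta)/(\epsilon^2B))$ -- the constant $C$ and the ranges $\epsilon\le15$, $\delta<1/2$ being exactly the prerequisites of that guarantee -- one batch of $B$ users' encode-plus-binomial-noise messages, once shuffled, is $(\epsilon/2,\delta/2)$-DP for the $d$-dimensional vectors and $(\epsilon/2,\delta/2)$-DP for the $d^2$-dimensional matrices; by basic composition the single-batch mechanism is $(\epsilon,\delta)$-DP. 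The remaining and more subtle step is to pass to the $M=T/B$-batch composite $\cM_{\cP_{\text{Vec}}}$: since every unique user appears in exactly one batch, two neighbouring datasets differ inside a single batch $m^\star$, the shuffled outputs of the earlier batches are identically distributed and independent of the altered record, and those of the later batches (together with every action played there) form a randomized post-processing of batch $m^\star$'s shuffled output and of data independent of that user. Hence $(\epsilon,\delta)$-DP of the $m^\star$-th single-batch mechanism already yields $(\epsilon,\delta)$-DP of $\cM_{\cP_{\text{Vec}}}$ -- with no advanced-composition loss across batches, which is why $b$ carries no dependence on $T$ -- giving $(\epsilon,\delta)$-SDP.

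For the regret I would first expose the injected noise: the analyzer's debiasing yields $\widetilde{V}_m=\sum_{t\in\text{batch }m}\phi_t\phi_t^\top+E_m$ and $\widetilde{u}_m=\sum_{t\in\text{batch }m}\phi_t y_t+e_m$ with $E_m$ (symmetrized) and $e_m$ zero-mean; coordinatewise each is a sum over the $B$ users of independent bounded terms -- the rounding error (per-user variance $O(1/g^2)=O(1/B)$, using $g\ge2\sqrt B$) plus the centered binomial noise (per-user variance $O(b/g^2)=O(\log^2(d^2/\delta)/(\epsilon^2B))$). Writing $H_m=\sum_{m'\le m}E_{m'}$, $h_m=\sum_{m'\le m}e_{m'}$ and $\bar V_m=\lambda I_d+\sum_{t\le t_m}\phi_t\phi_t^\top$, so that $V_m=\bar V_m+H_m$ and $u_m=\sum_{t\le t_m}\phi_t y_t+h_m$, a concentration bound for sums of independent bounded (hence Gaussian-tailed) random matrices and vectors, with a union bound over the $M$ batches, gives that with probability at least $1-\alpha/2$, simultaneously for all $m$, both $\norm{H_m}_{\mathrm{op}}$ and $\norm{h_m}_2$ are $O\big(\tfrac{\log(d^2/\delta)\sqrt T}{\epsilon\sqrt B}(\sqrt d+\sqrt{\log(T/(B\alpha))})\big)$ -- which is exactly why $\lambda$ is chosen of this order, so that $\norm{H_m}_{\mathrm{op}}\le\lambda/2$ and hence $\tfrac12\bar V_m\preceq V_m\preceq\tfrac32\bar V_m$ for all $m$. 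On this event I would validate the confidence ellipsoids: writing $\hat\theta_m-\theta^*=V_m^{-1}\big(\sum_{t\le t_m}\phi_t\eta_t+h_m-(\lambda I_d+H_m)\theta^*\big)$ with $\eta_t=y_t-\inner{\theta^*}{\phi_t}\in[-1,1]$, bounding the martingale term by the self-normalized inequality of~\cite{abbasi2011improved} together with $V_m^{-1}\preceq2\bar V_m^{-1}$, and bounding the last two terms by $\norm{\cdot}_2/\sqrt{\lambda_{\min}(V_m)}\le\norm{\cdot}_2/\sqrt{\lambda/2}=O(\sqrt\lambda)$ (using $\norm{h_m}_2=O(\lambda)$ and $\norm{\theta^*}_2\le1$), one gets $\theta^*\in\cE_m$ for all $m$ with the stated $\beta_m=O\big(\sqrt{2\log(2/\alpha)+d\log(1+t_m/(d\lambda))}+\sqrt\lambda\big)$.

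Conditioned on $\theta^*\in\cE_{m-1}$ for all $m$, optimism gives the usual bound $r_t\le2\beta_{m-1}\norm{\phi_t}_{V_{m-1}^{-1}}$ for $t$ in batch $m$, and $r_t\le2$ always. I would then pass from the stale noisy matrix $V_{m-1}$ to the noiseless shape matrix $\bar V_{m-1}$ via $\norm{\phi_t}_{V_{m-1}^{-1}}^2\le2\norm{\phi_t}_{\bar V_{m-1}^{-1}}^2$; because $\bar V_0\preceq\bar V_1\preceq\cdots\preceq\bar V_M$ is monotone, the batched elliptical-potential argument then applies cleanly: with $\log(\det\bar V_M/\det\bar V_0)\le d\log(1+T/(d\lambda))=O(d\log T)$, at most $O(d\log T)$ batches are ``unstable'' ($\det\bar V_m>2\det\bar V_{m-1}$) and contribute $O(dB\log T)$ via $r_t\le2$, while for the ``stable'' batches $\sum_{t\in m}\norm{\phi_t}_{\bar V_{m-1}^{-1}}^2\le\det\bar V_m/\det\bar V_{m-1}-1$ telescopes to $O(d\log T)$; two applications of Cauchy--Schwarz ($\sum_{t\in m}\norm{\phi_t}_{V_{m-1}^{-1}}\le\sqrt{B\sum_{t\in m}\norm{\phi_t}_{V_{m-1}^{-1}}^2}$, then over the batches) yield $\sum_{\text{stable }m}\sum_{t\in m}\norm{\phi_t}_{V_{m-1}^{-1}}=O(\sqrt{dT\log T})$. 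Combining, $\text{Reg}(T)=O(dB\log T)+2\max_m\beta_m\cdot O(\sqrt{dT\log T})$, and since the chosen $\lambda$ makes $\beta_m=O(\sqrt\lambda\cdot\mathrm{polylog})$, we obtain $\text{Reg}(T)=O\big(dB\log T+\tfrac{\log^{1/2}(d^2/\delta)}{\epsilon^{1/2}B^{1/4}}d^{3/4}T^{3/4}\log^2(T/\alpha)\big)$ (the remaining non-private $\widetilde{O}(d\sqrt T)$ contribution being of lower order).

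The \emph{main obstacle} is the handling of the injected binomial-plus-rounding noise on both fronts. On the privacy side it is making rigorous that adaptivity across batches is free (the post-processing argument above). On the utility side it is establishing a matrix-concentration bound for the accumulated noise $H_m$ tight enough to exhibit the $\sqrt d$ (rather than $d$) dependence: this is what pins down $\lambda=\Theta\big(\tfrac{\log(d^2/\delta)\sqrt T}{\epsilon\sqrt B}(\sqrt d+\sqrt{\log(T/B\alpha)})\big)$ and hence, through $\beta_m\asymp\sqrt\lambda$ and the elliptical-potential sum, the final $T^{3/4}$ rate -- all while carrying the batched potential telescoping through the multiplicative perturbation $V_m\asymp\bar V_m$ without leaking spurious $B$ or $\exp(d)$ factors.
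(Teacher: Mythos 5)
Your proposal is correct and follows essentially the same route as the paper: the privacy claim reduces to the single-batch guarantee of the bounded-norm vector-summation protocol of \cite{cheu2021shuffle} (the paper invokes its composite guarantee over all $d^2+d$ coordinates directly, where you split $\epsilon/2,\delta/2$ between the vector and matrix randomizers, and the unique-users/post-processing observation is exactly the paper's justification for passing to $M$ batches), and the regret claim is obtained by exposing the rounding-plus-centered-binomial noise as entrywise independent sub-Gaussian with per-batch variance $O(\log^2(d^2/\delta)/\epsilon^2)$, concentrating the accumulated noise in operator norm to fix $\lambda$, and running the same batched elliptical-potential argument on the noiseless design matrix with an $O(d\log T)$ count of unstable batches. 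No substantive differences or gaps.
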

\begin{remark}
Similar to Corollary~\ref{cor:best_reg_amp}, an $\widetilde O\left(\frac{T^{3/5}}{\sqrt{\epsilon}}\right)$ regret can also be achieved in this case by setting $B\!=\!O(T^{3/5})$, but the dependence on $\delta$ is now: $\log^{1/2}(d^2/\delta)$ as compared to $\log^{1/2}(T/\delta)$.
Moreover, in contrast to Theorem~\ref{thm:amp-main}, the guarantees hold for a wide range of $\epsilon$, making $\cP_{\text{Vec}}$ better suitable for practical purposes ~\cite{apple}. Finally, as before, if $B$ also depends on privacy parameters, the dependence on $\epsilon, \delta$ can be improved, see Corollary~\ref{cor:vec_util}.
\end{remark}

\begin{remark}
$\cP_{\text{Vec}}$ can also be regarded as privacy amplification of Binomial mechanism (rather than Gaussian mechanism in $\cP_{\text{Amp}}$), which is the reason that it also offers a certain degree, $O(\epsilon\sqrt{B},\delta)$, to be precise, of LDP guarantee.
\end{remark}

\begin{remark}
Both shuffle protocols, $\cP_{\text{Amp}}$ and $\cP_{\text{Vec}}$, in fact, can be tuned to satisfy $(\epsilon,\delta)$-LDP by sacrificing on regret performance. See Corollaries \ref{cor:amp_priv} and \ref{cor:vec_priv} for details.
\end{remark}

% Setting $\sigma_1 = \sigma_2 = \frac{4\sqrt{2\log(2.5/\delta_0)}}{\epsilon_0}$, we can show that $\cP_{\text{Amp}}$ is $(\epsilon_0,\delta_0)$-LDP. Further suppose $B = O(T^{3/4})$, then 
% $\cP_{\text{Amp}}$ achieves LDP regret $ \tilde{O}\left(T^{3/4}\epsilon_0^{-1/2}\right).$ Simultaneously, $\cP_{\text{Amp}}$ achieves $O(\epsilon_0/\sqrt{B},\delta_0 B)$-SDP and JDP. Similarly, Setting $g,b,p$ depending on $\epsilon_0,\delta_0$, we can show that $\cP_{\text{Vec}}$ is $(\epsilon_0,\delta_0)$-LDP. Further suppose $B = O(T^{3/4})$, then 
% $\cP_{\text{Vec}}$ achieves regret $ \tilde{O}\left(T^{3/4}\epsilon_0^{-1/2}\right).$ Simultaneously, $\cP_{\text{Vec}}$ achieves $O(\epsilon_0/\sqrt{B},\delta_0)$-SDP and JDP

\subsection{Key Techniques: Overview}

In this section, we provide a generic template of regret bound for linear contextual bandits under the shuffle model of privacy. To this end, we need following notations to discuss the effect of noise added by shuffle protocol, in the learning process. Let $n_m \!=\! \widetilde{u}_m \!-\! \sum_{t = t_{m-1}+1}^{t_m} \phi(c_t, a_t)y_t$ and $N_m \!=\! \widetilde{V}_m \!-\! \sum_{t = t_{m-1}+1}^{t_m}\phi(c_t,a_t)\phi(c_t,a_t)^{\top}$ denote the total noise added during batch $m$ in the feature-reward vector, and in the Gram-matrix, respectively. Furthermore, assume that there exist constants $\widetilde{\sigma}_1$ and $\widetilde{\sigma}_2$ such that for each batch $m$, (i) $\sum_{m'=1}^m n_{m'}$ is a random vector whose entries are independent, mean zero, sub-Gaussian with variance at most $\widetilde{\sigma}_1^2$, and
(ii) $\sum_{m'=1}^m N_{m'}$ is a random symmetric sub-Gaussian matrix whose entries on and above the diagonal are independent with variance at most $\widetilde{\sigma}_2^2$. Let $\sigma^2 \!=\! \max\lbrace \widetilde \sigma_1^2,\widetilde \sigma_2^2\rbrace$.
Then, we have the following result.

% We first present a generic regret bound of Algorithm~\ref{alg:BOFUL} when the added private noise is sub-Gaussian, which is summarized in the following assumption.

% \begin{theorem}[Informal]
% \label{cor:subG-main}
% Let Assumption~\ref{ass:bounded} and Assumption~\ref{ass:subG-main}  hold and $\sigma = \max\{\widetilde{\sigma}_1, \widetilde{\sigma}_2\}$.
% % where and $\beta_m = O\left(\sqrt{2\log\left(\frac{1}{\alpha}\right) + d\log\left(1+\frac{t_m}{d\lambda}\right)} + \sqrt{\lambda}\right)$.
% Then, for any $\alpha \in (0,1]$, let  $\lambda = \Theta(\max\{\sigma (\sqrt{d}+ \sqrt{\log(M/\alpha)}),1\}$, with probability at least $1-2\alpha$, the regret of Algorithm~\ref{alg:BOFUL} satisfies 
% \begin{align*}
%     R(T) = &O\left({dB}\log T + d\sqrt{T}\log(T/\alpha)\right) \\
%     &+ O\left(\sqrt{\sigma T}d^{3/4}\log T \log (T/\alpha)\right).
% \end{align*}
% \end{theorem}

% $\lambda \approx \Theta(\max\{\sigma (\sqrt{d}+ \sqrt{\log(M/\alpha)}),1\}$

\begin{lemma}[Informal]
\label{lem:subG-main}
With the choice of $\lambda \approx \sigma(\sqrt{d}+\sqrt{\log(T/(B\alpha))})$, the regret of Algorithm~\ref{alg:BOFUL} satisfies
\begin{align*}
    \text{Reg}(T) \!=\! \widetilde{O}\!\left(\!{dB}\!+\! d\sqrt{T}\!+\!\sqrt{\sigma T}d^{3/4}\!\right)\,\text{with high probability.}
\end{align*}
\end{lemma}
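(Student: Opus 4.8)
The plan is to follow the standard optimism-based regret analysis for LinUCB, but carefully tracking how the injected noise enters both the confidence ellipsoid and the regret decomposition, and additionally accounting for the delay induced by batching. First I would establish the validity of the confidence ellipsoids: I claim that with the stated choice $\lambda \approx \sigma(\sqrt{d}+\sqrt{\log(T/(B\alpha))})$, the event $\mathcal{E} = \{\theta^* \in \mathcal{E}_m \text{ for all } m\}$ holds with probability at least $1-\alpha$, where $\mathcal{E}_m$ is the ellipsoid with centre $\hat\theta_m = V_m^{-1} u_m$, shape $V_m = \lambda I_d + \sum_{m'\le m}\widetilde V_{m'}$, and radius $\beta_m$. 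The key point is that $V_m = \lambda I_d + G_m + \sum_{m'\le m} N_{m'}$ and $u_m = b_m + \sum_{m'\le m} n_{m'}$ where $G_m = \sum_{t\le t_m}\phi_t\phi_t^\top$ and $b_m = \sum_{t\le t_m}\phi_t y_t$ are the true (non-private) statistics. Writing $\hat\theta_m - \theta^* = V_m^{-1}(\sum_t \phi_t \eta_t) + V_m^{-1}(\sum_{m'} n_{m'}) - V_m^{-1}(\sum_{m'} N_{m'})\theta^* - \lambda V_m^{-1}\theta^*$, I would bound each term in the $V_m$-weighted norm: the martingale/reward-noise term by the self-normalized concentration bound of \cite{abbasi2011improved}, the additive-noise vector term $n$ using sub-Gaussianity with variance $\widetilde\sigma_1^2$ and the fact that $\lambda \gtrsim \widetilde\sigma_1\sqrt{d + \log(\cdots)}$ forces $V_m \succeq \lambda I_d$ (so that $\|V_m^{-1}(\sum n_{m'})\|_{V_m} \le \|\sum n_{m'}\|_2/\sqrt{\lambda} = O(1)$), the matrix-noise term by a spectral-norm bound on a sub-Gaussian symmetric matrix ($\|\sum N_{m'}\|_{\text{op}} = \widetilde O(\widetilde\sigma_2\sqrt{d})$, again $O(1)\cdot\sqrt{\lambda}/\sqrt{\lambda}$ after dividing) times $\|\theta^*\|_2 \le 1$, and the regularization bias term by $\sqrt{\lambda}$. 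Summing gives $\beta_m = \widetilde O(\sqrt{d\log(t_m/\lambda)} + \sqrt{\lambda})$, matching the stated radius.

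Next I would carry out the regret decomposition. On the good event $\mathcal{E}$, optimism gives the usual per-round bound: the instantaneous regret at round $t$ in batch $m$ (so the agent uses estimate $\hat\theta_{m-1}, V_{m-1}$) is at most $2\beta_{m-1}\|\phi_t\|_{V_{m-1}^{-1}}$. Since batching means we play with a stale matrix $V_{m-1}$ rather than the running $V$, I need the standard ``delayed'' argument: $\|\phi_t\|_{V_{m-1}^{-1}}^2 \le \|\phi_t\|_{\bar V_{t}^{-1}}^2 \cdot (\det \bar V_t / \det V_{m-1})$-type control, or more simply the observation (as in batched linear bandits, e.g. \cite{ren2020batched}) that within a batch the determinant ratio $\det(V_m)/\det(V_{m-1})$ can be bounded, and the total number of rounds in ``bad'' batches where the ratio is large is at most $O(dB\log T)$ — this is the source of the $dB\log T$ term. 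For the remaining rounds, $\|\phi_t\|_{V_{m-1}^{-1}}$ is comparable to $\|\phi_t\|_{\bar V_t^{-1}}$ up to a constant, so $\sum_t 2\beta_{m-1}\|\phi_t\|_{V_{m-1}^{-1}} \lesssim \beta_M \sqrt{T}\sqrt{\sum_t \|\phi_t\|_{\bar V_t^{-1}}^2} = \beta_M \sqrt{T}\cdot\widetilde O(\sqrt{d})$ by Cauchy–Schwarz and the elliptical potential lemma. Substituting $\beta_M = \widetilde O(\sqrt{d} + \sqrt{\lambda})$ and $\lambda = \Theta(\sigma\sqrt{d + \log(T/(B\alpha))})$ yields $\beta_M = \widetilde O(\sqrt{d} + \sigma^{1/4} d^{1/4})$, so the main term is $\widetilde O(d\sqrt{T} + \sqrt{\sigma T}\, d^{3/4})$, and combining with the $dB\log T$ batching overhead gives exactly the claimed bound.

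The main obstacle I expect is twofold. First, the elliptical potential / determinant argument must be reconciled with the fact that $V_m$ is the \emph{noisy} Gram matrix, not $\lambda I + \sum \phi_t\phi_t^\top$; I would handle this by noting $V_m \succeq \lambda I_d$ deterministically (since $\lambda I_d$ is added and, on $\mathcal{E}$ or with high probability, the noise matrix's negative eigenvalues are dominated by $\lambda$, which is why $\lambda$ is chosen $\gtrsim \sigma\sqrt d$), and then comparing $V_m$ to the ideal regularized Gram matrix $\bar V_{t_m} := \lambda I_d + \sum_{t\le t_m}\phi_t\phi_t^\top$ via the fact that their difference has operator norm $\widetilde O(\sigma\sqrt d) \le \lambda$, so $V_m^{-1} \preceq 2\bar V_{t_m}^{-1}$ up to constants — this lets me run the classical potential lemma on $\bar V$ and pay only constant factors. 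Second, getting the batching overhead to be $O(dB\log T)$ rather than $O(dB\log T \cdot \text{something})$ requires the clean observation that the determinant of the regularized Gram matrix increases by a factor of at least $2$ at most $O(d\log(T/(d\lambda)))$ times over the whole horizon, and each such ``jump'' is confined to a single batch of $B$ rounds, so at most $O(dB\log T)$ rounds are ``expensive''; I would make this precise and otherwise bound $\|\phi_t\|_{V_{m-1}^{-1}} \le \sqrt 2 \|\phi_t\|_{\bar V_t^{-1}}$ on the cheap rounds. Everything else — the sub-Gaussian matrix operator-norm bound, the self-normalized martingale bound, Cauchy–Schwarz, and the final arithmetic of plugging in $\lambda$ — is routine.
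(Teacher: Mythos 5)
Your proposal is correct and follows essentially the same route as the paper's proof: the same decomposition of $\hat\theta_m-\theta^*$ into martingale, additive-noise, matrix-noise and regularization terms with the noisy Gram matrix lower-bounded by the non-noisy regularized one (the paper uses $V_m\succeq G_m+\lambda_{\min}I$ with $\lambda_{\min}=\Theta(\lambda)$), and the same determinant-jump argument on the \emph{non-noisy} matrix confining the batching overhead to $O(dB\log T)$ ``expensive'' rounds. Two harmless slips: the additive-noise term $\|h_m\|_{V_m^{-1}}$ is $\widetilde O(\sqrt{\lambda})$, not $O(1)$ (it is absorbed into $\beta_m$ anyway), and $\sqrt{\lambda}=\widetilde O(\sigma^{1/2}d^{1/4})$ rather than $\sigma^{1/4}d^{1/4}$, which is what your final $\sqrt{\sigma T}\,d^{3/4}$ term actually uses.
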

With the above result, one only needs to determine the noise variance $\sigma^2$ under different privacy protocols. We illustrate this with the shuffle protocols introduced in previous sections. First, note that since we assume unique users, Algorithm~\ref{alg:BOFUL} is SDP if each batch is SDP. Now, for the LDP amplification protocol $\cP_{\text{Amp}}$, in order to guarantee SDP for each batch with sufficiently small privacy loss $\epsilon$, it suffices to work with an LDP mechanism with loss $\epsilon\sqrt{B}$ by virtue of  amplification.\footnote{We provide intuition without worrying about the details related to $\delta$-dependent terms. Refer to Appendix~\ref{app:unified_regret} for formal proofs.} We ensure this by choosing Gaussian mechanism with noise variance $O(1/(\epsilon^2B))$. Hence, the total noise variance added by $\cP_{\text{Amp}}$ is $\sigma^2 \approx O(\frac{T}{\epsilon^2 B})$. Thus, by Lemma~\ref{lem:subG-main}, we obtain the result in Theorem~\ref{thm:amp-main}. Similarly, for the vector sum protocol $\cP_{\text{Vec}}$, we ensure $\cP_{\text{Vec}}$ to be SDP by properly setting parameters $g, b, p$. Moreover, the analyzer's outputs are unbiased estimates of the sum of the non-private vectors (matrices) within that batch, and the entry-wise private noise is sub-Gaussian with variance of $O(\frac{1}{\epsilon^2})$. 
Thus, the total noise variance added by $\cP_{\text{Vec}}$ is $\sigma^2 \approx O(\frac{T}{\epsilon^2B})$, and hence, by Lemma~\ref{lem:subG-main}, we have the result in Theorem~\ref{thm:vec-main}.

% use its property that with a proper choice of $g$ and $b$, to guarantee $(\epsilon,\delta)$-SDP, its added noise at each batch is sub-Gaussian with variance $O(1/\epsilon^2)$. Thus, the total noise $\sigma^2 \approx O(\frac{T}{\epsilon^2B})$, i.e., sum of $M = T/B$ i.i.d noise. By Theorem~\ref{ass:subG-main}, we have the same regret as in~\eqref{eq:regret_intuition}. 
% \begin{align}
% \label{eq:regret_intuition}
%     R(T) = \tilde{O}(dB + d^{3/4}T^{3/4}B^{-1/4}\epsilon^{-1/2}).
% \end{align}

% \begin{remark}
% For the standard local model in~\cite{zheng2020locally}, we have $B = 1$ and $\sigma^2 \approx O(T\frac{1}{\epsilon^2})$ since there are total $O(T)$ Gaussian noise. On the other hand, in the standard central model~\cite{shariff2018differentially}, $B = 1$ and $\sigma^2 \approx 
% O(\frac{1}{\epsilon^2}\log T)$ since there are at most $\log(T)$ added Gaussian noise due to the use of tree-based algorithm
% \end{remark}

\begin{remark}
Lemma~\ref{lem:subG-main}, in fact, can serve as a general template of regret for private linear contextual bandit algorithms. For example, for the local model~\cite{zheng2020locally}, $B \!=\! 1$ and $\sigma^2 \!\approx\! \frac{T}{\epsilon^2}$, yielding $\widetilde O\left(\frac{T^{3/4}}{\sqrt{\epsilon}}\right)$ regret. Similarly, for the central model~\cite{shariff2018differentially}, $B \!=\! 1$ and $\sigma^2 \!\approx\! 
\frac{\log T}{\epsilon^2}$, which yields $\widetilde O\left(\frac{T^{1/2}}{\sqrt{\epsilon}}\right)$ regret.
\end{remark}

\section{Regret Performance under Returning Users}\label{sec:return}

Similar to existing work on differentially private bandits, in previous sections, we have assumed that all participating users are unique, i.e., each user participates in the protocol only at one round. A more practical scenario is that an user can contribute with her data at multiple rounds. For example, consider the mobile medical application described in the introduction. The cloud server can collect one particular user's data during multiple batches to track the effectiveness of its treatment plan over a period, and hence, use same user's data multiple times to update its recommendation algorithm. Motivated by this, we provide privacy and regret guarantees of Algorithm~\ref{alg:BOFUL} under the setting of \emph{returning users} in linear contextual bandits. We first define the setting of returning users that we consider in this section, and then state the performance guarantee for Algorithm~\ref{alg:BOFUL}.

% \xingyu{I now tend to feel that this is kind of misleading in our previous notations and wording. That is we first fix $M_0$ and then tune $B$ (say if $M_0=1$, we choose $B = T$? even contradicts with our previous unique users in some sense).  This is not used in practice. What we used in practice is that we first fix batch size $B$ in advance, and then we chosse how to sample users as decribed in the above example. 
% % Under our assumption of returning users, we have the following guarantee. 
% We are not trying to create batch based on how frequently users contribute. Rather, it is the server to decide how to sample users across different batches.}

% \xingyu{I create the current one in new\_return.tex file and  keep the old one in old\_return.tex unchanged} 

\begin{assumption}[Returning Users]
\label{def:return_def}
For a given time horizon $T \in \Nat$ and batch size $B \in [T]$, any user can participate in \emph{all} $M = T/B$ batches, but within each batch $m \in [M]$, she only contributes once.
\end{assumption}
In addition to the above motivating example, this assumption also captures many practical adaptive learning scenarios such as clinical trials and product recommendations, in which each trial (batch) involves a group of unique people, but the same person may participate in multiple trials~\cite{ren2020batched,schwartz2017customer}.

% \sayak{This does not look like a definition.}\xingyu{I prefer to call it assumption actually.}

% \begin{assumption}[Returning Users]
% \label{ass:return_ass}
% We assume that each user can participate in multiple batches, but within each batch she only contributes once. 
% \end{assumption}
% \sayak{Can we state this inline--not making any assumptions. Just say that each user are now allowed to participate in multiple batches.}\xingyu{I am fine with this:) but we need to emphasize that the user cannot appear twice in the same batch.}
% \sayak{Yes} \xingyu{just go ahead:)}\sayak{I will do this section and exps tomoroo} 
% \xingyu{sure, I will finish the polish for section 3, i.e, the appendix}great \xingyu{I like the current style of section 3.}\sayak{Yes, it looks good. Section 4 is very important for us. Thanks.}\xingyu{while you edit it, you can also have a double check:) haha. I give the intuition for user-level JDP, which you can easily understand. For SDP, it directly follows from composition, no fancy things.}great
% \sayak{Some equation refs are not working I see}
% \xingyu{I will fix them. You just edit your comments. Finally, I will go over to make sure there no warnings of latex.}okay

\begin{theorem}[Performance guarantees (informal)]
\label{thm:return-main}
Under Assumption~\ref{def:return_def}, we obtain the following results for $\cP_{\text{Amp}}$ and $\cP_{\text{Vec}}$, respectively. 
% \xingyu{we get something like:}
% \begin{align*}
%     R(T) &= dB + \frac{M_0^{1/4}T^{3/4}d^{3/4}}{\epsilon^{1/2} B^{1/4}}\\
%     & = d\frac{T}{M} + d^{3/4}\frac{\sqrt{M_0 T}}{\epsilon^{1/2}} \left(\frac{M}{M_0}\right)^{1/4}
% \end{align*}
% \xingyu{what do you think, Sayak? This one seems not that perfect.}\sayak{for $M_0=1$, we recover Thm 3.2. If $M_0 =M$, we match JDP, Yes, same order. Basically, setting $B=T/M_0$, we match JDP. call?ye}\xingyu{but if $M_0 = 1$, this not true. we cannot set $B$ according to $M_0$. $B$ should not depend on $M_0$. sure zoom} \xingyu{yes. as expected. or $M_0$ same order as $M$, we are fine, right? Not has to be equal, same order is fine? just some constant. Ok, let's use this results?}
% % \xingyu{it seems that this is not clear. In some sense, only in the worst case, it matches central model, i.e., $M_0 = T/B$?}

(i) For any $\epsilon \!\leq\! \frac{2}{B}\log(2/\delta)\sqrt{2T}$ and $\delta \!\in\! (0,1]$, Algorithm~\ref{alg:BOFUL} instantiated using $\cP_{\text{Amp}}$ with noise levels $\sigma_1 \!=\! \sigma_2 \!=\! \frac{16\log(2/\delta)\sqrt{T(\log(5T/\delta))}}{\epsilon{B}}$ is $O(\epsilon,\delta)$-SDP, and enjoys, with high-probability, the regret bound
\begin{align*}
     \text{Reg}(T) = \widetilde{O}\left(\frac{dT}{M} + \sqrt{\frac{MT}{\epsilon}}d^{3/4} \log^{3/4} (T/\delta)\right).
\end{align*}
(ii) For any $\epsilon \!\leq\! 15$, $\delta \!\in\! (0,1/2)$, there exist choices of parameters $g,b \!\in\! \Nat$, $p \!\in\! (0,1/2)$ depending on $B,\epsilon,\delta$ such that Algorithm~\ref{alg:BOFUL} instantiated using $\cP_{\text{Vec}}$ is $(\epsilon,\delta)$-SDP, and, enjoys, with high probability, the regret bound
\begin{align*}
     \text{Reg}(T) \!=\! \widetilde{O}\left(\frac{dT}{M} \!+\!\sqrt{\frac{MT}{\epsilon}}d^{3/4} \log^{3/4} (d^2M/\delta)\right).
\end{align*}

% \begin{align*}
%      \text{Reg}(T) \!=\! \widetilde{O}\left(\frac{dT}{M} \!+\!\frac{\log^{3/4}(d^2M/\delta)}{\epsilon^{1/2}} d^{3/4}\!\sqrt{MT}\right).
% \end{align*}
% Let $B = O(d^{-1/6}\epsilon^{-1/3}T^{2/3}(\log(Td^2/\delta))^{1/2})$, $g = \max\{2\sqrt{B}, d, 4\}$, $p=1/4$ and 
% \begin{align*}
%     b = \frac{ 10^7\cdot \log(2/\delta)\cdot g^2 \cdot T\cdot \left(\log\left(\frac{8\cdot T(d^2+1)}{B\delta}\right)\right)^2}{\epsilon^2B^2}.
% \end{align*}
\end{theorem}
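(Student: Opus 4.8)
The plan is to reduce both parts to the generic regret template of Lemma~\ref{lem:subG-main}, the only genuinely new ingredient being a privacy accounting that now composes over \emph{all} $M=T/B$ batches: under Assumption~\ref{def:return_def} a single user's data point influences the messages of every batch, albeit exactly one message per batch. First I would fix a user and two data streams differing only in that user's point. Within batch $m$ the shuffler output $\cS\circ\cR_m^{B}$ is a function of $B$ \emph{distinct} users' randomized messages (distinctness is precisely the ``contributes once per batch'' part of Assumption~\ref{def:return_def}), so the single-batch SDP analyses already underlying Theorems~\ref{thm:amp-main} and~\ref{thm:vec-main} apply verbatim and show that each batch mechanism $\cS\circ\cR_m^{B}$ is $(\epsilon_0,\delta_0)$-SDP for per-batch budgets $\epsilon_0,\delta_0$ we control through the noise parameters. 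Since a returning user changes exactly one data point in each of the $M$ batches, the composite mechanism $\cM_{\cP}$ is an adaptively composed sequence of $M$ mechanisms each $(\epsilon_0,\delta_0)$-DP with respect to that change --- adaptive composition being the right tool because, although the batch sizes are fixed, the randomizer of batch $m$ may depend on earlier shuffler outputs --- so the advanced composition theorem \cite{dwork2014algorithmic} yields $(\epsilon,\delta)$-SDP overall provided $\epsilon_0=\widetilde{O}\big(\epsilon/\sqrt{M\log(1/\delta)}\big)$ and $M\delta_0\le\delta/2$. This pins down the per-batch budget; note it is a factor $\sqrt{M\log(1/\delta)}$ smaller than in the unique-user setting, which is the source of the extra noise.

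Next I would calibrate the noise so that each batch is $(\epsilon_0,\delta_0)$-SDP. For $\cP_{\text{Amp}}$, by the shuffle amplification lemma of \cite{feldman2020hiding} it suffices to run the Gaussian local randomizer with local budget $\epsilon_L\asymp\epsilon_0\sqrt{B/\log(1/\delta_0)}$ --- admissible exactly in the stated range $\epsilon\le\tfrac{2}{B}\log(2/\delta)\sqrt{2T}$, the translation of the amplification precondition $\epsilon_0\sqrt{B}\lesssim 1$ --- hence with per-coordinate standard deviation $\sigma_1=\sigma_2\asymp\sqrt{\log(1/\delta_0)}/\epsilon_L\asymp\log(1/\delta_0)/(\epsilon_0\sqrt B)$; substituting $\epsilon_0\asymp\epsilon/\sqrt{M\log(1/\delta)}$, $\delta_0\asymp\delta/M$ and $B=T/M$ recovers the claimed $\sigma_1=\sigma_2=\Theta\big(\log(2/\delta)\sqrt{T\log(5T/\delta)}\,/(\epsilon B)\big)$. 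For $\cP_{\text{Vec}}$, keeping $p=1/4$ and $g=\max\{2\sqrt B,d,4\}$ as in Theorem~\ref{thm:vec-main} and enlarging the binomial parameter to $b\asymp g^2\log^2(d^2/\delta_0)/(\epsilon_0^2B)$ --- the asserted dependence on $B,\epsilon,\delta$ --- makes each batch $(\epsilon_0,\delta_0)$-SDP by the vector-sum analysis of \cite{cheu2021shuffle}, while the analyzer output stays an unbiased estimator of the within-batch sum with entry-wise sub-Gaussian noise of variance $O(\log^2(d^2/\delta_0)/\epsilon_0^2)$.

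It then remains to bound the effective cumulative noise and invoke Lemma~\ref{lem:subG-main}. In either protocol the noise injected in batch $m$ into $\widetilde u_m$ (resp.\ $\widetilde V_m$) is a sum of $B$ independent, mean-zero, sub-Gaussian contributions --- one per user --- with per-entry variance $O(\sigma_1^2)$ for $\cP_{\text{Amp}}$ (resp.\ $O(\log^2(d^2/\delta_0)/\epsilon_0^2)$ for $\cP_{\text{Vec}}$); the contributions are independent \emph{across} batches as well (fresh randomizer randomness, disjoint users within a batch), and the Gram-matrix noise is symmetric with independent entries on and above the diagonal. Hence $\sum_{m'=1}^{m}n_{m'}$ and $\sum_{m'=1}^{m}N_{m'}$ are sub-Gaussian with per-entry variance at most $\widetilde\sigma^2$, where $mB\le T$ gives $\widetilde\sigma^2=\widetilde\Theta(T\sigma_1^2)=\widetilde\Theta(M^2/\epsilon^2)$ with the appropriate $\delta$-logarithms --- $\log^2(2/\delta)\log(5T/\delta)$ for $\cP_{\text{Amp}}$ and $\log(1/\delta)\log^2(d^2M/\delta)$ for $\cP_{\text{Vec}}$. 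Plugging $\sigma=\widetilde\sigma$ and $B=T/M$ into Lemma~\ref{lem:subG-main}, together with the matching $\lambda\approx\sigma(\sqrt d+\sqrt{\log(T/(B\alpha))})$, gives
\begin{align*}
\text{Reg}(T)=\widetilde{O}\big(dB+d\sqrt{T}+\sqrt{\sigma T}\,d^{3/4}\big)=\widetilde{O}\Big(\tfrac{dT}{M}+\sqrt{\tfrac{MT}{\epsilon}}\,d^{3/4}\log^{3/4}(T/\delta)\Big)
\end{align*}
for $\cP_{\text{Amp}}$ (the $d\sqrt T$ term being lower order in the relevant regime), and the same with $\log^{3/4}(d^2M/\delta)$ replacing $\log^{3/4}(T/\delta)$ for $\cP_{\text{Vec}}$, which are the two asserted bounds.

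The step I expect to be the main obstacle is the privacy accounting: one must carefully justify that the per-batch SDP guarantees compose adaptively over the $M$ batches even though a returning user's messages in different batches are coupled through $\theta^*$ and the algorithm-chosen (if fixed-size) batch schedule, and then carry the resulting $\sqrt{M\log(1/\delta)}$ shrinkage of the per-batch budget consistently through both the amplification argument for $\cP_{\text{Amp}}$ (without violating the $\epsilon_0\sqrt B\lesssim 1$ precondition, which is what forces the stated range of $\epsilon$) and the binomial-noise calibration for $\cP_{\text{Vec}}$. Getting the $\delta$-dependent logarithmic factors to land exactly as stated is then the fiddly-but-routine remainder.
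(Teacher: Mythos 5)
Your proposal is correct and follows essentially the same route as the paper: establish per-batch SDP exactly as in the unique-user case, shrink the per-batch budget to $\epsilon_m=\Theta\bigl(\epsilon/\sqrt{M\log(1/\delta)}\bigr)$, $\delta_m=\Theta(\delta/M)$ via the advanced composition theorem over the $M$ batches, and then feed the correspondingly inflated noise variance $\sigma^2=\widetilde\Theta(M^2/\epsilon^2)$ into the generic regret template of Lemma~\ref{lem:subG-main}. Your noise calibrations, the translation of the amplification precondition into the stated range of $\epsilon$, and the resulting $\sqrt{MT/\epsilon}$ regret term all match the paper's Theorems~\ref{thm:amp_return} and~\ref{thm:vec_return}, which carry out precisely this substitution into Theorems~\ref{thm:amp} and~\ref{thm:vec}.
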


% \begin{theorem}[Performance guarantees (informal)]
% \label{thm:return-main}
% % Let Assumption~\ref{ass:bounded} and Assumption~\ref{ass:return_ass} hold. 
% We have the following results for $\cP_{\text{Amp}}$ and $\cP_{\text{Vec}}$, respectively. 

% (i) Fix any $B \!\in\! \Nat$, $\epsilon \!\leq\! \frac{2}{B}\sqrt{2T\log(2/\delta)}$ and $\delta \!\in\! (0,1]$. Then, Algorithm~\ref{alg:BOFUL} instantiated with $\cP_{\text{Amp}}$ with noise levels $\sigma_1 \!=\! \sigma_2 \!=\! \frac{16\sqrt{T\log(2/\delta)(\log(5T/\delta))}}{\epsilon{B}}$is $O(\epsilon,\delta)$-SDP, and enjoys, with high-probability, the regret bound
% \begin{align*}
%      \text{Reg}(T) = \widetilde{O}\left(dB\log T + \frac{\log^{1/2}(T/\delta)}{\epsilon^{1/2}B^{1/2}} d^{3/4} T\right)~.
% \end{align*}

% (ii) Fix any $B\!\in\! \Nat$, any $\epsilon \!\le\! 15$, and $\delta \!\in\! (0,1/2)$. There exist choices of parameters $g,b \!\in\! \Nat$ and $p \!\in\! (0,1/2)$ depending on $B,\epsilon,\delta$ such that Algorithm~\ref{alg:BOFUL} instantiated with $\cP_{\text{Vec}}$ is $(\epsilon,\delta)$-SDP, and, enjoys, with high probability, the regret
% \begin{align*}
%      \text{Reg}(T) = \widetilde{O}\left(dB\log T +\frac{\log^{3/4}(d^2T/(B\delta))}{\epsilon^{1/2}B^{1/2}} d^{3/4}T\right).
% \end{align*}
% % Let $B = O(d^{-1/6}\epsilon^{-1/3}T^{2/3}(\log(Td^2/\delta))^{1/2})$, $g = \max\{2\sqrt{B}, d, 4\}$, $p=1/4$ and 
% % \begin{align*}
% %     b = \frac{ 10^7\cdot \log(2/\delta)\cdot g^2 \cdot T\cdot \left(\log\left(\frac{8\cdot T(d^2+1)}{B\delta}\right)\right)^2}{\epsilon^2B^2}.
% % \end{align*}
% \end{theorem}

\begin{proof}[Proof sketch]
In contrast to Section~\ref{sec:shuffleModel} for unique users, where $(\epsilon,\delta)$-SDP guarantee for Algorithm~\ref{alg:BOFUL} can be established by showing each batch is $(\epsilon,\delta)$-SDP, we now need to guarantee that outputs of all the batches together have a total privacy loss of $(\epsilon,\delta)$.
This is due to the fact that now each batch can potentially operate on same set of users, and hence, one need to use advanced decomposition to calculate the total privacy loss. This leads to scaling up the noise variance by a multiplicative factor of $O(M)$ at each batch, which eventually leads to the above bound (the additional $M$ factor in $\delta$ also comes from advance composition).
\end{proof}
Interestingly, the privacy ($\epsilon,\delta$)-dependent term in above regret bounds match the one that can be achieved in the \emph{user-level} central trust model that handles returning users. Note that, since existing work in the central model of privacy (i.e., under JDP guarantee) assume unique users~\cite{shariff2018differentially}, we first generalize it to handle returning users. This can be viewed as the same form of generalization from \emph{event-level} DP to \emph{user-level} DP under continual observation, where the adjacent relation between two data streams changes from the flip of one single round to the flip of multiple rounds associated with a single user \cite{dwork2010differential}.\footnote{See Appendix~\ref{app:JDP} for formal definitions of event-level and user-level joint differential privacy (JDP).} As in standard notion of DP, one straightforward approach for converting event-level JDP to user-level JDP is to use group privacy~\cite{dwork2014algorithmic}. However, this black-box approach would blow up the terms dependent on $\delta$. To overcome this, we propose a simple modification of original (event-level) algorithm in~\cite{shariff2018differentially} so that it can handle returning users. In particular, user-level JDP can be achieved by scaling up the noise variance by a multiplicative factor of $M_0^2$, if any user participates in at most $M_0$ rounds. This follows from the fact that flipping one user now would change the $\ell_2$ sensitivity of the expanded binary-tree nodes from $O(\sqrt{\log T})$ to $O(M_0\sqrt{\log T})$. Note that we use $M_0$ to distinguish from the number of batches $M$ since there is no batch concept in standard central model. 
This modified version enjoys the following regret guarantee.

% \sayak{State what is the modification in short.}\xingyu{This is just stated in the theorem statement, i.e., reduce $\epsilon$ to $\epsilon/M$.}\sayak{Does that mean noise level is $\frac{M^2\log T}{\epsilon^2}$}\xingyu{yes. basically reduce $\epsilon$ in previous section by a factor of $M$. see the proof sketch below.}

% this modified algorithm attains nearly the same regret bound as in Theorem~\ref{thm:return-main}  under the same value of $B$, i.e., the same number of rounds one user can contribute. 

\begin{proposition}\label{prop:JDP-return}
If any user participates in at most $M_0$ rounds, the algorithm in \cite{shariff2018differentially}, with the above modification to handle user-level privacy, achieves the high-probability regret bound
\begin{align*}
    \text{Reg}(T) = \widetilde{O}\left(d\sqrt{T} + \sqrt{\frac{M_0T}{\epsilon}}d^{3/4} \log^{1/4} (1/\delta) \right).
\end{align*}
\end{proposition}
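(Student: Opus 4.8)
The plan is to modify the tree-based mechanism of \cite{shariff2018differentially} for user-level privacy and then feed the resulting noise magnitude into the generic regret template. First I would recall the central-model construction: the algorithm maintains a binary tree over the $T$ rounds whose leaves are the per-round rank-one updates $(\phi(c_t,a_t)\phi(c_t,a_t)^\top, \phi(c_t,a_t)y_t)$, and each internal node stores a partial sum with fresh Gaussian noise; any prefix sum $\sum_{\tau\le t}(\cdot)$ is then reconstructed from at most $O(\log T)$ nodes. In the event-level setting, flipping one round changes at most $O(\log T)$ nodes, each by a bounded-norm rank-one term, so the $\ell_2$-sensitivity of the concatenated node vector is $O(\sqrt{\log T})$, and Gaussian noise of that scale (times $\sqrt{\log(1/\delta)}/\epsilon$) suffices. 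The key observation for returning users is that if a single user appears in at most $M_0$ rounds, flipping that user flips at most $M_0$ leaves, hence affects at most $O(M_0\log T)$ nodes and changes each affected node's stored sum by at most one rank-one term; a careful accounting (the $M_0$ changed leaves are spread across the tree, but each contributes to $O(\log T)$ ancestors) gives $\ell_2$-sensitivity $O(M_0\sqrt{\log T})$. So I would scale the Gaussian noise variance up by $M_0^2$, i.e.\ each node now carries noise of standard deviation $O\!\big(M_0\sqrt{\log T}\cdot\sqrt{\log(1/\delta)}/\epsilon\big)$, and invoke the standard Gaussian-mechanism guarantee to conclude $(\epsilon,\delta)$-JDP (user-level).

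Next I would quantify the total noise injected into the running statistics $V_t$ and $u_t$. Since any prefix sum aggregates $O(\log T)$ independent node-noises, the effective noise added to $V_t$ (resp.\ $u_t$) is a sum of $O(\log T)$ independent Gaussian matrices (resp.\ vectors) each with entrywise standard deviation $O(M_0\sqrt{\log T}\sqrt{\log(1/\delta)}/\epsilon)$; hence the aggregate noise is sub-Gaussian with variance parameter $\sigma^2 = O\!\big(M_0^2\log^2 T\cdot\log(1/\delta)/\epsilon^2\big)$, which I would plug into the generic bound. This is exactly the regime covered by Lemma~\ref{lem:subG-main} (with $B=1$, since the central model makes no use of batching): that lemma yields, with high probability,
\[
\text{Reg}(T) = \widetilde{O}\!\left(d\sqrt{T} + \sqrt{\sigma T}\,d^{3/4}\right)
= \widetilde{O}\!\left(d\sqrt{T} + \sqrt{\frac{M_0 T}{\epsilon}}\,d^{3/4}\log^{1/4}(1/\delta)\right),
\]
after absorbing the $\log T$ factors into the $\widetilde{O}$ and matching the stated $\log^{1/4}(1/\delta)$ dependence (the quarter power coming from the square root of $\sqrt{\log(1/\delta)}$). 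The regularizer $\lambda$ is set as in Lemma~\ref{lem:subG-main}, namely $\lambda\approx\sigma(\sqrt{d}+\sqrt{\log(T/\alpha)})$, to ensure the private Gram matrix stays positive definite with high probability and the confidence ellipsoids remain valid.

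The main obstacle I anticipate is the sensitivity accounting for returning users: one must argue carefully that flipping all $M_0$ rounds belonging to a single user changes the concatenated tree-node vector by $\ell_2$-norm only $O(M_0\sqrt{\log T})$ rather than something worse. The subtlety is that the $M_0$ affected leaves may share ancestors, so naively summing per-leaf sensitivities could overcount; the right argument is to bound, for each of the $O(\log T)$ levels of the tree, the total squared change across that level (at most $M_0$ nodes change, each by a unit-norm rank-one term, but actually the per-level change is governed by how the $M_0$ leaves distribute), then sum over levels. A clean way is to note the node-vector is a fixed linear image of the leaf-vector with each leaf appearing in exactly $O(\log T)$ nodes, so the operator norm of that linear map is $O(\sqrt{\log T})$ and flipping $M_0$ unit-norm leaves gives $\ell_2$-change $O(M_0\sqrt{\log T})$. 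Beyond this, the regret derivation is essentially a black-box application of Lemma~\ref{lem:subG-main}, so no further difficulty is expected; the remaining care is purely in tracking the $\delta$- and $\log T$-dependencies to land on the exact exponent $1/4$ in $\log^{1/4}(1/\delta)$.
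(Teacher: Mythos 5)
Your proposal is correct and follows essentially the same route as the paper: scale the tree-based mechanism's noise standard deviation by $M_0$ (variance by $M_0^2$) because flipping one user who appears in at most $M_0$ rounds changes the expanded binary-tree's $\ell_2$-sensitivity from $O(\sqrt{\log T})$ to $O(M_0\sqrt{\log T})$, note that each prefix sum still aggregates only $O(\log T)$ noisy nodes, and plug the resulting sub-Gaussian noise level into the generic regret template (Lemma~\ref{cor:subG} with $B=1$). Your extra care on the sensitivity accounting (triangle inequality over the $M_0$ flipped leaves, each touching $O(\log T)$ ancestors) is a more explicit version of what the paper merely asserts, and your $\log^{1/4}(1/\delta)$ bookkeeping matches.
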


% with noise variance scaled up by a multiplicative factor of $M^2$ is able to achieve $(\epsilon,\delta)$-JDP (user-level) with a high-probability regret bound 
% \begin{align*}
%     \text{Reg}(T) = \widetilde{O}\left(d\sqrt{T} + \sqrt{\frac{MT}{\epsilon}}d^{3/4} \log^{1/4} (1/\delta) \right).
% \end{align*}

% \begin{proof}[Proof sketch]
% This follows from the fact that flipping one user now would change the $\ell_2$ sensitivity of the expanded binary-tree nodes from $O(\sqrt{\log T})$ to $O(M\sqrt{\log T})$. 
% \end{proof}
% \sayak{Why regret is $d^{3/4}$? ? we should give linear in $d$ term as well.}\xingyu{no, the same as previous CDP. you just check the CDP paper, you can get it. or directly look at our general result. see Lemma 3.9}\sayak{Please make the changes.}\xingyu{which change? in previous sections, you did not write any $d$ term I think.} I changed in the proposition
% \begin{remark}
% Setting $M = T^{1/3}$ yields the regret bound (i.e., $\tilde{O}(d^{5/6}T^{2/3}\epsilon^{-1/3})$) up to logarithmic factor in terms of $1/\delta$ and $T$.
% \end{remark}

\begin{remark}
Comparing Theorem~\ref{thm:return-main} and Proposition~\ref{prop:JDP-return}, we observe that the cost of privacy in the shuffle model is essentially same (upto a log factor) as in the central model under the setting of returning users.
In particular, if $M = M_0=T^{1/3}$ rounds (i.e., the same number of possible returning rounds for any user), the regret is $\widetilde O\left(\frac{T^{2/3}}{\sqrt{\epsilon}}\right)$ in both shuffle and user-level central trust models. 
See Appendix~\ref{app:return} for complete proofs and more details.
% For batch size $B \!=\! O(T^{2/3})$, i.e., when each user is allowed to participate in at most $T^{1/3}$ batches, the regret of Algorithm~\ref{alg:BOFUL} is $\widetilde{O}\left( \frac{T^{2/3}}{\sqrt{\epsilon}}\right)$ for both shuffle protocols $\cP_{\text{Amp}}$ and $\cP_{\text{Vec}}$ under the setting of returning users. As before, if $B$ also depends on $\epsilon$, then we can achieve a better regret $\widetilde{O}(T^{2/3}\epsilon^{-1/3})$, see Appendix~\ref{app:return}.
\end{remark}

\section{Simulation Results}

\begin{figure}[t]\centering
\begin{subfigure}[t]{.3\linewidth}
		\centering
			\includegraphics[width = 2.1in]{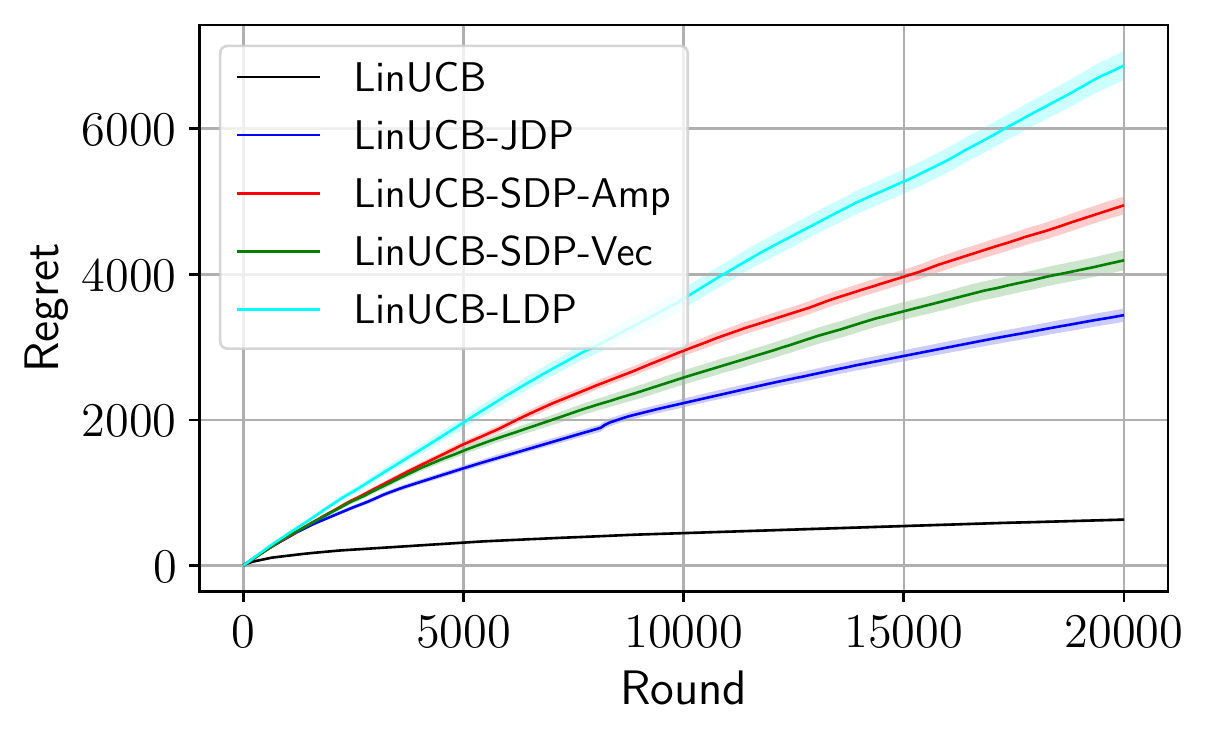}
			\caption{$\epsilon = 0.2$}
		\end{subfigure} \ \
		\begin{subfigure}[t]{.3\linewidth}
		\centering
			\includegraphics[width = 2.1in]{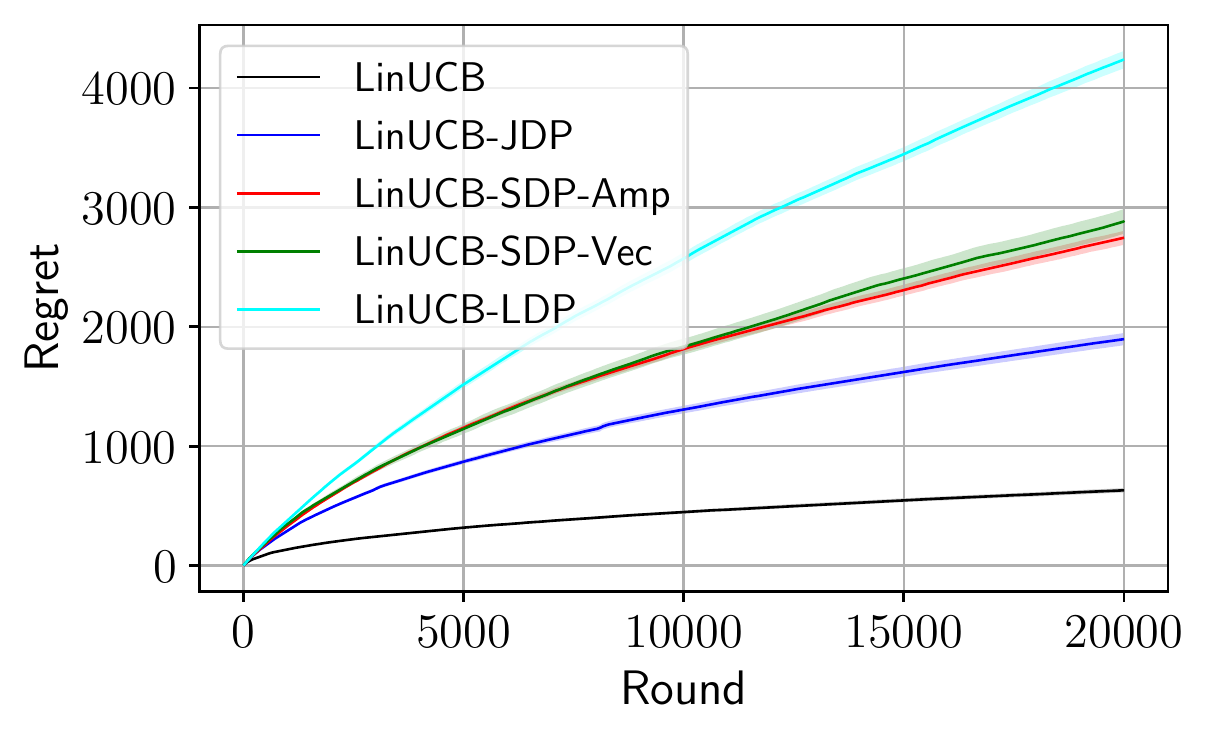}
			\caption{$\epsilon = 1$}
		\end{subfigure} \ \
		\begin{subfigure}[t]{.3\linewidth}
		\centering
			\includegraphics[width = 2.1in]{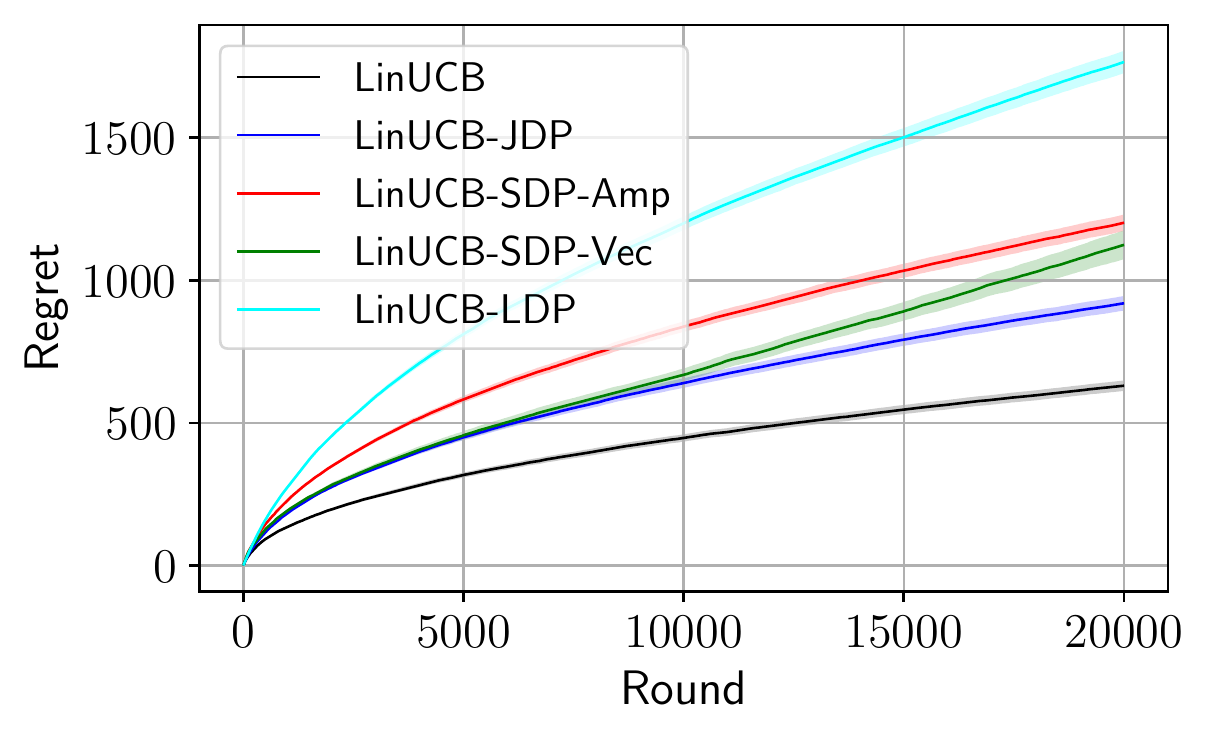}
			\caption{$\epsilon = 10$}
		\end{subfigure}
		 \vspace{-1mm}
		\caption{\footnotesize{Comparison of cumulative regret for LinUCB (non-private), LinUCB-JDP (central model), LinUCB-SDP (shuffle model) and LinUCB-LDP (local model) with varying privacy level $\epsilon=0.2$ (a), $\epsilon=1$ (b) and $\epsilon=10$ (c). For $\epsilon=0.2$ (higher privacy level), gap between private and non-private regret is higher as compared to $\epsilon=10$ (lower privacy level). In all cases, regret of LinUCB-SDP lies perfectly in between LinUCB-JDP and LinUCB-LDP, achieving finer regret-privacy trade-off.}  }\label{fig:all_algos}
		\vspace{0mm}
\end{figure}

In this section, we empirically evaluate the regret performance of Algorithm~\ref{alg:BOFUL} (under shuffle model), which we abbreviate as LinUCB-SDP-Amp and LinUCB-SDP-Vec when instantiated with $\cP_{\text{Amp}}$ and $\cP_{\text{Vec}}$, respectively. We compare them with the algorithms of \cite{shariff2018differentially} and \cite{zheng2020locally} under central and local models, which we call LinUCB-JDP and LinUCB-LDP, respectively. We benchmark these against the non-private algorithm of \cite{abbasi2011improved}, henceforth referred as LinUCB. For all the experiments, we consider $100$ arms, set $T = 20000$ rounds, and average our results
over 50 randomly generated bandit instances. Each instance is characterized by an
(unknown) parameter $\theta^*$ and feature vectors of dimension $d=5$. To ensure boundedness, similar to~\cite{vaswani2020old}, 
we generate each $\theta^*$ and feature vectors by sampling a $(d\!-\!1)$-dimensional vectors of norm $1/\sqrt{2}$ uniformly at random, and append it with a $1/\sqrt{2}$ entry. We consider Bernoulli $\lbrace 0,1\rbrace$ rewards.
We fix $\delta\!=\!0.1$ and plot the results for varying privacy level $\epsilon \in \lbrace 0.2,1,10\rbrace$ in Figure~\ref{fig:all_algos}. We use Batchsize $B=20$ for LinUCB-SDP. We postpone the results for $d=10,15$ to Appendix~\ref{app:sim}.
From Figure~\ref{fig:all_algos}, we observe that the regret performance of LinUCB-SDP (under both shuffle protocols $\cP_{\text{Amp}}$ and $\cP_{\text{Vec}}$) is indeed better than LinUCB-LDP. In addition, it is not surprising that LinUCB-SDP incurs a larger regret than
LinUCB-JDP. Moreover, the regret performance of LinUCB-SDP (in fact for any private algorithm) comes closer to that of LinUCB as $\epsilon$ increases, i.e, as the privacy guarantee becomes weaker. The experimental findings are
consistent with our theoretical results.

\section{Concluding Remarks}
\label{sec:conclude}
We conclude by discussing some important theoretical and practical aspects about shuffle protocols, and in general, about privacy in linear contextual bandits.

\textbf{Communications.}
In the protocol $\cP_{\text{Amp}}$, each participating user at each round need to send one $d$-dimensional real vector and one $d\times d$ real matrix.  On the other hand, the protocol $\cP_{\text{Vec}}$ only communicates $0/1$ bits. In particular, each participating user at each round sends out a total of $O(d^2(g+b))$ bits, where $g+b \approx \sqrt{B} + \log(1/\delta)/\epsilon^2$. Hence, $\cP_{\text{Vec}}$ might be more feasible in practice than $\cP_{\text{Amp}}$.

\textbf{Batched algorithms for local and central models.} Existing work on differentially private linear contextual bandits under both local and central models perform sequential update, i.e., the model estimates are updated after each round. As mentioned before, this may not be feasible in practice due to computational load. Fortunately, our proposed algorithm (Algorithm~\ref{alg:BOFUL}) along with its generic regret bound (Lemma~\ref{lem:subG-main}) also offers a simple way to design and analyze private algorithms for local and central models with batched update. In particular, we show that it suffices to update after every $B=\widetilde{O}(T^{3/4})$ rounds to achieve the same privacy-regret trade-off as in the sequential local model and every $B=\widetilde{O}(\sqrt{T})$ to match the sequential central model. See Appendix~\ref{sec:batched} for the details.

\textbf{Adaptive model update.}
One might wonder whether we can further reduce the update frequency to $O(\log T)$ via an adaptive model update schedule based on the standard determinant trick (Lemma 12 of~\cite{abbasi2011improved}). In this approach, the key step is to establish that $\norm{\phi(c,a)}_{V_{\tau_t}^{-1}} \!\leq\! \eta \norm{\phi(c,a)}_{V_{t}^{-1}}$, where $\tau_t \!<\! t$ is the most recent model update time before $t$. To this end, if one uses the determinant trick, one can obtain that
\begin{align*}
    \norm{\phi(c,a)}_{V_{\tau_t}^{-1}} \leq \sqrt{\frac{\det(V_t)}{\det(V_{\tau_t})}} \norm{\phi(c,a)}_{V_{t}^{-1}},
\end{align*}
\emph{if the condition $V_t \succeq V_{\tau_t}$ holds}. Note that this is true in the non-private setting. However, this does not necessarily hold in private settings due to the added noise, which, to the best of our knowledge, is the key analytical gap in the current proof of the main result (Theorem 10) in~\cite{garcelon2021privacy}. As we can see, this issue exists in all three trust models when one needs to use the \emph{noisy} design matrix to determine the update frequency via the determinant trick.

\textbf{Future work.}
One immediate future research direction is to address the above adaptive model update in the private settings. 
We also believe our framework can be generalized to design shuffle private algorithms for reinforcement learning with linear function approximation (e.g., linear mixture Markov decision processes (MDPs)) to 
achieve finer trade-off compared to the local model~\cite{liao2021locally} and the central model~\cite{zhou2022differentially}.

\bibliography{main,Bandit_RL_bib,2018library}
\bibliographystyle{alpha}

%%%%%%%%%%%%%%%%%%%%%%%%%%%%%%%%%%%%%%%%%%%%%%%%%%%%%%%%%%%%%%%%%%%%%%%%%%%%%%%
%%%%%%%%%%%%%%%%%%%%%%%%%%%%%%%%%%%%%%%%%%%%%%%%%%%%%%%%%%%%%%%%%%%%%%%%%%%%%%%
% APPENDIX
%%%%%%%%%%%%%%%%%%%%%%%%%%%%%%%%%%%%%%%%%%%%%%%%%%%%%%%%%%%%%%%%%%%%%%%%%%%%%%%
%%%%%%%%%%%%%%%%%%%%%%%%%%%%%%%%%%%%%%%%%%%%%%%%%%%%%%%%%%%%%%%%%%%%%%%%%%%%%%%
% \newpage
% \appendix
% %\onecolumn
% % \section{You \emph{can} have an appendix here.}
% \input{appendix}

\begin{appendix}

\section{A Unified Regret Analysis Under Differential Privacy}\label{app:unified_regret}
% \xingyu{in progress...it turns out the method in RL paper~\cite{wang2021provably} for bounding sum variance cannot be directly used,  since it requires a stronger assumption in the proof, which is not necessarily satisfied in the private case (i.e., the first two inequality in the proof of Lemma 4.4. I will double-check). But, we can turn to use the other one~\cite{ren2020batched}. However, when I double-check the first step of iterative eigenvalue decomposition, I think it also needs care due to the additional noise term. 

% Another problem is that  the other one in the non-private needs care for the regularizer, see Remark 5.1~\cite{ren2020batched}. As a result, it could lead to sub-optimal in terms of $d$ in our case. Thus, if the method in RL paper is applicable, I prefer it over the one in~\cite{ren2020batched}}

In this section, we will formally state Lemma~\ref{lem:subG-main}, i.e., the generic regret of Algorithm~\ref{alg:BOFUL} under sub-Gaussian private noise and then present its proof. 

Let's first recall the following notations. For each batch $m \in [M]$, let $N_m := \widetilde{V}_m - \sum_{t = t_{m-1}+1}^{t_m}\phi(c_t,a_t)\phi(c_t,a_t)^{\top}$ denote the additional noise injected into the non-private Gram-matrix  and similarly let $n_m:= \widetilde{u}_m - \sum_{t = t_{m-1}+1}^{t_m} \phi(c_t, a_t)y_t$ denote the additional noise injected into the non-private feature-reward vector. 
Then, we let $H_{m}:= \lambda I_d + \sum_{i=1}^m N_i$ to denote the \emph{total} noise in the first $m$ batches plus the regularizer, and similarly let $h_{m} := \sum_{i=1}^m n_i$.

% We first present a general regret bound for Algorithm~\ref{alg:BOFUL} under a assumption on the injected noise. To this end, we first introduce the following necessary notations. 

\begin{assumption}[Regularity]
\label{ass:reg}
For any $\alpha \in (0,1]$,  $H_m$ is positive definite and there exist constants $\lambda_{\max}$, $\lambda_{\min}$ and $\nu$ depending on $\alpha$, such that with probability at least $1-\alpha$, for all $m \in [M]$
\begin{align*}
    \norm{H_{m}} \le \lambda_{\max},\quad \norm{H_{m}^{-1}} \le 1/\lambda_{\min}, \quad \norm{h_m}_{H_m^{-1}} \le \nu.
\end{align*}
\end{assumption}

With the above regularity assumption and the boundedness in Assumption~\ref{ass:bounded}, we fist establish the following general regret bound of Algorithm~\ref{alg:BOFUL}, which can be viewed as a direct generalization of the results in~\cite{shariff2018differentially} to the batched case. 

\begin{lemma}
\label{thm:general}
Let Assumptions~\ref{ass:reg} and~\ref{ass:bounded} hold. Fix any $\alpha \in (0,1]$, with probability at least $1-\alpha$, the regret of Algorithm~\ref{alg:BOFUL} satisfies 
\begin{align*}
    \text{Reg}(T) \le \frac{dB}{\log 2 }\log\left(1 + \frac{T}{d\lambda_{\min}}\right) + 8\beta_M \sqrt{dT\log\left(1+\frac{T}{d\lambda_{\min}}\right)},
\end{align*}
where 
\begin{align*}
    \beta_M:=\sqrt{2\log\left(\frac{2}{\alpha}\right) + d\log\left(1+\frac{T}{d\lambda_{\min}}\right)} + \sqrt{\lambda_{\max}} + \nu.
\end{align*}
\end{lemma}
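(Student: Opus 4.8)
The plan is to follow the standard optimism-based regret analysis for LinUCB, but carefully tracking the batched updates and the noisy design matrix $H_m$, $h_m$. First I would establish the confidence ellipsoid: I claim that with probability at least $1-\alpha$, for every batch $m$ we have $\theta^* \in \cE_m := \{\theta : \norm{\theta - \hat\theta_m}_{V_m} \le \beta_m\}$, where $V_m = H_m + \sum_{t \le t_m}\phi(c_t,a_t)\phi(c_t,a_t)^\top$. To see this, write $\hat\theta_m = V_m^{-1}(h_m + \sum_{t\le t_m}\phi(c_t,a_t)y_t)$ and decompose $\hat\theta_m - \theta^* = V_m^{-1}\big(\sum_{t\le t_m}\phi(c_t,a_t)\eta_t\big) + V_m^{-1} h_m - V_m^{-1}(\lambda I_d + \textstyle\sum_i N_i)\theta^*$ where $\eta_t = y_t - \inner{\theta^*}{\phi(c_t,a_t)}$ is the reward noise. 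Taking the $V_m$-norm and using $V_m \succeq H_m$ (since the empirical Gram part is PSD, and $H_m$ is positive definite by Assumption~\ref{ass:reg}), the self-normalized martingale bound of \cite{abbasi2011improved} controls the first term by $\sqrt{2\log(2/\alpha) + \log(\det V_m / \det H_m)}$, the second term is bounded by $\norm{h_m}_{H_m^{-1}} \le \nu$, and the third by $\norm{H_m}^{1/2}\norm{\theta^*} \le \sqrt{\lambda_{\max}}$ using Assumption~\ref{ass:bounded}. Since $\det V_m / \det H_m \le (1 + t_m/(d\lambda_{\min}))^d$ by the trace-determinant inequality (using $\norm{H_m^{-1}}\le 1/\lambda_{\min}$ and $\norm{\phi}\le 1$), this gives exactly $\beta_M$ as the valid radius for all $m$ simultaneously.

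Next I would bound the per-round regret. On the good event, optimism gives that for a round $t$ in batch $m$ (so the agent uses $\hat\theta_{m-1}, V_{m-1}, \beta_{m-1}$), the instantaneous regret is at most $2\beta_{m-1}\norm{\phi(c_t,a_t)}_{V_{m-1}^{-1}} \le 2\beta_M \norm{\phi(c_t,a_t)}_{V_{m-1}^{-1}}$. The subtlety compared to the sequential case is that $V_{m-1}$ is the design matrix at the \emph{start} of the batch, not at time $t$, so I cannot directly telescope $\sum_t \norm{\phi_t}_{V_{t}^{-1}}^2$ via the elliptical potential lemma. The standard batched trick resolves this: split rounds into those where $\norm{\phi_t}_{V_{m-1}^{-1}} \le 1$ and those where it exceeds $1$. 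For the ``small'' rounds, clip and use $\min\{\norm{\phi_t}_{V_{m-1}^{-1}},1\} \le \min\{\norm{\phi_t}_{\bar V_{t}^{-1}},1\}\cdot\sqrt{\det \bar V_{t_m}/\det \bar V_{t_{m-1}}}$-type arguments where $\bar V_t = H_{m-1} + \sum_{s\le t}\phi_s\phi_s^\top$; more simply, one shows $\sum_{t} \min\{\norm{\phi_t}_{V_{m-1}^{-1}}^2, 1\} \le 2\log(\det V_M/\det H_M) \le 2d\log(1 + T/(d\lambda_{\min}))$ by noting that within a batch the matrix only grows and each batch contributes its log-determinant increment, with the clipping absorbing the within-batch discrepancy. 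For the ``large'' rounds, each such round forces the log-determinant to jump by at least $\log 2$, so there are at most $\frac{d}{\log 2}\log(1 + T/(d\lambda_{\min}))$ of them per... — actually total, giving the first term $\frac{dB}{\log 2}\log(1 + T/(d\lambda_{\min}))$ after multiplying by the batch size $B$ (each large round can incur at most the trivial per-round regret, bounded by a constant under Assumption~\ref{ass:bounded}, but since an entire batch commits to a fixed estimate, the accounting charges $B$ rounds' worth of constant regret per determinant-doubling event).

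Finally I would combine: $\text{Reg}(T) \le \frac{dB}{\log 2}\log(1 + \frac{T}{d\lambda_{\min}}) + 2\beta_M \sum_{\text{small } t}\norm{\phi_t}_{V_{m-1}^{-1}}$, then apply Cauchy--Schwarz to get $\sum_{\text{small }t}\norm{\phi_t}_{V_{m-1}^{-1}} \le \sqrt{T}\sqrt{\sum_t \min\{\norm{\phi_t}_{V_{m-1}^{-1}}^2,1\}} \le \sqrt{T}\sqrt{2d\log(1+\frac{T}{d\lambda_{\min}})}$, and absorbing constants into the factor $8$. The main obstacle I anticipate is the batched elliptical-potential step: making rigorous the claim that committing to $V_{m-1}$ for all $t$ in the batch costs only an extra factor of $B$ in the first (lower-order) term and does not inflate the $\sqrt{dT}$ term, which requires the two-case split and a careful comparison of $V_{m-1}^{-1}$ against the running matrix $\bar V_t^{-1}$ within the batch, exploiting that the noise $N_i$ is added once per batch so $H_{m-1}$ stays fixed throughout batch $m$. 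Everything else — the self-normalized bound, the determinant-trace inequality, optimism — is routine given Assumptions~\ref{ass:reg} and~\ref{ass:bounded}.
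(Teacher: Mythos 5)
Your confidence-ellipsoid argument is essentially identical to the paper's (same decomposition of $\hat\theta_m-\theta^*$, same three-term bound via $V_m\succeq H_m$ and $V_m\succeq G_m+\lambda_{\min}I$, self-normalized inequality, trace--determinant bound), and your accounting for the ``large'' rounds --- at most $\tfrac{d}{\log 2}\log(1+\tfrac{T}{d\lambda_{\min}})$ determinant-doubling batches, each charged $B$ rounds of clipped regret --- is sound and yields the first term. The gap is in the ``small'' rounds. Your shortcut claim $\sum_t\min\{\norm{\phi_t}_{V_{m-1}^{-1}}^2,1\}\le 2\log(\det V_M/\det H_M)$ is false: the elliptical potential telescopes only against the \emph{running} design matrix, and freezing it at the start of each batch makes the sum larger, not smaller. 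Concretely, with $d=1$, $\lambda_{\min}=1$ and all features equal to a fixed unit vector, every round of the first batch has $\norm{\phi_t}^2_{V_0^{-1}}=\Theta(1)\le 1$ (hence is ``small'' under your split), so the left side is already $\Omega(B)$ while the right side is $O(\log T)$. Feeding $\Omega(B)$ into your Cauchy--Schwarz step produces an extra $\beta_M\sqrt{TB}$ term, which for $B=T^{3/5}$ is far worse than the claimed $8\beta_M\sqrt{dT\log(1+T/(d\lambda_{\min}))}$.

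The fix --- and what the paper actually does --- is to split on a different quantity: the \emph{ratio} $\norm{\phi_t}_{\hat V_{m_t}^{-1}}/\norm{\phi_t}_{\hat V_{t-1}^{-1}}$ between the frozen and running matrices, where $\hat V_k:=\sum_{s\le k}\phi_s\phi_s^\top+\lambda_{\min}I$. Rounds where this ratio exceeds $2$ certify (via $\norm{x}_A\le\norm{x}_B\sqrt{\det A/\det B}$ for $A\succeq B$) that the log-determinant increases by at least $2\log 2$ over their batch, so there are at most $\tfrac{d}{2\log 2}\log(1+\tfrac{T}{d\lambda_{\min}})$ such batches, hence at most $B$ times that many such rounds, each clipped at regret $1$; on the remaining rounds the frozen norm is within a factor $2$ of the running norm and the standard potential lemma applies. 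Your split on whether the frozen norm exceeds $1$ does not control the frozen-to-running discrepancy for the rounds it labels ``small,'' which is exactly where the argument breaks. (A secondary point: you propose the running matrix $\bar V_t=H_{m-1}+\sum_{s\le t}\phi_s\phi_s^\top$; since $H_m-H_{m-1}=N_m$ need not be positive semidefinite, $\log\det\bar V_t$ does not telescope across batch boundaries, which is why the paper replaces the noise by the deterministic floor $\lambda_{\min}I$.)
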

% \xingyu{The above general theorem even improves the non-private one a little bit.}

In fact, Lemma~\ref{lem:subG-main} in the main paper is a simple application of Lemma~\ref{thm:general} by considering the following assumption. 
% We consider the following specific case of sub-Gaussian private noise. 
\begin{assumption}[sub-Gaussian private noise]
\label{ass:subG}
There exist constants $\widetilde{\sigma}_1$ and $\widetilde{\sigma}_2$ such that for all $m \in [M]$:  (i) $\sum_{m'=1}^m n_{m'}$ is a random vector whose entries are independent, mean zero, sub-Gaussian with variance at most $\widetilde{\sigma}_1^2$, and
(ii) $\sum_{m'=1}^m N_{m'}$ is a random symmetric matrix whose entries on and above the diagonal are independent sub-Gaussian random variables with variance at most $\widetilde{\sigma}_2^2$. Let $\sigma^2 \!=\! \max\lbrace \widetilde \sigma_1^2,\widetilde \sigma_2^2\rbrace$.
\end{assumption}

% With the above assumption, as a corollary of Theorem~\ref{thm:general}, we have the following regret bound under sug-Gaussian private noise.
Now, we are well-prepared to formally state Lemma~\ref{lem:subG-main} in the main paper.

\begin{lemma}[Formal statement of Lemma~\ref{lem:subG-main}]
\label{cor:subG}
Let Assumptions~\ref{ass:subG} and~\ref{ass:bounded} hold. Fix time horizon $T \in \Nat$, batch size $B \in [T]$, confidence level $\alpha \in (0,1]$. Set $\lambda = \Theta(\max\{1, \sigma (\sqrt{d}+ \sqrt{\log(T/(B\alpha))}\})$ and $\beta_m = \sqrt{2\log\left(\frac{2}{\alpha}\right) + d\log\left(1+\frac{T}{d\lambda}\right)} + \sqrt{\lambda} $. Then, Algorithm~\ref{alg:BOFUL} achieves regret 
\begin{align*}
    Reg(T) = O\left({dB}\log T + d\sqrt{T}\log(T/\alpha)\right) + O\left(\sqrt{\sigma T}d^{3/4}\log T \log (T/\alpha)\right)
\end{align*}
with probability at least $1-\alpha$.
\end{lemma}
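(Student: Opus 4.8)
The plan is to derive Lemma~\ref{cor:subG} as a direct specialization of the general regret bound in Lemma~\ref{thm:general}. That lemma already gives the regret in terms of the three quantities $\lambda_{\max}$, $\lambda_{\min}$, and $\nu$ appearing in the regularity Assumption~\ref{ass:reg}, so the entire task reduces to verifying Assumption~\ref{ass:reg} under the sub-Gaussian noise Assumption~\ref{ass:subG}, with explicit values of these constants, and then substituting.

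\textbf{Step 1: Control the noise matrix spectrum.} Recall $H_m = \lambda I_d + \sum_{i=1}^m N_i$, where $\sum_{i=1}^m N_i$ is a symmetric matrix whose on-and-above-diagonal entries are independent, mean-zero, sub-Gaussian with variance at most $\widetilde\sigma_2^2 \le \sigma^2$. I would apply a standard sub-Gaussian random-matrix tail bound (e.g., the $\varepsilon$-net / Matrix-Bernstein style argument, or Corollary~4.4.8-type bounds in Vershynin) to get that, with probability at least $1-\alpha/2$, $\norm{\sum_{i=1}^m N_i} \le C\sigma(\sqrt{d} + \sqrt{\log(M/\alpha)})$ simultaneously for all $m\in[M]$ via a union bound over the $M = T/B$ batches (so the log term becomes $\log(T/(B\alpha))$). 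Choosing $\lambda = \Theta(\max\{1, \sigma(\sqrt d + \sqrt{\log(T/(B\alpha))})\})$ — precisely the stated choice — makes $\lambda$ dominate this operator-norm bound, so that $\lambda_{\min} = \Theta(\lambda)$ (hence $H_m \succeq (\lambda/2) I_d \succ 0$, giving positive definiteness) and $\lambda_{\max} = \Theta(\lambda)$. This pins down $\sqrt{\lambda_{\max}} = O(\sqrt\lambda)$ and $1/\lambda_{\min} = O(1/\lambda)$.

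\textbf{Step 2: Control the self-normalized noise vector $\nu$.} Here $h_m = \sum_{i=1}^m n_i$ has independent, mean-zero, sub-Gaussian entries with variance $\le \widetilde\sigma_1^2 \le \sigma^2$, and I need $\norm{h_m}_{H_m^{-1}} \le \nu$ for all $m$ with probability $1-\alpha/2$. On the event of Step~1, $H_m^{-1} \preceq (2/\lambda) I_d$, so $\norm{h_m}_{H_m^{-1}}^2 \le (2/\lambda)\norm{h_m}_2^2$. A sub-Gaussian norm concentration bound gives $\norm{h_m}_2^2 = O(\sigma^2(d + \log(M/\alpha)))$ uniformly over $m$, whence $\nu^2 = O(\sigma^2(d + \log(T/(B\alpha)))/\lambda) = O(\sigma^2 d/\lambda)$ using the choice of $\lambda$ — and since $\lambda = \Omega(\sigma\sqrt d)$, this yields $\nu = O(\sqrt{\sigma}\, d^{1/4})$ roughly, in any case $\nu = O(\sqrt\lambda)$, so $\nu$ is absorbed into the $\sqrt{\lambda_{\max}}$ term.

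\textbf{Step 3: Substitute into Lemma~\ref{thm:general} and simplify.} Plugging $\lambda_{\min}, \lambda_{\max}, \nu = \Theta(\lambda)$-type bounds into the statement of Lemma~\ref{thm:general}: the first term is $\frac{dB}{\log 2}\log(1 + T/(d\lambda_{\min})) = O(dB\log T)$; and $\beta_M = \sqrt{2\log(2/\alpha) + d\log(1 + T/(d\lambda_{\min}))} + \sqrt{\lambda_{\max}} + \nu = O(\sqrt{d\log(T/\alpha)} + \sqrt\lambda)$. With $\lambda = \Theta(\sigma(\sqrt d + \sqrt{\log(T/(B\alpha))}))$ we get $\sqrt\lambda = O(\sqrt{\sigma}(d^{1/4} + \log^{1/4}(T/\alpha)))$, so $\beta_M = O(\sqrt{d\log(T/\alpha)} + \sqrt\sigma\, d^{1/4}\log^{1/4}(T/\alpha))$. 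The second term of the regret is $8\beta_M\sqrt{dT\log(1+T/(d\lambda_{\min}))} = O\big((\sqrt{d\log(T/\alpha)} + \sqrt\sigma d^{1/4}\log^{1/4}(T/\alpha))\cdot \sqrt{dT\log T}\big) = O(d\sqrt T\log(T/\alpha)) + O(\sqrt{\sigma T}\,d^{3/4}\log T\log(T/\alpha))$. Combining the two terms (and taking a union bound so everything holds with probability $1-\alpha$) gives exactly the claimed bound $\text{Reg}(T) = O(dB\log T + d\sqrt T\log(T/\alpha)) + O(\sqrt{\sigma T}\, d^{3/4}\log T\log(T/\alpha))$.

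\textbf{Main obstacle.} The technical crux is Step~1 — getting a clean, high-probability operator-norm bound on the symmetric sub-Gaussian noise matrix $\sum_{i=1}^m N_i$ that holds uniformly over all $M$ batches — together with making sure the regularizer $\lambda$ is chosen large enough (but no larger than necessary) to dominate it, since everything downstream ($\lambda_{\min}$ being bounded below, $\nu$ being small, positive-definiteness of $H_m$) hinges on this. The rest is bookkeeping: propagating these constants through Lemma~\ref{thm:general} and collecting logarithmic factors. One subtlety worth care is that Assumption~\ref{ass:subG} only controls the variance of \emph{partial sums} $\sum_{m'=1}^m N_{m'}$ directly (not each $N_i$ separately), which is exactly the form needed to invoke the matrix concentration bound on $H_m - \lambda I_d$ for each fixed $m$ before the union bound — so no martingale/freezing argument across batches is actually required, which keeps the argument short.
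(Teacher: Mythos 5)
Your proposal is correct and follows essentially the same route as the paper's proof: both reduce the claim to supplying $\lambda_{\max},\lambda_{\min},\nu$ in Assumption~\ref{ass:reg} via Vershynin-type concentration for the sub-Gaussian noise vector ($\|h_m\|$) and symmetric noise matrix ($\|\sum_i N_i\|$, Corollary~4.4.8), a union bound over the $M=T/B$ batches, the choice $\lambda=\Theta(\Sigma_N)$, the bound $\nu\le\|h_m\|/\sqrt{\lambda_{\min}}=O(\sqrt{\lambda})$, and substitution into Lemma~\ref{thm:general}. Your closing observation — that Assumption~\ref{ass:subG} is stated directly on the partial sums, so no martingale argument across batches is needed — is exactly the structure the paper exploits.
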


\begin{remark}
The above lemma also presents a regret bound for non-private batched LCB when $\sigma = 0$. Note that in this case, our regret bound is achieved with a \emph{dimension-independent} regularizer, in contrast to the necessary condition on $\lambda = \tilde{\Theta}(d)$ as required in~\cite{ren2020batched} to attain the optimal regret.
\end{remark}

% \begin{corollary}
% \label{cor:subG}
% Let Assumptions~\ref{ass:subG} and~\ref{ass:bounded} hold, and $\lambda = O(\sigma (\sqrt{d}+ \sqrt{\log(M/\alpha)})$. For any $\alpha \in (0,1]$, with probability at least $1-2\alpha$, the regret of Algorithm~\ref{alg:BOFUL} satisfies 
% \begin{align*}
%     R(T) = O\left({dB}\log T + d\sqrt{T}\log(T/\alpha)\right) + O\left(\sqrt{\sigma T}d^{3/4}\log T \log (T/\alpha)\right),
% \end{align*}
% where $\sigma = \max\{\widetilde{\sigma}_1, \widetilde{\sigma}_2\}$.
% \end{corollary}

\subsection{Proofs}
In this section, we present proofs for Lemma~\ref{thm:general} and Lemma~\ref{cor:subG} above, respectively. 
\begin{proof}[Proof of Lemma~\ref{thm:general}]
Let $\cE$ be the event given in Assumption~\ref{ass:reg}, which holds with probability at least $1-\alpha$ under Assumption~\ref{ass:reg}. In the following, we condition on the event $\cE$.
We first show that $\hat{\theta}_m$  concentrates around the true parameter $\theta^*$ with a properly chosen confidence radius $\beta_m$ for all $m \in [M]$. To this end, note that 
\begin{align*}
    \hat{\theta}_m &= V_m^{-1} u_m \\
    &=\left(\sum_{t=1}^{t_m} \phi(c_t,a_t) \phi(c_t,a_t)^{\top} + \lambda I_d + \sum_{i=1}^m N_m \right)^{-1}\left( \sum_{t=1}^{t_m} \phi(c_t, a_t)y_t + \sum_{i=1}^m n_m \right)\\
    &=\left(\sum_{t=1}^{t_m} \phi(c_t,a_t) \phi(c_t,a_t)^{\top} + H_m \right)^{-1}\left( \sum_{t=1}^{t_m} \phi(c_t, a_t)y_t + h_m \right).
\end{align*}
By the linear reward function $y_t = \inner{\phi(c_t,a_t)}{\theta^*} + \eta_t$ and elementary algebra, we have 
\begin{align*}
    \theta^* - \hat{\theta}_m = V_m^{-1}\left(H_m \theta^* - \sum_{t=1}^{t_m}\phi(c_t,a_t)\eta_t - h_m\right).
\end{align*}
Thus, multiplying both sides by $V_m^{1/2}$, yields
\begin{align*}
    \norm{\theta^* - \hat{\theta}_m}_{V_m} &\le \norm{\sum_{t=1}^{t_m}\phi(c_t,a_t)\eta_t}_{V_m^{-1}} + \norm{H_m\theta^*}_{V_m^{-1}} + \norm{h_m}_{V_m^{-1}}\\
    & \lep{a} \norm{\sum_{t=1}^{t_m}\phi(c_t,a_t)\eta_t}_{(G_m + \lambda_{\min}I)^{-1}} + \norm{\theta^*}_{H_m} + \norm{h_m}_{H_m^{-1}},
\end{align*}
where (a) holds by $V_m \succeq H_m$ and $V_m \succeq G_m + \lambda_{\min} I$ with $G_m:= \sum_{t=1}^{t_m}\phi(c_t,a_t)\phi(c_t,a_t)^{\top}$ under event $\cE$. Further, by the boundedness condition of $\theta^*$ and event $\cE$, $\norm{\theta^*}_{H_m} \le \sqrt{\lambda_{\max}}$ and $\norm{h_m}_{H_m^{-1}} \le \nu$. For the remaining first term, we can use self-normalized inequality (cf. Theorem 1 in~\cite{abbasi2011improved}) with the filtration $\cF_t = \sigma(c_1,a_1,y_1,\ldots,c_t,a_t,y_t, c_{t+1},a_{t+1})$.
In particular, we have with probability at least $1-\alpha$, for all $m\in [M]$
\begin{align}
\label{eq:self-nor}
    \norm{\sum_{t=1}^{t_m}\phi(c_t,a_t)\eta_t}_{(G_m + \lambda_{\min}I)^{-1}} \le \sqrt{2\log\left(\frac{1}{\alpha}\right) + \log\left(\frac{\det(G_m + \lambda_{\min}I) }{\det(\lambda_{\min} I)}\right)}.
\end{align}
Now, using the trace-determinant lemma (cf. Lemma 10 in~\cite{abbasi2011improved}) and the boundedness condition on $\norm{\phi(c,a)}$, we have 
\begin{align*}
    \det(G_m + \lambda_{\min}I) \le \left(\lambda_{\min} + \frac{t_m }{d}\right)^d.
\end{align*}
Putting everything together, we have with probability at least $1-2\alpha$, for all $m \in [M]$, $\norm{\theta^* - \hat{\theta}_m}_{V_m} \le \beta_m$, where 
\begin{align*}
    \beta_m:=\sqrt{2\log\left(\frac{1}{\alpha}\right) + d\log\left(1+\frac{t_m}{d\lambda_{\min}}\right)} + \sqrt{\lambda_{\max}} + \nu.
\end{align*}
With the above concentration result and our OFUL-type algorithm, the regret can be upper bounded as follows.
\begin{align}
\label{eq:reg_temp}
    \cR(T) &= \sum_{m=1}^M \left[2\beta_{m-1} \sum_{t=t_{m-1}+1}^{t_m} \left(\norm{\phi(c_t,a_t)}_{V_{m-1}^{-1}}\right) \right]\nonumber\\
    &\le \sum_{m=1}^M \left[2\beta_{M} \sum_{t=t_{m-1}+1}^{t_m} \left(\norm{\phi(c_t,a_t)}_{(G_{m-1} + \lambda_{\min} I)^{-1}}\right) \right]
\end{align}
At this moment, we note that the standard elliptical potential lemma (cf. Lemma 11 in~\cite{abbasi2011improved}) cannot be applied to our batch setting due to the delay of $G_m$.

To handle this, inspired by~\cite{wang2021provably}, we let $\hat{V}_k:= \sum_{t=1}^{k} \phi(c_t,a_t) \phi(c_t,a_t)^{\top} + \lambda_{\min} I_d$, that is, a (virtual) design matrix at the end of time $k$. Hence, we have $G_{m-1} + \lambda_{\min} I_d = \hat{V}_{t_{(m-1)}}$. Moreover, for any $t_{m-1} < t \le t_m$,  let $m_t = t_{m-1}$, that is, mapping $t$ to the starting time of the batch that includes $t$. Finally, let $\Gamma_i(\cdot,\cdot) := \beta_M\cdot \norm{\phi(\cdot,\cdot)}_{\hat{V}_{i}^{-1}}$.

With above notations, the bound in~\eqref{eq:reg_temp} can be rewritten as follows.
\begin{align*}
    R(T) &\le \sum_{m=1}^M \left[2\beta_{M} \sum_{t=t_{m-1}+1}^{t_m} \left(\norm{\phi(c_t,a_t)}_{(G_{m-1} + \lambda_{\min})^{-1}}\right) \right]\\
    &=\sum_{t=1}^T 2\Gamma_{m_t}(c_t, a_t)
\end{align*}
% where the last inequality follows from the boundedness of reward. 
In the sequential case (i.e., $B=1$), we always have $m_t = t-1$. Thus, the key is to bound the difference between $\Gamma_{m_t}(c_t,a_t)$ and $\Gamma_{t-1}(c_t,a_t)$. To this end, we have the following claim, which will be proved at the end.
\begin{claim}
\label{clm:dif}
Define the set $\Psi$ as follows
\begin{align*}
    \Psi = \{t \in [T]: \Gamma_{m_t}(c_t,a_t)/\Gamma_{t-1}(c_t,a_t) > 2\}.
\end{align*}
Then, we have
\begin{align*}
    |\Psi| \le \frac{dT}{2M\log 2 }\log\left(1 + \frac{T}{d\lambda_{\min}}\right).
\end{align*}
\end{claim}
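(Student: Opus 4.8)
\textbf{Proof plan for Claim~\ref{clm:dif}.}

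The plan is to bound $|\Psi|$ by first relating the ``large ratio'' condition at a round $t$ to a determinant growth over the batch containing $t$, and then summing the determinant growth over all batches. The first step is to observe that for any $t$ with $t_{m-1} < t \le t_m$, the matrices satisfy $\hat V_{m_t} = \hat V_{t_{m-1}} \preceq \hat V_{t-1}$, so by the standard determinant-ratio bound on weighted norms (the same inequality quoted in the ``adaptive model update'' discussion, i.e. $\norm{\phi}_{\hat V_{m_t}^{-1}} \le \sqrt{\det(\hat V_{t-1})/\det(\hat V_{m_t})}\,\norm{\phi}_{\hat V_{t-1}^{-1}}$), the event $\Gamma_{m_t}(c_t,a_t)/\Gamma_{t-1}(c_t,a_t) > 2$ forces $\det(\hat V_{t-1})/\det(\hat V_{m_t}) > 4$. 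Crucially, since $m_t = t_{m-1}$ is the \emph{start} of batch $m$ and $t-1 \le t_m$, we have $\det(\hat V_{t-1})/\det(\hat V_{m_t}) \le \det(\hat V_{t_m})/\det(\hat V_{t_{m-1}})$, so each $t \in \Psi$ lying in batch $m$ certifies that this per-batch determinant ratio exceeds $4$.

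The second step is a counting argument. Within a single batch $m$, if $k_m$ denotes the number of indices $t \in \Psi$ that fall in batch $m$, then (because $\hat V$ is monotone and the contribution of any single round to $\log\det$ is what it is) we cannot immediately say $4^{k_m} \le \det(\hat V_{t_m})/\det(\hat V_{t_{m-1}})$ — the cleanest route is instead to note that each $t\in\Psi$ in batch $m$ independently gives $\det(\hat V_{t_m})/\det(\hat V_{t_{m-1}}) > 4$, hence $\log(\det(\hat V_{t_m})/\det(\hat V_{t_{m-1}})) > 2\log 2$ whenever $k_m \ge 1$; summing over the $M$ batches and telescoping, $|\Psi| \le \#\{m : k_m \ge 1\}\cdot \max_m k_m$ is too lossy, so I would instead argue per-batch that $k_m \log 2 \le \tfrac12 \log(\det(\hat V_{t_m})/\det(\hat V_{t_{m-1}}))$ by a finer elliptical-potential estimate inside the batch, then telescope $\sum_m \log(\det(\hat V_{t_m})/\det(\hat V_{t_{m-1}})) = \log(\det(\hat V_T)/\det(\lambda_{\min}I)) \le d\log(1 + T/(d\lambda_{\min}))$ using the trace–determinant lemma and $\norm{\phi(c,a)}_2 \le 1$. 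This yields $|\Psi| \le \frac{d}{2\log 2}\log(1 + \frac{T}{d\lambda_{\min}})$; the stated bound has an extra $1/M$ factor, which must come from the fact that a round $t\in\Psi$ in batch $m$ actually certifies $\det(\hat V_{t_m})/\det(\hat V_{t_{m-1}}) > 4$ where $t_m - t_{m-1} = B$, so the determinant increments are spread over $B$ rounds and only $1/B = M/T \cdot (T/M)\cdot(1/B)$... — concretely, I expect the argument to show each batch can contain at most $\frac{d}{2M\log 2}\log(1+\frac{T}{d\lambda_{\min}})$ bad rounds on \emph{average}, giving the $\frac{dT}{2M\log 2}\log(1+\frac{T}{d\lambda_{\min}})$ total after multiplying by $M$ batches — so the bound as written is really ``$\le M \times (\text{per-batch average bound})$'' and the $T$ in the numerator is $M \cdot B$ bookkeeping.

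The main obstacle will be making the per-batch counting rigorous: unlike the sequential case, inside a batch the weighted norms $\norm{\phi(c_t,a_t)}_{\hat V_{m_t}^{-1}}$ are all measured against the \emph{frozen} matrix $\hat V_{t_{m-1}}$, so the usual ``sum of $\min(1,\norm{\phi}^2_{V_{t-1}^{-1}})$ is $O(d\log T)$'' telescoping does not directly apply — I would handle this by working instead with the virtual matrices $\hat V_{t-1}$ that \emph{do} update every round (as set up just before the claim), applying the standard elliptical potential lemma to the sequence $\{\norm{\phi(c_t,a_t)}_{\hat V_{t-1}^{-1}}\}_{t\in[T]}$, and only paying the determinant-ratio penalty on the set $\Psi$; the size of $\Psi$ is then exactly what limits how much the frozen-matrix norm can exceed the live-matrix norm. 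The remaining care is purely bookkeeping with the trace–determinant lemma and the normalization $\norm{\phi}_2\le 1$, which I will not grind through here.
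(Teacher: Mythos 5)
Your first step coincides with the paper's: for $t\in\Psi$ lying in batch $m$, Lemma 12 of \cite{abbasi2011improved} together with $\hat V_{m_t}=\hat V_{t_{m-1}}\preceq \hat V_{t-1}\preceq \hat V_{t_m}$ forces $\log\det(\hat V_{t_m})-\log\det(\hat V_{t_{m-1}})>2\log 2$. The gap is in the counting step. The argument you dismiss as ``too lossy'' --- each $t\in\Psi$ merely certifies that its batch is bad, so $|\Psi|\le(\text{batch size})\times(\text{number of bad batches})$ --- is precisely the intended proof and yields exactly the stated bound: setting $\hat\Psi:=\{m\in[M]:\log\det(\hat V_{t_m})-\log\det(\hat V_{t_{m-1}})>2\log 2\}$, step one gives $|\Psi|\le B\,|\hat\Psi|=(T/M)\,|\hat\Psi|$, and since the per-batch log-det increments are nonnegative and telescope, $2\log 2\cdot|\hat\Psi|\le\sum_{m=1}^{M}\bigl(\log\det(\hat V_{t_m})-\log\det(\hat V_{t_{m-1}})\bigr)=\log\bigl(\det(G_M+\lambda_{\min}I)/\det(\lambda_{\min}I)\bigr)\le d\log(1+T/(d\lambda_{\min}))$ by the trace--determinant lemma. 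The $T/M$ factor in the claim is nothing more than the trivial bound $k_m\le B$; it is not an ``average per-batch'' statement.

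The refinement you propose instead, $k_m\log 2\le\tfrac12\log\bigl(\det(\hat V_{t_m})/\det(\hat V_{t_{m-1}})\bigr)$, is not only unnecessary but false in general, so the route you sketch cannot be completed. Membership of $t$ in $\Psi$ only certifies log-det growth of $2\log 2$ \emph{since the start of the batch}, not since the previous bad round: all rounds of a batch are compared against the same frozen matrix $\hat V_{t_{m-1}}$. If a direction $e_1$ has eigenvalue $\lambda_{\min}<1/3$ in $\hat V_{t_{m-1}}$ and every round of the batch plays $\phi=e_1$, then $\norm{e_1}_{\hat V_{t_{m-1}}^{-1}}/\norm{e_1}_{\hat V_{t-1}^{-1}}\ge\sqrt{(\lambda_{\min}+1)/\lambda_{\min}}>2$ for every round after the first, so $k_m=B-1$, while $\log\bigl(\det(\hat V_{t_m})/\det(\hat V_{t_{m-1}})\bigr)=\log(1+B/\lambda_{\min})$ grows only logarithmically in $B$. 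This is also why the ``finer elliptical-potential estimate inside the batch'' that you flag as the main obstacle does not exist --- and why the claim carries the factor $T/M$ rather than being independent of the batch size.
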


According to Claim~\ref{clm:dif}, we can decompose regret as follows. 
\begin{align*}
    R(T) &\lep{a} \sum_{t=1}^T \min\{2\Gamma_{m_t}(c_t, a_t),1\}\\
    &= \sum_{t \in \Psi} \min\{2\Gamma_{m_t}(c_t, a_t),1\} + \sum_{t \notin \Psi} \min\{2\Gamma_{m_t}(c_t, a_t),1\}\\
    &\lep{b} |\Psi| + \sum_{t \notin \Psi} \min\{4\Gamma_{t-1}(c_t,a_t) ,1\}\\
    &\lep{c} |\Psi| + \sum_{t=1}^T 4\beta_M \min\{\norm{\phi(c_t,a_t)}_{\hat{V}_{t-1}^{-1}} ,1 \}\\
    &\lep{d} \frac{dT}{2M\log 2 }\log\left(1 + \frac{T}{d\lambda_{\min}}\right) + 8\beta_M \sqrt{dT\log\left(1+\frac{T}{d\lambda_{\min}}\right)}
\end{align*}
where (a) holds by the boundedness of reward; (b) holds by definition of $\Psi$; (c) holds by the fact that $\beta_M \ge 1$; (d) follows from Claim~\ref{clm:dif} and standard argument for linear bandit, i.e., Cauchy-Schwartz and standard elliptical potential lemma (cf. Lemma 11 in~\cite{abbasi2011improved}). Hence, we have finished the proof of Lemma~\ref{thm:general}.

Finally, we give the proof of Claim~\ref{clm:dif}.

For any  $t \in \Psi$, suppose $t_{m-1} < t \le t_m$ for some $m$. Then, we have $m_t = t_{(m-1)}$ and 
\begin{align*}
    \log\det(\hat{V}_{t_m}) - \log\det(\hat{V}_{t_{(m-1)}}) \gep{a} \log\det(\hat{V}_{t-1}) - \log\det(\hat{V}_{m_t}) \gep{b} 2\log(\Gamma_{m_t}(c_t, a_t) / \Gamma_{t-1}(c_t,a_t)) > 2\log 2,
\end{align*}
where (a) holds by the fact $\hat{V}_{t_m} \succeq \hat{V}_{t-1}$; (b) holds by Lemma 12 in~\cite{abbasi2011improved}, that is, for two positive definite matrices  $A, B\in \mathbb{R}^{d \times d}$ satisfying $A \succeq B$, then for any $x \in \mathbb{R}^d$, $\norm{x}_{A} \le \norm{x}_B \cdot \sqrt{\det(A)/\det(B)}$. Note that here we also use $\det(A) = 1/\det(A^{-1})$ for any matrix; 

Therefore, if we let $\hat{\Psi}:=\{m \in [M]:  \log\det(\hat{V}_{t_m }) - \log\det(\hat{V}_{t_{(m-1)}}) > 2\log 2\}$, then we have $|\Psi| \le (T/M) |\hat{\Psi}|$. Thus, we only need to bound $|\hat{\Psi}|$. Note that for each $m$, $\log\det(\hat{V}_{t_m }) - \log\det(\hat{V}_{t_{(m-1)}}) \ge 0$, and hence 
\begin{align*}
    2\log 2 \cdot |\hat{\Psi}| \le \sum_{m \in \hat{\Psi}}\log\det(\hat{V}_{t_m }) - \log\det(\hat{V}_{t_{(m-1)}}) &\le \sum_{m=1}^M \log\det(\hat{V}_{t_m}) - \log\det(\hat{V}_{t_{(m-1)}})\\ &= \log\left(\frac{\det(G_M + \lambda_{\min}I) }{\det(\lambda_{\min} I)}\right)
\end{align*}
Finally, using the same analysis as in~\eqref{eq:self-nor}, yields 
\begin{align*}
    |\hat{\Psi}| \le \frac{d}{2\log 2} \log\left(1 + \frac{T}{d\lambda_{\min}}\right),
\end{align*}
which directly implies the result of Claim~\ref{clm:dif}.
% To address this issue,  we need to carefully bound the difference between the delayed bonus term $\norm{\phi(c_t,a_t)}_{V_{m-1}^{-1}}$ and the standard one $\norm{\phi(c_t,a_t)}_{V_{t-1}^{-1}}$. Inspired by [],
% for any $t \in [T]$, let $m_t$ be the batch $m$ such that $t_{m-1} < t \le t_{m}$ and also for notation simplicity, we let $\Gamma_i(\cdot,\cdot) := \norm{\phi(\cdot,\cdot)}_{V_{i-1}^{-1}}$. Therefore, we have 
% \begin{align}
%     \cR(T) \le \sum_{t=1}^T 2\beta_M \min\{\Gamma_{m_t}(c_t,a_t),1\}.
% \end{align}
\end{proof}

\begin{proof}[Proof of Lemma~\ref{cor:subG}]
To prove the result, thanks to Lemma~\ref{thm:general}, we only need to determine the three constants $\lambda_{\max}, \lambda_{\min}$ and $\nu$ under the sub-Gaussian private noise assumption in Assumption~\ref{ass:subG}. To this end, we resort to concentration bounds for sub-Gaussian random vector and random matrix. 

To start with, under (i) in Assumption~\ref{ass:subG}, by the concentration bound for the norm of a vector containing sub-Gaussian entries (cf. Theorem 3.1.1 in~\cite{vershynin2018high}) and a union bound over $m$, we have for all $m \in [M]$ and any $\alpha \in (0,1]$, with probability at least $1-\alpha/2$, for some absolute constant $c_1$,
\begin{align*}
    \norm{\sum_{i=1}^m n_i} = \norm{h_m} \le \Sigma_n:= c_1\cdot \widetilde{\sigma}_1 \cdot(\sqrt{d} + \sqrt{\log(M/\alpha)}).
\end{align*}

By (ii) in Assumption~\ref{ass:subG}, the concentration bound for the norm of a sub-Gaussian symmetric random matrix (cf. Corollary 4.4.8~\cite{vershynin2018high}) and a union bound over $m$, we have for all $m \in [M]$ and any $\alpha \in (0,1]$, with probability at least $1-\alpha/2$, 
\begin{align*}
    \norm{\sum_{i=1}^m N_i} \le \Sigma_N:= c_2\cdot \widetilde{\sigma}_2 \cdot(\sqrt{d} + \sqrt{\log(M/\alpha)})
\end{align*}
for some absolute constant $c_2$. Thus, if we choose $\lambda = 2 \Sigma_N$, we have $\norm{H_m} = \norm{\lambda I_d + \sum_{i=1}^m N_i} \le 3\Sigma_N$, i.e., $\lambda_{\max} = 3\Sigma_N$, and $\lambda_{\min} = \Sigma_N$. Finally, to determine $\nu$, we note that 
\begin{align*}
    \norm{h_m}_{H_m^{-1}} \le \frac{1}{\sqrt{\lambda_{\min}}} \norm{h_m} \le c\cdot \left(\sigma \cdot (\sqrt{d} + \sqrt{\log(M/\alpha)})\right)^{1/2} := \nu,
\end{align*}
where $\sigma= \max\{\widetilde{\sigma}_1,\widetilde{\sigma}_2\}$. The final regret bound is obtained by plugging the three values into the result given by Lemma~\ref{thm:general}.
\end{proof}
%%%%%%%%%%%%%%%%%%%%%%%%%%%%%%%%%%%%%%%%%%%%%%%%%%%%%%%%%%%%%%%%%%%%%%%%%%%%%%%
%%%%%%%%%%%%%%%%%%%%%%%%%%%%%%%%%%%%%%%%%%%%%%%%%%%%%%%%%%%%%%%%%%%%%%%%%%%%%%%

\section{Analysis of LDP Amplification Protocol}\label{app:amp}

\subsection{Pseudocode of $\cP_{\text{Amp}}$}
The shuffle protocol is given by  $\mathcal{P}_{\text{Amp}} \!=\! (\cR_{\text{Amp}}, \cS_{\text{Amp}}, \cA_{\text{Amp}})$, in which $\cR_{\text{Amp}}$ is presented in Algorithm~\ref{alg:Ramp}, $\cS_{\text{Amp}}$ is presented in Algorithm~\ref{alg:Samp}, and $\cA_{\text{Amp}}$ is presented in Algorithm~\ref{alg:Aamp}.
\begin{algorithm}[tb]
  \caption{Local Randomizer $\cR_{\text{Amp}}$ }
  \label{alg:Ramp}
\begin{algorithmic}[1]
  \STATE {\bfseries Parameters:} $\sigma_1, \sigma_2, d$
%   \STATE {\bfseries Initialize:} 
  \FUNCTION{$R_1(\phi(c,a)y)$}
    \STATE Sample fresh noise $n \sim \cN(0, \sigma_2^2 I_{d \times d})$
    \STATE $M_{1} = \phi(c,a)y + n$
     \STATE {\bfseries return} $M_{t,1}$
  \ENDFUNCTION
  \FUNCTION{$R_2(\phi(c,a)\phi(c,a)^\top)$}
    \STATE Sample fresh noise $N_{(i,j)} \sim \cN(0, \sigma_1^2), \forall i \le j \le d$ and let $N_{(j,i)} = N_{(i,j)}$
    \STATE $M_2 = \phi(c,a)\phi(c,a)^\top + N$
    \STATE {\bfseries return} $M_2$
  \ENDFUNCTION
  
\end{algorithmic}
\end{algorithm}

\begin{algorithm}[tb]
  \caption{Shuffler $\cS_{\text{Amp}}$ }
  \label{alg:Samp}
\begin{algorithmic}[1]
    \STATE {\bfseries Input:} $\{  M_{\tau, 1} \}_{\tau \in \cB}$ and $\{  M_{\tau, 2} \}_{\tau \in \cB}$, in which $\cB$ is a batch and  $M_{\tau, 1} \in \mathbb{R}^d$, $M_{\tau,2} \in \mathbb{R}^{d\times d}$ come from user $\tau$
  \FUNCTION{$S_1( \{  M_{\tau, 1} \}_{\tau \in \cB})$}
    \STATE Generate a uniform permutation $\pi$ of indexes in $\cB$
    \STATE Set $Y_1 = (M_{\pi(1), 1}, \ldots, M_{\pi(B), 1})$
    \STATE {\bfseries return} $Y_1$
  \ENDFUNCTION
  \FUNCTION{$S_2( \{  M_{\tau, 2} \}_{\tau \in \cB})$}
    \STATE Generate a uniform permutation $\pi$ of indexes in $\cB$
    \STATE Set $Y_2 = (M_{\pi(1), 2}, \ldots, M_{\pi(B), 2})$
    \STATE {\bfseries return} $Y_2$
  \ENDFUNCTION
\end{algorithmic}
\end{algorithm}

\begin{algorithm}[!tb]
  \caption{Analyzer $\cA_{\text{Amp}}$ }
  \label{alg:Aamp}
\begin{algorithmic}[1]
    \STATE {\bfseries Input:} Shuffled outputs $ Y_1 = (M_{\pi(1), 1}, \ldots, M_{\pi(B), 1})$ and $Y_2 = (M_{\pi(1), 2}, \ldots, M_{\pi(B), 2})$
  \FUNCTION{$A_1( Y_1 )$}
    \STATE {\bfseries return} $\sum_{i=1}^B M_{\pi(i),1}$
  \ENDFUNCTION
  \FUNCTION{$A_1( Y_2  )$}
    \STATE {\bfseries return} $\sum_{i=1}^B M_{\pi(i),2}$
  \ENDFUNCTION
\end{algorithmic}
\end{algorithm}

\subsection{Main Results}
\begin{theorem}[Restatement of Theorem~\ref{thm:amp-main}]
\label{thm:amp}
Fix time horizon $T \in \Nat$, batch size $B \in [T]$, confidence level $\alpha \in (0,1]$, , and privacy budgets $\epsilon \in (0,\sqrt{\frac{{\log(2/\delta)}}{{B}}}]$, $\delta \in (0,1]$.  Then, Algorithm~\ref{alg:BOFUL} instantiated with shuffle protocol $\cP_{\text{Amp}}$ with noise levels $\sigma_1\!=\!\sigma_2\!=\! \frac{4\sqrt{2\log(2.5B/\delta) \log(2/\delta)}}{\epsilon\sqrt{B}}$, and regularizer $\lambda = \Theta(\sqrt{T}\sigma_1 (\sqrt{d}+ \sqrt{\log(T/B\alpha)})$, enjoys the regret
\begin{align*}
     \text{Reg}(T) \!=\! O\!\!\left(\!dB\log T \!+\! \frac{\log^{1/4}(B/\delta)\log^{1/4}(2/\delta)}{\epsilon^{1/2}B^{1/4}} d^{3/4} T^{3/4}\log T\log(T/\alpha)\!\!\right),
\end{align*}
with probability at least $1-\alpha$. Moreover, it satisfies $O(\epsilon,\delta)$-shuffle differential privacy (SDP). 
% Let Assumption~\ref{ass:bounded} hold. For any $\epsilon \in [0,1/\sqrt{B}]$, $\delta \in (0,1]$ and $B \in [T]$, let $\sigma_1 = \sigma_2 = \frac{4\sqrt{2\log(2.5B/\delta)}}{\epsilon\sqrt{B}}$. Then, $\cP_{\text{Amp}}$ is $O(\epsilon,\delta)$-SDP. Meanwhile, the corresponding regret satisfies 
% \begin{align*}
%      R(T) = \tilde{O}\left(dB + \frac{(\log(B/\delta))^{1/4}}{\sqrt{\epsilon}} d^{3/4} B^{-1/4} T^{3/4}\right)
% \end{align*}
\end{theorem}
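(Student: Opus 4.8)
The plan is to prove Theorem~\ref{thm:amp} (equivalently Theorem~\ref{thm:amp-main}) in two independent parts: a privacy part and a regret part. For the \textbf{privacy part}, I would argue batch-by-batch. Since users are unique, each user's data appears in exactly one batch, so by parallel composition it suffices to show each single batch satisfies $(O(\epsilon),\delta)$-SDP; then $\cM_{\cP}$ inherits the same guarantee. Within one batch of $B$ users, the local randomizer $\cR_{\text{Amp}}$ adds independent Gaussian noise of standard deviation $\sigma_1=\sigma_2$ to each coordinate of the vector $\phi(c_t,a_t)y_t\in\mathbb{R}^d$ and the (symmetrized) matrix $\phi(c_t,a_t)\phi(c_t,a_t)^\top$. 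Under Assumption~\ref{ass:bounded}, the $\ell_2$-sensitivity of each of these objects (as a function of one user's data) is $O(1)$, so the Gaussian mechanism with noise $\sigma' = \Theta(\sqrt{B}\sigma_1)$ would be $(\epsilon\sqrt{B},\delta)$-LDP per user by the standard analytic Gaussian mechanism bound. I would then invoke the privacy amplification-by-shuffling theorem of Feldman et al.~\cite{feldman2020hiding}: shuffling the outputs of $B$ users, each running an $\epsilon_0$-LDP (with $\epsilon_0 = \epsilon\sqrt{B}$ roughly) mechanism, yields a central $(\epsilon,\delta)$-DP guarantee with $\epsilon = O(\epsilon_0 \sqrt{\log(1/\delta)/B})$ — which, after plugging $\epsilon_0 = \epsilon\sqrt{B}$, recovers privacy loss $O(\epsilon)$. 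The chosen noise level $\sigma_1=\sigma_2 = \frac{4\sqrt{2\log(2.5B/\delta)\log(2/\delta)}}{\epsilon\sqrt{B}}$ is exactly what makes this chain of implications close, and the constraint $\epsilon\le\sqrt{\log(2/\delta)/B}$ is precisely the regime in which the amplification theorem applies. Finally, I would combine the $M=T/B$ batch guarantees — here trivially by parallel composition over disjoint user sets, not advanced composition — to conclude $(O(\epsilon),\delta)$-SDP for the whole algorithm.

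For the \textbf{regret part}, the strategy is to reduce to the already-established generic regret bound, Lemma~\ref{cor:subG} (the formal version of Lemma~\ref{lem:subG-main}). I need to verify that the noise injected by $\cP_{\text{Amp}}$ satisfies the sub-Gaussian private noise hypothesis, Assumption~\ref{ass:subG}, and then compute the effective variance parameter $\sigma^2$. In batch $m$, the analyzer simply sums the $B$ noisy messages, so $n_m = \sum_{t\in\text{batch }m} n^{(t)}$ where each $n^{(t)}\sim\mathcal{N}(0,\sigma_1^2 I_d)$, and hence $\sum_{m'=1}^m n_{m'}$ is a Gaussian vector with i.i.d.\ mean-zero coordinates of variance $t_m\sigma_1^2 \le T\sigma_1^2 =: \widetilde\sigma_1^2$; likewise $\sum_{m'=1}^m N_{m'}$ is a symmetric matrix whose on-and-above-diagonal entries are independent mean-zero Gaussians of variance at most $T\sigma_2^2 =: \widetilde\sigma_2^2$. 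Gaussians are sub-Gaussian with proxy equal to their variance, so Assumption~\ref{ass:subG} holds with $\sigma^2 = \max\{\widetilde\sigma_1^2,\widetilde\sigma_2^2\} = T\sigma_1^2 = \Theta\!\big(\frac{T\log(B/\delta)\log(1/\delta)}{\epsilon^2 B}\big)$. Plugging this $\sigma$ and the stated $\lambda = \Theta(\sqrt{T}\sigma_1(\sqrt d+\sqrt{\log(T/(B\alpha))})) = \Theta(\max\{1,\sigma(\sqrt d + \sqrt{\log(T/(B\alpha))})\})$ into Lemma~\ref{cor:subG} gives, with probability $\ge 1-\alpha$,
\begin{align*}
    \text{Reg}(T) = O\!\left(dB\log T + d\sqrt{T}\log(T/\alpha) + \sqrt{\sigma T}\,d^{3/4}\log T\log(T/\alpha)\right),
\end{align*}
and substituting $\sqrt{\sigma T} = \Theta\!\big(T^{3/4}\sqrt{\sigma_1}\big) = \Theta\!\big(\frac{\log^{1/4}(B/\delta)\log^{1/4}(1/\delta)}{\epsilon^{1/2}B^{1/4}}T^{3/4}\big)$ and absorbing the lower-order $d\sqrt T$ term into the last term (since $\sigma\ge 1$ forces $\sqrt{\sigma T}d^{3/4}\gtrsim d\sqrt T$ up to the $d^{1/4}$ factor, and in any case it is dominated for the relevant parameter ranges) yields exactly the claimed bound.

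The \textbf{main obstacle} is the privacy part — specifically, being careful about the exact form of the amplification-by-shuffling statement and ensuring the constants and $\delta$-dependence line up. Feldman et al.'s theorem has side conditions (an upper bound on $\epsilon_0$, and the resulting $\epsilon$ depends on $\delta$ through $\log(1/\delta)$), and one must track these through the substitution $\epsilon_0 \approx \epsilon\sqrt{B}$ to land on a clean $(O(\epsilon),\delta)$ conclusion valid precisely on the interval $\epsilon\in(0,\sqrt{\log(2/\delta)/B}]$; this is also where the $\log(2.5B/\delta)$ factor (from the analytic Gaussian mechanism at a per-batch failure probability split across $B$ users) enters $\sigma_1$. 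A secondary, more routine, technical point is confirming that summing per-round Gaussian noise over a batch and then over batches preserves the coordinate-wise independence and symmetry structure demanded by Assumption~\ref{ass:subG} — this is immediate for Gaussians but should be stated. The regret part itself is then essentially bookkeeping on top of Lemma~\ref{cor:subG}. I would also remark that the LDP and JDP side-claims in the remarks follow for free: the per-user mechanism is $(\epsilon\sqrt{B}/\Theta(\sqrt{\log(2/\delta)}),\delta/B)$-LDP by construction before shuffling, and SDP implies JDP since the shuffle trust model dominates the central one.
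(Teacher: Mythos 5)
Your proposal is correct and follows essentially the same route as the paper: per-batch Gaussian-mechanism LDP at level $\epsilon_0 \approx \epsilon\sqrt{B}/\sqrt{\log(2/\delta)}$ amplified to $O(\epsilon)$-SDP via the shuffling lemma of~\cite{feldman2020hiding} (with uniqueness of users giving the composition across batches for free), and regret by verifying Assumption~\ref{ass:subG} with $\sigma^2 = T\sigma_1^2$ and plugging into Lemma~\ref{cor:subG}. The only slip is the sentence asserting that noise $\sigma' = \Theta(\sqrt{B}\sigma_1)$ is $(\epsilon\sqrt{B},\delta)$-LDP --- increasing the noise \emph{decreases} the privacy loss, and it is the mechanism with noise $\sigma_1$ itself that is $(\Theta(\epsilon\sqrt{B}),\delta/B)$-LDP --- but since the amplification step is applied with the correct $\epsilon_0$, nothing downstream breaks.
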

% \begin{corollary}
% Under the same assumption in Theorem~\ref{thm:amp}. Let $B=1$, $\cP_{\text{Amp}}$ reduces to a standard $O(\epsilon,\delta)$-LDP protocol in the sequential setting with regret $\tilde{O}(\frac{\log(T/\delta)^{1/4}}{\sqrt{\epsilon}} d^{3/4}  T^{3/4})$
% \end{corollary}

\begin{corollary}[Utility-targeted]
\label{cor:amp_util}
Under the same assumption in Theorem~\ref{thm:amp} and Algorithm~\ref{alg:BOFUL} is instantiated with $\cP_{\text{Amp}}$. Let $B = O(d^{-1/5}\epsilon^{-2/5}T^{3/5}\log^{1/5}(T/\delta)\log^{1/5}(2/\delta)$, Algorithm~\ref{alg:BOFUL} achieves $O(\epsilon,\delta)$-SDP with regret 
\begin{align*}
    Reg(T) = \widetilde{O}\left(d^{4/5} T^{3/5}\epsilon^{-2/5}\log^{1/5}(T/\delta) \log^{1/5}(2/\delta)\right).
\end{align*}
Simultaneously, Algorithm~\ref{alg:BOFUL} also achieves ${O}(\epsilon,\delta)$-JDP and ${O}(\epsilon_0,\delta_0)$-LDP where 
\begin{align*}
    \epsilon_0 = O\left(\epsilon^{4/5} T^{3/10}d^{-1/10}\log^{1/10}(T/\delta) \log^{-2/5}(2/\delta)\right),\quad \delta_0 = O\left(\delta d^{1/5}T^{-3/5}\epsilon^{2/5}\log^{-1/5}(T/\delta)\log^{-1/5}(2/\delta)\right).
\end{align*}
\end{corollary}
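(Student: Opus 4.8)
The plan is to obtain Corollary~\ref{cor:amp_util} purely as a consequence of Theorem~\ref{thm:amp} by optimizing the free batch-size parameter $B$, and then reading off the induced SDP, JDP, and LDP guarantees at the optimal choice. First I would recall the two-term regret bound from Theorem~\ref{thm:amp}, whose $B$-dependent part is, up to the common factors $\log T$ and $\log(T/\alpha)$ absorbed into $\widetilde{O}(\cdot)$,
\begin{align*}
    dB \;+\; \frac{\log^{1/4}(B/\delta)\log^{1/4}(2/\delta)}{\epsilon^{1/2}B^{1/4}}\, d^{3/4} T^{3/4}.
\end{align*}
The first term is increasing in $B$ while the second decreases like $B^{-1/4}$, so the bound is minimized up to constants by equating the two. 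The only subtlety is that $B$ also appears inside a logarithm in the second term; since the framework enforces $B \in [T]$, I would bound $\log(B/\delta) = O(\log(T/\delta))$ and treat this factor as $B$-independent during the balancing, which is lossless at the level of $\widetilde{O}(\cdot)$.

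Second, carrying out the balance $dB = \epsilon^{-1/2}B^{-1/4} d^{3/4}T^{3/4}\log^{1/4}(T/\delta)\log^{1/4}(2/\delta)$ gives $B^{5/4} = \epsilon^{-1/2} d^{-1/4} T^{3/4}\log^{1/4}(T/\delta)\log^{1/4}(2/\delta)$, i.e.\ the stated choice
\begin{align*}
    B = \Theta\!\left(d^{-1/5}\epsilon^{-2/5}T^{3/5}\log^{1/5}(T/\delta)\log^{1/5}(2/\delta)\right).
\end{align*}
Substituting this value back into either term yields $dB = \widetilde{O}\big(d^{4/5}T^{3/5}\epsilon^{-2/5}\log^{1/5}(T/\delta)\log^{1/5}(2/\delta)\big)$, which is exactly the claimed regret; I would also check that the $dB\log T$ term and the $\sqrt{\sigma T}d^{3/4}$ term of Lemma~\ref{cor:subG} coincide at this $B$, so that no single term dominates and the substitution is tight.

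Third, for privacy I would invoke the guarantees already attached to $\cP_{\text{Amp}}$ rather than re-deriving them. The $O(\epsilon,\delta)$-SDP statement is inherited directly from Theorem~\ref{thm:amp}, valid for any admissible $B$; here I only need to verify that the chosen $B$ lies in the admissible regime $\epsilon \le \sqrt{\log(2/\delta)/B}$ and satisfies $B \le T$, which together merely restrict $(\epsilon,\delta,T,d)$ to the regime already assumed in the corollary's hypotheses. The $O(\epsilon,\delta)$-JDP guarantee then follows immediately because the shuffle trust model is strictly stronger than the central model, so any $(\epsilon,\delta)$-SDP protocol is in particular $(\epsilon,\delta)$-JDP. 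For the LDP guarantee, I would read off the local randomizer's privacy level implied by the amplification relation used to prove Theorem~\ref{thm:amp}: since an $\epsilon_0$-LDP randomizer amplifies to roughly $\epsilon_0\sqrt{\log(2/\delta)/B}$-SDP after shuffling $B$ users, inverting gives $\epsilon_0 = O\big(\epsilon\sqrt{B/\log(2/\delta)}\big)$ and $\delta_0 = O(\delta/B)$; plugging in the above $B$ reproduces the stated $\epsilon_0$ and $\delta_0$ after elementary exponent arithmetic.

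The computations themselves are routine; the main thing to get right is the bookkeeping of logarithmic factors in the balancing step and the internal consistency of the regime constraints. Concretely, I expect the only genuine obstacle to be confirming that the $B$-optimal choice simultaneously respects $B \in [T]$ and $\epsilon \le \sqrt{\log(2/\delta)/B}$ --- substituting $B$ into the latter turns it into an explicit upper bound $\epsilon^{8/5} \lesssim d^{1/5}\log^{4/5}(2/\delta)/\big(T^{3/5}\log^{1/5}(T/\delta)\big)$, which delineates exactly the regime in which the corollary's three guarantees hold simultaneously.
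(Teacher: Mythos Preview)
Your proposal is correct and follows essentially the same route as the paper: balance the two terms of Theorem~\ref{thm:amp} to pick $B$, inherit the SDP guarantee, and read off the LDP parameters $\epsilon_0 = \epsilon\sqrt{B/\log(2/\delta)}$, $\delta_0 = \delta/B$ from the amplification relation. The one place where the paper is slightly more precise is the JDP step: rather than appealing to the informal trust-model hierarchy, the paper invokes the \emph{Billboard lemma} (Lemma~9 in \cite{hsu2016private}), which states that JDP holds whenever each user's action is a function of her own private data and a common signal computed in a DP way --- here the common signal is $(\hat{\theta}_m, V_m)$, a post-processing of the shuffler output. Your intuition that ``SDP implies JDP'' is correct for this algorithm, but it is this lemma that makes the implication rigorous.
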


\begin{corollary}[Privacy-targeted]
\label{cor:amp_priv}
Let Assumption~\ref{ass:bounded} hold and Algorithm~\ref{alg:BOFUL} is instantiated with $\cP_{\text{Amp}}$. For any $\epsilon_0 \in [0,1]$ and $\delta_0 \in (0,1]$, let $\sigma_1 = \sigma_2 = \frac{4\sqrt{2\log(2.5/\delta_0)}}{\epsilon_0}$. Then, for all $B \in [T]$, Algorithm~\ref{alg:BOFUL} is $(\epsilon_0,\delta_0)$-LDP. Further suppose $B = O(d^{-1/4}T^{3/4}\epsilon_0^{-1/2}\log^{1/4}(1/\delta_0))$, then 
 Algorithm~\ref{alg:BOFUL} achieves regret  
\begin{align*}
    Reg(T) = \tilde{O}\left(d^{3/4}T^{3/4}\epsilon_0^{-1/2}\log^{1/4}(1/\delta_0)\right).
\end{align*}
Simultaneously, Algorithm~\ref{alg:BOFUL} achieves $O(\epsilon,\delta)$-SDP and $O(\epsilon,\delta)$-JDP where 
\begin{align*}
    \epsilon = O\left(\epsilon_0^{5/4}T^{-3/8} d^{1/8} \log^{3/8}(1/\delta_0)\right), \quad \delta = O(\delta_0 d^{-1/4}T^{3/4}\epsilon_0^{-1/2}\log^{1/4}(1/\delta_0)).
\end{align*}
\end{corollary}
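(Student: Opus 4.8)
The plan is to establish the three assertions of Corollary~\ref{cor:amp_priv} in turn: (i) the per-user $(\epsilon_0,\delta_0)$-LDP guarantee, which holds for every $B$; (ii) the regret bound once $B$ is taken as large as the stated budget allows; and (iii) the amplified $(\epsilon,\delta)$-SDP and $(\epsilon,\delta)$-JDP guarantees. For (i), note that the randomizer $\cR_{\text{Amp}}$ (Algorithm~\ref{alg:Ramp}) releases two objects per user: a Gaussian-perturbed vector $\phi(c,a)y + \cN(0,\sigma_2^2 I_d)$ and a Gaussian-perturbed, symmetrized matrix $\phi(c,a)\phi(c,a)^\top + N$. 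Under Assumption~\ref{ass:bounded} the $\ell_2$-sensitivity of the vector and the Frobenius-sensitivity of the matrix are each at most $2$ (triangle inequality on two quantities of norm $\le 1$). Applying the standard Gaussian-mechanism guarantee to each object with a $(\epsilon_0/2,\delta_0/2)$ budget split gives exactly the stated $\sigma_1=\sigma_2=\frac{4\sqrt{2\log(2.5/\delta_0)}}{\epsilon_0}$, and basic composition of the two then yields $(\epsilon_0,\delta_0)$-LDP of the released pair for every user and every $B$.

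For (ii), I would feed the noise into Lemma~\ref{cor:subG} (the formal version of Lemma~\ref{lem:subG-main}). Each batch contributes a sum of $B$ i.i.d.\ per-user Gaussian perturbations, so the accumulated noise $\sum_{m'\le m} N_{m'}$ (resp.\ $\sum_{m'\le m} n_{m'}$) has independent, mean-zero, (sub-)Gaussian entries of variance at most $mB\sigma_1^2 \le T\sigma_1^2 = \Theta(T\log(1/\delta_0)/\epsilon_0^2)$; hence $\sigma = \Theta(\sqrt{T\log(1/\delta_0)}/\epsilon_0)$, which is crucially independent of $B$. Substituting this $\sigma$ into the bound of Lemma~\ref{cor:subG} makes the $\sqrt{\sigma T}\,d^{3/4}$ term equal to $\widetilde O(d^{3/4}T^{3/4}\epsilon_0^{-1/2}\log^{1/4}(1/\delta_0))$, while the $dB\log T$ term is no larger than this precisely when $B = O(d^{-1/4}T^{3/4}\epsilon_0^{-1/2}\log^{1/4}(1/\delta_0))$ — which is exactly why this batch size is chosen. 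The remaining $d\sqrt{T}$ term is lower order, so the total regret is $\widetilde O(d^{3/4}T^{3/4}\epsilon_0^{-1/2}\log^{1/4}(1/\delta_0))$, matching the pure-LDP baseline while updating the model only $O(T/B)$ times.

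For (iii), since users are unique, any neighbouring change touches exactly one batch: the batches before it are unaffected and the batches after it are a randomized post-processing of that batch's shuffled output (through the broadcast model), so the composite mechanism $\cM_\cP$ inherits the single-batch SDP guarantee. For a single batch I would invoke privacy amplification by shuffling~\cite{feldman2020hiding} on the $(\epsilon_0,\delta_0)$-LDP randomizer over $B$ users: for $\epsilon_0\le 1$ and $B$ large enough relative to $\log(1/\delta)$ this gives $(\epsilon,\delta)$-SDP with $\epsilon = O(\epsilon_0\sqrt{\log(1/\delta)/B})$ and collected $\delta = O(B\delta_0)$; a minor subtlety is that $\cS_{\text{Amp}}$ permutes the vector- and matrix-messages with possibly independent permutations, but an independent re-permutation of one coordinate of a tuple is a post-processing of the jointly-shuffled tuple, so amplification on the composed randomizer still applies. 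Taking $\delta = \Theta(B\delta_0) = \Theta(\delta_0 d^{-1/4}T^{3/4}\epsilon_0^{-1/2}\log^{1/4}(1/\delta_0))$ and $B = \Theta(d^{-1/4}T^{3/4}\epsilon_0^{-1/2}\log^{1/4}(1/\delta_0))$, and using $\log(1/\delta)\le\log(1/\delta_0)$, yields $\epsilon = O(\epsilon_0^{5/4}T^{-3/8}d^{1/8}\log^{3/8}(1/\delta_0))$. Finally, JDP follows from SDP: the model $(\hat\theta_m,V_m)$ broadcast to users is a post-processing of the DP shuffler output, and each user's action depends only on this common "billboard" and her own context, so a billboard-lemma argument (Appendix~\ref{app:JDP}) gives $O(\epsilon,\delta)$-JDP.

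The hard part will be bookkeeping rather than conceptual: checking that the amplification lemma of~\cite{feldman2020hiding} is actually in force for the chosen $B$ (its hypothesis constrains $\epsilon_0$ in terms of $B$ and $\log(1/\delta)$, which needs $T$ suitably large), tracking the poly-log slack absorbed into $\delta$ so that $\log(1/\delta)$ may be replaced by $\log(1/\delta_0)$ without changing the order, and confirming the sensitivity constants so that the stated $\sigma_i$ is exactly what the Gaussian mechanism requires. Everything else is a direct substitution into Lemma~\ref{cor:subG} and the amplification/billboard machinery already developed in the paper.
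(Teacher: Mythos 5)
Your proposal is correct and follows essentially the same route as the paper: the paper simply substitutes $\epsilon=\epsilon_0\sqrt{\log(2/\delta)}/\sqrt{B}$, $\delta=\delta_0 B$ into Theorem~\ref{thm:amp} and balances $B$, which unrolls to exactly your chain of Gaussian mechanism $\to$ Lemma~\ref{cor:subG} $\to$ amplification-by-shuffling $\to$ billboard lemma. Your noise-level computation $\sigma=\Theta(\sqrt{T\log(1/\delta_0)}/\epsilon_0)$ and the resulting exponents all check out against the stated bounds.
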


\subsection{Proofs}

To prove Theorem~\ref{thm:amp}, we need the following important lemma, which can be seen as a special case of Theorem 3.8 in~\cite{feldman2020hiding}. In particular, in our paper,  we consider a fixed local randomizer rather than the more general adaptive one in~\cite{feldman2020hiding}. Another difference is that we consider the case of \emph{randomizer-then-shuffle} rather than the \emph{shuffle-then-randomizer}. However, as pointed in~\cite{feldman2020hiding}, the two cases are equivalent when the local randomizer is a fixed one. 

\begin{lemma}[Amplification by shuffling]
\label{lem:shuffle_amp}
Consider a one-round protocol $\cP=(\cR, \cS, \cA)$ over $n$ users. 
Let $\cR$ be an $(\epsilon_0,\delta_0)$-LDP mechanism. Then, for any $\delta' \in [0,1]$ such that $\epsilon_0 \le \log(\frac{n}{16\log(2/\delta')})$,  $\cP$ is $(\widetilde{\epsilon},\widetilde{\delta})$-SDP, i.e., the analyzer's view is $(\widetilde{\epsilon},\widetilde{\delta})$-DP, where 
\begin{align*}
    \widetilde{\epsilon} \le \log\left(1+ \frac{e^{\epsilon_0}-1}{e^{\epsilon_0}+1} \left(\frac{8\sqrt{e^{\epsilon_0}\log(4/\delta')}}{\sqrt{n}}  + \frac{8e^{\epsilon_0}}{n}\right)\right), \widetilde{\delta} = \delta' + (e^{\epsilon} + 1)\left(1+\frac{e^{-\epsilon_0}}{2}\right)n\delta_0,
\end{align*}
That is, when $\epsilon_0 > 1$, $\widetilde{\epsilon} = O\left(\frac{\sqrt{e^{\epsilon_0}\log(1/\delta') }}{\sqrt{n}}\right)$ and when $\epsilon_0 \le 1$, $\widetilde{\epsilon} = O\left(\epsilon_0\frac{\sqrt{\log(1/\delta')}}{\sqrt{n}}\right)$.
\end{lemma}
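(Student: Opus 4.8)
The fastest route is to recognize that the claim is a direct instantiation of the amplification-by-shuffling bound of \cite{feldman2020hiding} (their Theorem 3.8): take the number of reports to be $n$, let the (now fixed) local randomizer be the given $(\epsilon_0,\delta_0)$-LDP mechanism $\cR$, and read off $\widetilde\epsilon,\widetilde\delta$. Two structural mismatches must be checked, and both are benign. First, \cite{feldman2020hiding} permits the $i$-th randomizer to depend on earlier outputs, whereas here $\cR$ is the same map for all users; this is the \emph{easier} special case, so their bounds apply verbatim. Second, they analyze \emph{shuffle-then-randomize} while Algorithm~\ref{alg:BOFUL} uses \emph{randomize-then-shuffle}; but for a single fixed randomizer the two orderings induce the same distribution on the analyzer's view (a uniform permutation of messages that, conditioned on the data, are independent), as noted in \cite{feldman2020hiding}, so the SDP guarantee transfers. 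The final $O(\cdot)$ simplifications follow from $\frac{e^{\epsilon_0}-1}{e^{\epsilon_0}+1}=\tanh(\epsilon_0/2)\le\min\{1,\epsilon_0/2\}$ and $\log(1+x)\le x$.

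For a self-contained argument I would reproduce the ``clones''/privacy-blanket reduction. Fix two neighboring datasets differing only in the last user's input, $x_n$ versus $x_n'$, condition on the other $n-1$ inputs, and set $\rho:=\tfrac12(\cR(x_n)+\cR(x_n'))$. Handle $\delta_0=0$ first: by $\epsilon_0$-DP of $\cR$, for every user $i<n$ and every event $S$ one has $\cR(x_i)(S)\ge e^{-\epsilon_0}\rho(S)$, so $\cR(x_i)$ decomposes as a mixture that, with probability $q=e^{-\epsilon_0}$, emits a fresh sample from the \emph{data-independent} blanket $\rho$, and otherwise emits a leftover distribution. Let $C\sim\mathrm{Bin}(n-1,e^{-\epsilon_0})$ count the users that ``fall into the blanket.'' Conditioned on $C$, the shuffler's output is a uniform permutation of $C$ i.i.d.\ draws from $\rho$ together with one ``special'' message drawn from $P:=\cR(x_n)$ (first dataset) or $Q:=\cR(x_n')$ (second dataset), with the permutation destroying the identity of the special slot.

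It then remains to bound the privacy loss of this reduced experiment. Since $\rho=\tfrac12(P+Q)$, on the first dataset the $C+1$ blanket-or-special messages behave like $\tfrac{C}{2}+1$ draws biased toward $P$ and $\tfrac{C}{2}$ toward $Q$, and symmetrically on the second; after shuffling the analyzer sees only the induced counts, i.e.\ a \emph{balanced binomial shifted by one}. A standard binomial-tail computation shows that, outside an event of probability $\delta'$ (which absorbs both the Chernoff bound $C\gtrsim e^{-\epsilon_0}n$ — valid precisely under the hypothesis $\epsilon_0\le\log\frac{n}{16\log(2/\delta')}$ — and the $\pm O(\sqrt{C\log(1/\delta')})$ boundary of the binomial), the likelihood ratio is at most $1+\frac{e^{\epsilon_0}-1}{e^{\epsilon_0}+1}\,O\!\big(\sqrt{\log(1/\delta')/C}+1/C\big)$; substituting $C\approx e^{-\epsilon_0}n$ yields the stated $\widetilde\epsilon$. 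To upgrade from $(\epsilon_0,0)$ to $(\epsilon_0,\delta_0)$, write each $(\epsilon_0,\delta_0)$-LDP output $\cR(x_i)$ as an $\epsilon_0$-DP distribution plus a mass-$\le\delta_0$ defect, union-bound the $n$ defects, and propagate them through the group-privacy bookkeeping, which contributes the extra $(e^{\widetilde\epsilon}+1)(1+e^{-\epsilon_0}/2)\,n\delta_0$ term in $\widetilde\delta$.

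I expect the main obstacle, in the self-contained version, to be the privacy analysis of the reduced binomial experiment with sharp constants — in particular, isolating the $\frac{e^{\epsilon_0}-1}{e^{\epsilon_0}+1}$ factor for \emph{general} $P,Q$ (not merely point masses) and choosing the $\delta'$-failure event so that the surviving likelihood ratio is uniformly bounded. Since this is precisely the content of \cite{feldman2020hiding}, in the paper I would simply invoke their Theorem 3.8 and confine the proof to verifying the two structural differences noted above.
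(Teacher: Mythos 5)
Your proposal takes essentially the same route as the paper: the lemma is proved by invoking Theorem 3.8 of \cite{feldman2020hiding} for a fixed randomizer and noting that the randomize-then-shuffle and shuffle-then-randomize orderings coincide in that case, which is exactly the two-point check you perform. Your additional self-contained ``clones''/privacy-blanket sketch is a faithful outline of the argument underlying that cited theorem, but the paper itself does not reproduce it and simply cites the result as you do in your first paragraph.
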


Roughly speaking, we have a privacy amplification by a factor $\sqrt{n}$ due to shuffling, which is the key to our analysis.

\begin{proof}[Proof of Theorem~\ref{thm:amp}]
To apply Lemma~\ref{lem:shuffle_amp}, we choose $\delta' = \delta/2$ and $\epsilon_0 = \frac{\epsilon \sqrt{B}}{\sqrt{\log(1/\delta')}} = \frac{\epsilon \sqrt{B}}{\sqrt{\log(2/\delta)}}$. For any $\epsilon \in (0,\sqrt{\log(2/\delta)}/\sqrt{B}]$, we have $\epsilon_0 \le 1$, which implies $\widetilde{\epsilon} = O(\epsilon)$. Meanwhile, we let $\delta_0 = \delta / B$ for any $\delta \in [0,1]$, which implies that $\widetilde{\delta} = O(\delta)$. Now, we are only left to choose $\sigma_1$ and $\sigma_2$ in $\cR_{\text{Amp}}$ so that it is $(\epsilon_0,\delta_0)$-LDP. To this end, via the standard Gaussian mechanism and boundedness assumption, we have when 
\begin{align*}
    \sigma_1 = \sigma_2 = \frac{4\sqrt{2\log(2.5/\delta_0)}}{\epsilon_0},
\end{align*}
$\cR_{\text{Amp}}$ is $(\epsilon_0,\delta_0)$-LDP. Finally, plugging in $\epsilon_0 = \epsilon \sqrt{B}/ \sqrt{\log(2/\delta)}$ and $\delta_0 = \delta/B$, yields 
\begin{align*}
    \sigma_1 = \sigma_2 = \frac{4\sqrt{2\log(2.5B/\delta)\log(2/\delta)} }{\epsilon\sqrt{B}}.
\end{align*}
Finally, plugging the value $\sigma = \frac{4\sqrt{2T\log(2.5B/\delta)\log(2/\delta)}}{\epsilon\sqrt{B}}$ (since there are total at most $T$ noise) into the regret bound in Lemma~\ref{cor:subG} yields the required results.
\end{proof}

\begin{proof}[Proof of Corollary~\ref{cor:amp_util}]
To establish the regret bound, we simply choose a balanced $B$ in the regret bound given by Theorem~\ref{thm:amp}. To prove the JDP guarantee, we will use the powerful \emph{Billboard lemma} (cf. Lemma 9 in~\cite{hsu2016private}), which says that an algorithm is JDP if the action recommended to each user is a function of her private data and a common signal computed in a differential private way. In our case, the private data is user's context and the common signal is the updated policy (i.e., $\hat{\theta}_m$ and design matrix $V_m$), which is a post-processing of shuffle outputs. Thus, the SDP guarantee directly implies the JDP guarantee in our case. Finally, the LDP guarantee simply follows from the standard Gaussian mechanism with parameter $\epsilon_0 = \epsilon \sqrt{B} /\sqrt{\log(2/\delta)}$ and $\delta_0 = \delta/B$.
\end{proof}

\begin{proof}[Proof of Corollary~\ref{cor:amp_priv}]
The LDP guarantee follows from standard Gaussian mechanism. To show the regret bound, we will use the result in Theorem~\ref{thm:amp}. In particular, comparing the values of $\sigma_1,\sigma_2$ in Corollary~\ref{cor:amp_priv} and the values in Theorem~\ref{thm:amp}, we can plug $\epsilon = \frac{\epsilon_0\sqrt{\log(2/\delta)}}{\sqrt{B}}$ and $\delta = \delta_0 B$ into the regret bound in Theorem~\ref{thm:amp}. Then, with a balanced choice of $B$, we obtain the required regret. The SDP guarantee also follows from Theorem~\ref{thm:amp} with $\epsilon = \frac{\epsilon_0\sqrt{\log(2/\delta)}}{\sqrt{B}}$ and $\delta = \delta_0 B$. Finally, as in the proof of Corollary~\ref{cor:amp_util}, the JDP guarantee follows from SDP guarantee and Billboard lemma.
\end{proof}

\section{Analysis of Vector Summation Protocol}\label{sec:app_vec}

\subsection{Pseudocode of $\cP_{\text{Vec}}$}
The shuffle protocol is given by $\mathcal{P}_{\text{Vec}} \!=\! (\cR_{\text{Vec}}, \cS_{\text{Vec}}, \cA_{\text{Vec}})$, in which $\cR_{\text{Vec}}$ is presented in Algorithm~\ref{alg:Rvec}, $\cS_{\text{Vec}}$ is presented in Algorithm~\ref{alg:Svec}, and $\cA_{\text{Vec}}$ is presented in Algorithm~\ref{alg:Avec}. Note that the original algorithm for the analyzer in~\cite{cheu2021shuffle} has a small issue in the de-bias process (cf. Algorithm 2 in~\cite{cheu2021shuffle}). In particular, instead of subtracting the norm $\Delta$, one needs to subtract $B \cdot \Delta$, see Lines 11 and 19 in Algorithm~\ref{alg:Avec}. Here, $B$ corresponds to $n$ in Algorithm 2 of~\cite{cheu2021shuffle}.
\begin{algorithm}[tb]
  \caption{Local Randomizer $\cR_{\text{Vec}}$ }
  \label{alg:Rvec}
\begin{algorithmic}[1]
  \STATE {\bfseries Parameters:} $g, b, p, d$
%   \STATE {\bfseries Initialize:} 
  \STATE \textcolor{gray}{\# Local randomizer for a scalar within $[0,\Delta]$}
  \FUNCTION{$\cR^*(x,\Delta)$}
    \STATE Set $\bar{x} = \lfloor{x g/\Delta}\rfloor$
    \STATE Sample rounding value $\gamma_1 \sim \textbf{Ber}(xg/\Delta -\bar{x})$
    \STATE Set $\hat{x} = \bar{x} + \gamma_1$
    \STATE Sample binomial noise $\gamma_2 \sim \textbf{Bin}(b,p)$
    \STATE Set $m$ be a multi-set containing $\hat{x} + \gamma_2$ copies of $1$ and $(g+b) - (\hat{x} + \gamma_2)$ copies of $0$. 
     \STATE {\bfseries return} $m$
  \ENDFUNCTION
   \FUNCTION{$R_1(\phi(c,a)y)$}
     \STATE Set $\Delta_1 = 1$
    \FOR{each coordinate $k \in [d]$}
        \STATE Shift data $w_k = [\phi(c,a)y]_k + \Delta_1$
        \STATE Run the scalar randomizer $m_k = \cR^*(w_k,\Delta_1)$
    \ENDFOR
    \STATE \textcolor{gray}{\# Labeled outputs (all bits in $m_k$ are labeled by $k$)}
    \STATE $M_1 = \{(k, m_k)\}_{k \in [d]}$ 
    \STATE {\bfseries return} $M_1$
  \ENDFUNCTION
  \FUNCTION{$R_2(\phi(c,a)\phi(c,a)^\top)$}
  \STATE Set $\Delta_2 = 1$
    \FOR{all $ i\le j \le d $}
        \STATE Shift data $w_{(i,j)} = [\phi(c,a)\phi(c,a)^\top]_{(i,j)} + \Delta_2$
        \STATE Run the scalar randomizer to obtain $m_{(i,j)} = \cR^*(w_{(i,j)}, \Delta_2)$ and $m_{(j,i)} = m_{(i,j)}$
    \ENDFOR
    \STATE \textcolor{gray}{\# Labeled outputs}
    \STATE $M_2 = \{((i,j), m_{(i,j)})\}_{(i,j) \in [d] \times [d]}$
    \STATE {\bfseries return} $M_2$
  \ENDFUNCTION
 
\end{algorithmic}
\end{algorithm}

\begin{algorithm}[!tb]
  \caption{Shuffler $\cS_{\text{Vec}}$ }
  \label{alg:Svec}
\begin{algorithmic}[1]
    \STATE {\bfseries Input:} $\{  M_{\tau, 1} \}_{\tau \in \cB}$ and $\{  M_{\tau, 2} \}_{\tau \in \cB}$, in which $\cB$ is a batch of users. $M_{\tau,1} = \{(k, m_k)\}_{k \in [d]}$ and $M_2 = \{((i,j), m_{(i,j)})\}_{(i,j) \in [d] \times [d]}$ are labeled data of user $\tau$
  \FUNCTION{$S_1( \{  M_{\tau, 1} \}_{\tau \in \cB})$}
   \STATE Uniformly permutes all messages, i.e., a total of $(g+b)\cdot B\cdot d$ bits
   \STATE Set $y_k$ be the collection of bits labeled by $k \in [d]$
   \STATE Set $Y_1 = \{y_1,\ldots, y_d\}$
    \STATE {\bfseries return} $Y_1$   
  \ENDFUNCTION
  \FUNCTION{$S_2( \{  M_{\tau, 2} \}_{\tau \in \cB})$}
    \STATE Uniformly permutes all messages, i.e., a total of $(g+b)\cdot B\cdot d^2$ bits
   \STATE Set $y_{(i,j)}$ be the collection of bits labeled by $(i,j) \in [d] \times [d]$
   \STATE Set $Y_2 = \{y_{(i,j)}\}_{(i,j) \in [d] \times d }$
    \STATE {\bfseries return} $Y_2$   
  \ENDFUNCTION
\end{algorithmic}
\end{algorithm}

\begin{algorithm}[tb]
  \caption{Analyzer $\cA_{\text{Vec}}$ }
  \label{alg:Avec}
\begin{algorithmic}[1]
    \STATE {\bfseries Input:} Shuffled outputs $ Y_1 = \{y_k\}_{k \in [d]}$ and $Y_2 = \{y_{(i,j)}\}_{(i,j) \in [d] \times d }$
    \STATE {\bfseries Initialize:} $g, b, p$
    \STATE \textcolor{gray}{\# Analyzer for a collection $y$ of $(g+b)\cdot B$ bits using $\Delta$}
  \FUNCTION{$\cA^*(y,\Delta)$}
     \STATE {\bfseries return} $\frac{\Delta}{g}\left( (\sum_{i=1}^{(g+b) \cdot B} y_i) - p\cdot b \cdot B\right)$
  \ENDFUNCTION
  \FUNCTION{$A_1( Y_1 )$}
    \STATE $\Delta_1 = 1$
    \FOR{each coordinate $k \in [d]$}
        \STATE Run analyzer on $k$-th labeled data to obtain $z_k = \cA^*(y_k,\Delta_1)$
        \STATE Re-center: $o_k = z_k - B \cdot \Delta_1$
    \ENDFOR
    \STATE {\bfseries return} $\{o_1,\ldots, o_k\}$
  \ENDFUNCTION
  \FUNCTION{$A_2( Y_2 )$}
    \STATE $\Delta_2 = 1$
    \FOR{all $ i\le j \le d $}
        \STATE Run analyzer on $(i,j)$-th labeled data to obtain $z_{(i,j)} = \cA^*(y_{(i,j)},\Delta_2)$
        \STATE Re-center: $o_{(i,j)} = z_{(i,j)} - B \cdot \Delta_2$ and $o_{(j,i)} = o_{(i,j)}$
    \ENDFOR
    \STATE {\bfseries return} $\{o_{(i,j)}\}_{(i,j) \in [d] \times [d]}$
  \ENDFUNCTION
\end{algorithmic}
\end{algorithm}

\subsection{Main Results}
\begin{theorem}[Restatement of Theorem~\ref{thm:vec-main}]
\label{thm:vec}
Fix batch size $B \!\in\! [T]$, privacy budgets $\epsilon \!\in\! (0,15]$, $\delta \!\in\! (0,1/2)$.
Then, Algorithm~\ref{alg:BOFUL} instantiated with $\cP_{\text{Vec}}$ with parameters $p \!=\! 1/4$, $g \!=\! \max\{2\sqrt{B}, d, 4\}$ and $b \!=\! \frac{24\cdot 10^4\cdot g^2\cdot \log^2\left(4 (d^2+1)/\delta\right)}{\epsilon^2B}$ is $(\epsilon,\delta)$-SDP. 
% there are choices of parameters $g,b \!\in\! \Nat$ and $p \!\in\! (0,1/2)$ depending on $B,\epsilon,\delta$ and feature dimension $d$ such that Algorithm~\ref{alg:BOFUL} instantiated with $\cP_{\text{Vec}}$ is $(\epsilon,\delta)$-SDP. 
 %  let $p = 1/4$,
% \begin{align*}
%     g = \max\{2\sqrt{B}, d, 4\},  b = \frac{24\cdot 10^4\cdot g^2\cdot \left(\log\left(\frac{4\cdot(d^2+1)}{\delta}\right)\right)^2}{\epsilon^2B}.
% \end{align*}
Furthermore, for any $\alpha \!\in\! (0,1]$, setting $\lambda \!=\! \Theta\!\left(\! \frac{\log(d^2/\delta)\sqrt{T}}{\epsilon\sqrt{B}} (\!\sqrt{d}\!+\! \sqrt{\log(T/(B\alpha))}\!\right)$, it enjoys the regret 
\begin{align*}
     \text{Reg}(T) \!=\! O\!\left(dB\log T \!+\! \frac{\log^{1/2}(d^2/\delta)}{\epsilon^{1/2}B^{1/4}} d^{3/4} T^{3/4}\log T\log(T/\alpha)\right)\!,
\end{align*}
with probability at least $1-\alpha$.
\end{theorem}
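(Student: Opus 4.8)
The plan is to handle the SDP claim and the regret claim separately, reducing the latter to the generic bound of Lemma~\ref{cor:subG}. For privacy, I would first note that under Assumption~\ref{ass:bounded} every coordinate of $\phi(c_t,a_t)y_t$ and every entry of $\phi(c_t,a_t)\phi(c_t,a_t)^\top$ lies in a bounded interval of length $O(1)$, so after the affine encoding in $\cR_{\text{Vec}}$ each scalar fed to the one-dimensional randomizer $\cR^*$ lies in $[0,1]$ with $\ell_1$-sensitivity $O(1)$. I would then invoke the privacy guarantee of the underlying vector-summation shuffle mechanism of~\cite{cheu2021shuffle}: with $n=B$ users in a batch, granularity $g=\max\{2\sqrt B,d,4\}$, $p=1/4$, and the binomial parameter $b$ as in the statement---the constant $C=24\cdot 10^4$ and the $\log^2(4(d^2+1)/\delta)$ factor being exactly what that lemma requires once one union-bounds over the at most $d^2+d$ encoded coordinates and splits $\delta$---the single-batch mechanism $\cS_{\text{Vec}}\circ\cR_{\text{Vec}}^{B}$ is $(\epsilon,\delta)$-DP for every $\epsilon\in(0,15]$, $\delta\in(0,1/2)$. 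Since users are unique, the $M$ batches act on disjoint sub-datasets, so the composite mechanism $\cM_{\cP_{\text{Vec}}}$ is $(\epsilon,\delta)$-DP by parallel composition; as the broadcast policies $(\hat\theta_m,V_m)$ are post-processings of the shuffler outputs, Algorithm~\ref{alg:BOFUL} instantiated with $\cP_{\text{Vec}}$ is $(\epsilon,\delta)$-SDP.

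\textbf{Regret.} For the regret bound I would verify that $\cP_{\text{Vec}}$ satisfies Assumption~\ref{ass:subG}. First, unbiasedness: the rounding step $\hat x=\bar x+\gamma_1$ with $\gamma_1\sim\texttt{Ber}(xg-\bar x)$ satisfies $\expect{\hat x}=xg$, and $\cA^*$ subtracts the expected binomial mass $p\cdot b\cdot B$ while the re-centering step removes the $B\Delta$ shift (precisely the corrected de-bias flagged in the pseudocode of Algorithm~\ref{alg:Avec}); hence, conditioned on the contexts, actions and rewards in batch $m$, the per-batch noises $n_m=\widetilde u_m-\sum_t\phi(c_t,a_t)y_t$ and $N_m=\widetilde V_m-\sum_t\phi(c_t,a_t)\phi(c_t,a_t)^\top$ are mean zero. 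Second, each coordinate of $n_m$ (and each on-or-above-diagonal entry of $N_m$) is a sum of $B$ independent bounded rounding errors, of total variance $\le B/(4g^2)\le 1/16$, plus a scaled centered binomial $\frac1g\sum_{i=1}^B(\texttt{Bin}(b,p)-pb)$ of variance $Bbp(1-p)/g^2=O(\log^2(d^2/\delta)/\epsilon^2)$; being bounded sums of independent terms these are sub-Gaussian, and fresh randomness across the $M=T/B$ batches is independent, so $\sum_{m'\le m}n_{m'}$ and $\sum_{m'\le m}N_{m'}$ have independent mean-zero sub-Gaussian entries with variance at most $\sigma^2=O\!\big(\tfrac{T\log^2(d^2/\delta)}{\epsilon^2 B}\big)$. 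Plugging $\sigma=O\!\big(\tfrac{\sqrt T\log(d^2/\delta)}{\epsilon\sqrt B}\big)$ into Lemma~\ref{cor:subG} yields the prescribed regularizer $\lambda=\Theta(\sigma(\sqrt d+\sqrt{\log(T/(B\alpha))}))$ and, after simplifying $\sqrt{\sigma T}=O\!\big(\tfrac{T^{3/4}\log^{1/2}(d^2/\delta)}{\epsilon^{1/2}B^{1/4}}\big)$, the claimed high-probability regret.

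\textbf{Main obstacle.} I expect the delicate part to be the privacy step: correctly instantiating the scalar/vector summation shuffle lemma of~\cite{cheu2021shuffle} with these particular constants, tracking the $O(d^2)$ encoded coordinates, the factor-$B$ correction in the de-bias step, and the regime $\epsilon\le 15$, rather than the regret computation, which is essentially a substitution into Lemma~\ref{cor:subG} once Assumption~\ref{ass:subG} has been checked.
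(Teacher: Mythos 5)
Your proposal is correct and follows essentially the same route as the paper: the single-batch SDP guarantee is imported from the vector-summation protocol of \cite{cheu2021shuffle} with exactly these choices of $g,b,p$ (the paper likewise emphasizes fixing $p=1/4$ so that the noise tail, not just its variance, can be controlled), uniqueness of users reduces the $M$-batch guarantee to the per-batch one, and the regret follows by verifying Assumption~\ref{ass:subG} through the same decomposition of the analyzer's error into centered Bernoulli rounding noise of variance $O(B/g^2)$ and centered binomial noise of variance $O(Bb/g^2)$, then substituting $\sigma=O\bigl(\sqrt{T}\log(d^2/\delta)/(\epsilon\sqrt{B})\bigr)$ into Lemma~\ref{cor:subG}. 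The only differences are cosmetic (you bound the binomial term by its exact variance $Bbp(1-p)/g^2$ where the paper uses the bounded-support sub-Gaussian proxy $Bb/4g^2$; both give the same order).
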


\begin{corollary}[Utility-targeted]
\label{cor:vec_util}
Under the same assumption in Theorem~\ref{thm:vec} and Algorithm~\ref{alg:BOFUL} is instantiated with $\cP_{\text{Vec}}$. Let $B = O( d^{-1/5}\epsilon^{-2/5}T^{3/5}\log^{2/5}(d^2/\delta))$, Algorithm~\ref{alg:BOFUL} achieves $(\epsilon,\delta)$-SDP with regret 
\begin{align*}
    Reg(T) = \widetilde{O}\left(d^{4/5} T^{3/5}\epsilon^{-2/5}\log^{2/5}(d^2/\delta)\right).
\end{align*}
Simultaneously, Algorithm~\ref{alg:BOFUL} also achieves $O(\epsilon,\delta)$-JDP and $O(\epsilon_0,\delta_0)$-LDP where 
\begin{align*}
    \epsilon_0 = O\left(\epsilon^{4/5} T^{3/10}d^{-1/10}\log^{1/5}(d^2/\delta) \right),\quad \delta_0 = O(\delta).
\end{align*}
\end{corollary}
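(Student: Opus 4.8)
The plan is to treat this corollary purely as an instantiation of Theorem~\ref{thm:vec}: first balance the two regret terms by the right choice of batch size $B$, and then read off the joint and local privacy guarantees from the structure of $\cP_{\text{Vec}}$, reusing the Billboard-lemma argument already established for $\cP_{\text{Amp}}$.

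\textbf{Balancing $B$ and the SDP regret.} Theorem~\ref{thm:vec} certifies $(\epsilon,\delta)$-SDP for \emph{every} $B \in [T]$ under the stated $p,g,b$, together with the regret
$$\text{Reg}(T) = O\left(dB\log T + \frac{\log^{1/2}(d^2/\delta)}{\epsilon^{1/2}B^{1/4}} d^{3/4} T^{3/4}\log T\log(T/\alpha)\right).$$
Since the first term grows in $B$ while the second decays in $B$, I would equate them (dropping the shared $\log T$ factors), i.e.\ solve $dB \approx d^{3/4}T^{3/4}\epsilon^{-1/2}B^{-1/4}\log^{1/2}(d^2/\delta)$. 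This rearranges to $B^{5/4} \approx d^{-1/4}T^{3/4}\epsilon^{-1/2}\log^{1/2}(d^2/\delta)$, hence $B = O(d^{-1/5}\epsilon^{-2/5}T^{3/5}\log^{2/5}(d^2/\delta))$, exactly the stated choice. Substituting back into the $dB$ term yields the claimed $\widetilde O(d^{4/5}T^{3/5}\epsilon^{-2/5}\log^{2/5}(d^2/\delta))$, with $\widetilde O$ absorbing the $\log T$ and $\log(T/\alpha)$ factors. I would also record the feasibility check $B \in [T]$, which for the regime of interest (bounded $\epsilon$, large $T$) reduces to $T^{3/5}$ dominating the remaining factors.

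\textbf{JDP via the Billboard lemma.} As in the proof of Corollary~\ref{cor:amp_util}, the $O(\epsilon,\delta)$-JDP guarantee would follow immediately from the $(\epsilon,\delta)$-SDP guarantee above: the action recommended to each user is a function only of her own private context and the common signal $(\hat\theta_m, V_m)$, which is a post-processing of the shuffler's $(\epsilon,\delta)$-DP output. The Billboard lemma (Lemma 9 in~\cite{hsu2016private}) then delivers $O(\epsilon,\delta)$-JDP.

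\textbf{LDP of the binomial randomizer, the main obstacle.} The delicate part is the local guarantee. Here I would use that $\cP_{\text{Vec}}$ is exactly a shuffling of the binomial mechanism $\cR_{\text{Vec}}$, so its SDP parameter is an amplification of the local parameter by a factor $\sqrt{B}$, i.e.\ $\epsilon_0 = O(\epsilon\sqrt{B})$, while $\delta_0 = O(\delta)$ --- crucially \emph{without} the extra factor of $B$ in the $\delta$-accounting that the Gaussian route of $\cP_{\text{Amp}}$ incurs through a union bound. Concretely, inverting the same privacy analysis that underlies Theorem~\ref{thm:vec} (which fixes $p,g,b$ precisely so the shuffled output is $(\epsilon,\delta)$-SDP) certifies $\cR_{\text{Vec}}$ to be $(\epsilon_0,\delta_0)$-LDP with these parameters; substituting the balanced $B$ gives $\sqrt{B} = O(d^{-1/10}T^{3/10}\epsilon^{-1/5}\log^{1/5}(d^2/\delta))$, whence $\epsilon_0 = O(\epsilon\sqrt{B}) = O(\epsilon^{4/5}T^{3/10}d^{-1/10}\log^{1/5}(d^2/\delta))$ and $\delta_0 = O(\delta)$, as claimed. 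The hard part is making this inversion precise: one must pin down the exact dependence of the SDP $\epsilon$ on the local $\epsilon_0$ for the binomial mechanism (the $\sqrt{B}$ amplification together with the constants $p,g,b$) and confirm the $\delta$-accounting stays at $O(\delta)$ rather than picking up a $1/B$ factor. The balancing and the Billboard step are routine given Theorem~\ref{thm:vec} and Corollary~\ref{cor:amp_util}.
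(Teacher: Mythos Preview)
Your balancing argument and the JDP step via the Billboard lemma are exactly what the paper does, and your final LDP relation $\epsilon_0 = O(\epsilon\sqrt{B})$, $\delta_0 = O(\delta)$ is also the paper's conclusion.

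Where you differ is in how to \emph{justify} the LDP step, and here the paper's route is simpler than the inversion you propose. You suggest pinning down the binomial mechanism's LDP constants and then checking that the amplification-by-$\sqrt{B}$ relation holds with the right $\delta$-accounting; you correctly flag this as the delicate part. The paper avoids this entirely with the following trick: when $B=1$, the shuffler acts on a single user's messages, so the SDP guarantee of $\cP_{\text{Vec}}$ \emph{is} an LDP guarantee on $\cR_{\text{Vec}}$ (the within-user bit permutation can be absorbed into the randomizer). Now compare the noise parameter $b = C\,g^2\log^2(4(d^2+1)/\delta)/(\epsilon^2 B)$ from Theorem~\ref{thm:vec} at general $B$ with the same formula at $B=1$: matching $b$ forces $\epsilon_0^2 = \epsilon^2 B$, i.e.\ $\epsilon_0 = \epsilon\sqrt{B}$, while $\delta_0 = \delta$ unchanged. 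This reads the local guarantee directly off the already-proved SDP theorem and sidesteps any separate analysis of the binomial mechanism or its composition across coordinates. The paper also notes that a direct binomial-mechanism-plus-composition argument would give the alternative $\epsilon_0 = \epsilon\sqrt{B}/\sqrt{\log(d^2/\delta)}$, $\delta_0 = \delta/d^2$, but opts for the $B=1$ shortcut for clarity.
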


\begin{corollary}[Privacy-targeted]
\label{cor:vec_priv}
Let Assumption~\ref{ass:bounded} hold and Algorithm~\ref{alg:BOFUL} is instantiated with $\cP_{\text{Vec}}$. For any $\epsilon_0 \in (0,15]$ and $\delta_0 \in (0,1/2)$, let 
\begin{align*}
    g = \max\{d, 4\}, \quad b = \frac{24\cdot 10^4\cdot g^2\cdot \left(\log\left(\frac{4\cdot(d^2+1)}{\delta_0}\right)\right)^2}{\epsilon_0^2},\quad p  = 1/4,
\end{align*}
Then, for all $B \in [T]$, Algorithm~\ref{alg:BOFUL} is $(\epsilon_0,\delta_0)$-LDP. Further suppose $B = O(d^{-1/4}T^{3/4}\epsilon_0^{-1/2}(\log(d^2/\delta_0))^{1/2})$, then Algorithm~\ref{alg:BOFUL} achieves regret 
\begin{align*}
     Reg(T) = \widetilde{O}\left( d^{3/4}  T^{3/4}\frac{\log^{1/2}(d^2/\delta_0)}{\sqrt{\epsilon_0}} \right).
\end{align*}
Simultaneously, Algorithm~\ref{alg:BOFUL} also achieves $O(\epsilon,\delta)$-SDP and $O(\epsilon,\delta)$-JDP where
\begin{align*}
    \epsilon = O\left(\epsilon^{5/4}T^{-3/8}d^{1/8}(\log(d^2/\delta_0))^{-1/4} \right), \quad \delta = O(\delta_0).
\end{align*}
\end{corollary}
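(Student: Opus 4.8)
The plan is to prove the three assertions of the corollary --- local $(\epsilon_0,\delta_0)$-privacy, the regret bound, and the simultaneous (amplified) shuffle and joint privacy --- in that order, each by reduction to machinery already established. For the LDP claim I would reason purely about the local randomizer $\cR_{\text{Vec}}$ of Algorithm~\ref{alg:Rvec} acting on a single user. Viewing the concatenation of the shifted feature-reward vector and the shifted upper-triangular part of the Gram matrix as a single vector of dimension $D\le d^2+1$ with Euclidean norm $O(1)$ (using $\norm{\phi(c,a)}_2\le 1$, $y\in[0,1]$ and $\norm{\phi\phi^\top}_F=\norm{\phi}_2^2\le 1$), the randomizer is exactly the discretise-then-add-binomial-noise local mechanism of~\cite{cheu2019distributed,cheu2021shuffle}. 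Its local-privacy guarantee states that with accuracy parameter $g=\max\{d,4\}=\Theta(\sqrt D)$ (so the fixed-point rounding is controlled) and binomial parameter $b=\Theta(g^2\log^2(D/\delta_0)/\epsilon_0^2)$, $p=\tfrac14$, the mechanism is $(\epsilon_0,\delta_0)$-DP; substituting $D\le d^2+1$ and retaining the explicit constant $24\cdot10^4$ to absorb their constants yields precisely the stated $b$. Since this holds user-by-user and is oblivious to $B$, Algorithm~\ref{alg:BOFUL} is $(\epsilon_0,\delta_0)$-LDP for every $B\in[T]$.

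For the regret bound I would verify Assumption~\ref{ass:subG} and then invoke Lemma~\ref{cor:subG}, as in the proof of Theorem~\ref{thm:vec}. The analyzer subtracts the binomial mean $pb$ and the rounding bit is unbiased, so $\widetilde u_m,\widetilde V_m$ are unbiased batch sums; the fresh randomness is drawn independently across coordinates and users and the matrix noise is built symmetric, so $\sum_{m'\le m}n_{m'}$ and $\sum_{m'\le m}N_{m'}$ have independent entries (on and above the diagonal in the matrix case) that, being scaled sums of bounded Bernoulli variables, are sub-Gaussian with proxy of the same order as their variance. A single coordinate of one user's message carries variance $O((1/g)^2 bp(1-p))=O(\log^2(d^2/\delta_0)/\epsilon_0^2)$; summing over the $B$ users of a batch and then over the $M=T/B$ batches gives $\sigma^2=O(T\log^2(d^2/\delta_0)/\epsilon_0^2)$, i.e.\ $\sigma=O(\sqrt T\log(d^2/\delta_0)/\epsilon_0)$. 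With the implied $\lambda=\Theta(\sigma(\sqrt d+\sqrt{\log(T/(B\alpha))}))$, Lemma~\ref{cor:subG} yields $\text{Reg}(T)=\widetilde{O}(dB+d\sqrt T+\sqrt{\sigma T}\,d^{3/4})$, where $\sqrt{\sigma T}\,d^{3/4}=d^{3/4}T^{3/4}\log^{1/2}(d^2/\delta_0)/\sqrt{\epsilon_0}$, and the balanced choice $B=\Theta(d^{-1/4}T^{3/4}\epsilon_0^{-1/2}\log^{1/2}(d^2/\delta_0))$ equates the $dB$ term with this dominant term and collapses the bound to the claimed $\widetilde{O}(d^{3/4}T^{3/4}\log^{1/2}(d^2/\delta_0)/\sqrt{\epsilon_0})$.

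For the simultaneous $(\epsilon,\delta)$-SDP (and hence $(\epsilon,\delta)$-JDP) guarantee I would feed the $(\epsilon_0,\delta_0)$-LDP randomizer into shuffle amplification over each batch of $B$ users --- equivalently, re-parametrise Theorem~\ref{thm:vec} by observing that the stated $b$ equals $\Theta(g^2\log^2/(\epsilon^2 B))$ with $\epsilon\asymp\epsilon_0/\sqrt B$ --- and then apply parallel composition across the $M$ batches (valid because users are unique), producing $\epsilon=O(\epsilon_0/\sqrt B)=O(\epsilon_0^{5/4}T^{-3/8}d^{1/8}\log^{-1/4}(d^2/\delta_0))$ and $\delta=O(\delta_0)$. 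The JDP guarantee is then immediate from the SDP guarantee via the Billboard lemma~\cite{hsu2016private}: each user's action is a function of her own context and the publicly released differentially private estimates $(\hat\theta_m,V_m)$, exactly as in the proof of Corollary~\ref{cor:amp_util}.

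I expect the load-bearing step to be the first one: pinning down the exact dependence of $b$ on $(g,D,\epsilon_0,\delta_0)$ so that the local binomial vector mechanism is genuinely $(\epsilon_0,\delta_0)$-DP over the full practical range $\epsilon_0\in(0,15]$, which forces one to carry the explicit constants of~\cite{cheu2019distributed,cheu2021shuffle} rather than their asymptotic statements and to check that treating the $d\times d$ Gram matrix as an $\ell_2$-bounded vector of dimension $\Theta(d^2)$ costs no more than the $\log^2(d^2/\delta_0)$ already present. Once $\sigma^2$ is pinned down, the regret half is bookkeeping on top of Lemma~\ref{cor:subG}, and the amplified-privacy half is a re-parametrisation of Theorem~\ref{thm:vec} plus the Billboard argument.
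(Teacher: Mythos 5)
Your proposal is correct and follows essentially the same route as the paper: reduce to Theorem~\ref{thm:vec} via the substitution $\epsilon=\epsilon_0/\sqrt{B}$, $\delta=\delta_0$ (equivalently, matching the stated $b$ against the one in the theorem), balance $B$ against the privacy term in the regret from Lemma~\ref{cor:subG}, and obtain JDP from SDP via the Billboard lemma. The only cosmetic difference is in the LDP step, where you invoke the local-DP guarantee of the discretize-plus-binomial randomizer directly with explicit constants, while the paper gets the same fact by observing that the $B=1$ instance of the SDP guarantee \emph{is} the local guarantee of $\cR_{\text{Vec}}$ -- these are the same underlying statement.
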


\subsection{Proofs}
\begin{proof}[Proof of Theorem~\ref{thm:vec}]
The privacy part follows from the one-round SDP guarantee of vector summation protocol in~\cite{cheu2021shuffle}. In particular, by Theorem 3.2 in~\cite{cheu2021shuffle}, we have to properly choose parameters $g, b, p$ in $\cR_{\text{Vec}}$. To this end, by adapting the results of Lemma 3.1 in~\cite{cheu2021shuffle}, we have in our case when one chooses 
\begin{align*}
    g = \max\{2\sqrt{B}, d, 4\}, \quad b = \frac{24\cdot 10^4\cdot g^2\cdot \left(\log\left(\frac{4\cdot(d^2+1)}{\delta}\right)\right)^2}{\epsilon^2B},\quad p = 1/4,
\end{align*}
$\cP_{\text{Vec}}$ is $(\epsilon,\delta)$-SDP. It is worth pointing out that here we choose $b$ such that $p = 1/4$, which is necessary for our following analysis on the tail of the private noise. This is the key difference compared to the original one in~\cite{cheu2021shuffle} where the variance of the noise is sufficient. 

Now, we turn to regret analysis. Thanks to our general regret bound in Corollary~\ref{cor:subG}, we only need to verify the condition of sub-Gaussian private noise in the protocol $\cP_{\text{Vec}}$ (in particular $\cR_{\text{Vec}}$). To this end, we need a more careful analysis compared to~\cite{cheu2021shuffle} as the issue pointed above. Fix any coordinate $k \in [d]$, we will determine the private noise in $k$, which motivates us to check the scalar randomizer $\cR^*$ in $\cR_{\text{Vec}}$. Consider a batch of users. Let $z_i$ denote the sum of $g+b$ bits generated by user $i$ using $\cR^*$. That is, we have 
\begin{align*}
    z_i = \bar{x}_i + \gamma_{1,i} + \gamma_{2,i}.
\end{align*}
This implies that 
\begin{align*}
    z_i - bp = \frac{g}{\Delta}x_i +  \gamma_{1,i} + \bar{x}_i - \frac{g}{\Delta}x_i +\gamma_{2,i} - bp.
\end{align*}
Define shifted random variables $\iota_{1,i} := \gamma_{1,i} + \bar{x}_i - \frac{g}{\Delta}x_i $ and $\iota_{2,i}:= \gamma_{2,i} - bp$. Thus, taking the summation over all $i$ within a given batch $\cB$ of size $B$, yields
\begin{align*}
    \sum_{i\in \cB}z_i - B\cdot b\cdot p = \frac{g}{\Delta} \sum_{i \in \cB} x_i + \sum_{i\in \cB} \iota_{1,i} + \sum_{i\in \cB} \iota_{2,i},
\end{align*}
which implies that 
\begin{align*}
    \frac{\Delta}{g}\left(\sum_{i\in \cB} z_i - B\cdot b\cdot p \right) = \sum_{i \in \cB} x_i+ \frac{\Delta}{g} \sum_{i\in \cB} \iota_{1,i} + \frac{\Delta}{g}  \sum_{i\in \cB} \iota_{2,i}.
\end{align*}
Note that the above is exactly the output of the analyzer $\cA^*$ in $\cP_{\text{Vec}}$. Thus, to verify the sub-Gaussian condition in Assumption~\ref{ass:subG}, we only need to show that the last two terms above are zero-mean and sub-Gaussian random variables. To this end, we note that $\gamma_{1,i}$ is draw from \textbf{Ber}$(\frac{g}{\Delta} x_i - \bar{x}_i)$. Hence, $\ex{\iota_{1,i}} = 0$ and $\iota_{1,i}$ is sub-Gaussian with variance $1/4$ since $\iota_{1,i} \in [0,1]$. By independence of private noise across $i$, we have $\sum_{i \in \cB}\iota_{1,i}$ is sub-Gaussian with variance of $B/4$. Similarly, since $\gamma_{2,i}$ is independently sampled from binomial \textbf{Bin}$(b,p)$, we have $\ex{\iota_{2,i}} = 0$ and $\sum_{i\in \cB} \iota_{2,i}$ can be viewed as a sum of $B\cdot b$ bounded random variable within $[0,1]$, hence it is sub-Gaussian with variance of $B\cdot b /4$. Therefore, the total noise $ \frac{\Delta}{g} \sum_{i\in \cB} \iota_{1,i} + \frac{\Delta}{g}  \sum_{i\in \cB} \iota_{2,i}$
is sub-Gaussian with variance given by 
\begin{align*}
    \frac{\Delta^2}{g^2} \cdot \frac{B}{4} + \frac{\Delta^2}{g^2} \cdot B\cdot b/4 \ep{a}  \frac{1}{g^2} \cdot \frac{B}{4} + \frac{1}{g^2}\cdot B\cdot b/4 = O\left(\frac{(\log(d^2/\delta))^2}{\epsilon^2} \right).
\end{align*}
where (a) holds by the fact that in $\cP_{\text{Vec}}$, $\Delta = 1$. Thus, this implies that $\widetilde{\sigma}_1^2, \widetilde{\sigma}_2^2$ in Assumption~\ref{ass:subG} are satisfied with $O\left(M \frac{(\log(d^2/\delta))^2}{\epsilon^2} \right)$, hence $\sigma$ in Lemma~\ref{cor:subG} is given by $\sigma = O\left(\sqrt{T/B} \frac{\log(d^2/\delta)}{\epsilon} \right)$, which leads to the following regret bound 
\begin{align*}
    R(T) = \tilde{O}\left(dB + \frac{(\log(d^2/\delta))^{1/2}}{\sqrt{\epsilon}} d^{3/4} B^{-1/4} T^{3/4}\right),
\end{align*}
Hence, we finish the proof. 
\end{proof}

\begin{proof}[Proof of Corollary~\ref{cor:vec_util}]
The regret bound simply follows from a balanced choice of $B$ in Theorem~\ref{thm:vec}. As before, JDP follows from SDP and  Billboard lemma. To show the LDP guarantee, one way is to use DP property of Binomial mechanism and the refined advanced composition in~\cite{cheu2021shuffle} across dimensions (cf. Lemma 3.3 in~\cite{cheu2021shuffle}). However, there is a simple way to achieve this by noting that when $B=1$, the SDP guarantee of $\cP_{\text{Vec}}$ also implies LDP guarantee since now the shuffle output is the same as the output at each local randomizer\footnote{Here, we can assume that each local randomizer already randomly orders the $g+b$ bits before they are sent out.}. Thus, by comparing the values of $b$ for a general $B$ and the case when $B=1$, we can see that $\epsilon_0 = \epsilon\sqrt{B}$ and $\delta_0 = \delta$, i.e., an implicit privacy amplification by $\sqrt{B}$. Note that, this simple way might lead to a larger term in $\delta$. A careful analysis via Binomial mechanism and the (refined) advanced composition could yield something like $\epsilon_0 = \epsilon\sqrt{B}/\sqrt{\log(d^2/\delta)}$ and $\delta_0 = \delta/d^2$, where $d^2$ comes from the $d\times d$ matrix in the computation. Here we choose the simple way to avoid additional complexity for clarity.
\end{proof}

\begin{proof}[Proof of Corollary~\ref{cor:vec_priv}]
The LDP guarantee follows from the same trick as in the proof of Corollary~\ref{cor:vec_util} which helps to avoid Binomial mechanism and advance composition over dimensions. To establish the regret bound, we can compare the values of $b$ in Corollary~\ref{cor:vec_priv} and the one in Theorem~\ref{thm:vec}. In particular, we can plug $\epsilon = \frac{\epsilon_0}{\sqrt{B}}$ and $\delta = \delta_0$ into the regret bound in Theorem~\ref{thm:vec}. Then, with a balanced choice of $B$, we obtain the required regret. The SDP guarantee also follows from Theorem~\ref{thm:vec} with $\epsilon = \frac{\epsilon_0}{\sqrt{B}}$ and $\delta = \delta_0$. Finally, as in the proof of Corollary~\ref{cor:amp_util}, the JDP guarantee follows from SDP guarantee and Billboard lemma.
\end{proof}

\section{Joint Differenital Privacy}\label{app:JDP}
In this section, we will give formal DP definitions in the central model for linear contextual bandits. In particular, we first present the standard (event-level) definition which assumes all users are unique and then generalize it to (user-level) definition that allows for returning users. To this end, we first give the following general DP definition. 

\begin{definition}[General DP]
\label{def:generalDP}
A randomized mechanism $\cM : \cD \to \cR$ satisfies $(\epsilon,\delta)$-differential privacy if for any two adjacent datasets $X, X' \in \cD$ and for any measurable subsets of outputs $\cY \subseteq \cR$ it holds that 
\begin{align*}
    \prob{\cM(X) \in \cY} \le \exp(\epsilon) \prob{\cM(X') \in \cY} + \delta.
\end{align*}
\end{definition}

\begin{remark}
All the DP definitions in our main paper can be viewed as a particular instantiation of Definition~\ref{def:generalDP} in terms of  adjacent relation between two datasets and the corresponding output sequences. 
\end{remark}

% To adapt this classic central DP definition to privatize the learning protocol for linear contextual bandits, we need additional care since in contrast to one-round computation for a static dataset, the learning protocol in our case involves continual observations as well as multiple-round computation and communications. In particular, we need to carefully define the dataset and the output sequences. 
% under different trust models. Following previous works, we will assume all the $T$ users are unique in the main body of our paper and discuss how to handle returning users in Section [].

A straightforward adaptation of Definition~\ref{def:generalDP} to linear contextual bandits in the central model is to consider the sequence of $T$ unique users as the dataset, denoted by $U_T := \{u_1,\ldots, u_T\} \in \cU^T$, and the corresponding prescribed actions as the output sequence, denoted by $\cM(U_T) := \{a_1,\ldots, a_t\} \in \cA^T$. This is the central trust model because the learning agent in the protocol can have direct access to users' sensitive information, but all the prescribed actions via the deployed algorithm are indistinguishable on two neighboring user sequences. Unfortunately, it is not hard to see that this is in conflict with the goal of personalization of linear contextual bandits, which essentially requires the algorithm to prescribe different actions to different users according to their contexts. Indeed, as shown in~\cite{shariff2018differentially}, any learning protocol that satisfy the above notion of privacy protection has to incur a linear regret. Hence, to obtain a non-trivial utility-privacy trade-off, we need to relax DP to the notion called \emph{joint
differential privacy} (JDP)~\cite{kearns2014mechanism} in the central model, which requires that simultaneously for any user $u_t \in U_T$, the joint
distribution of the actions recommended to all users other than $u_t$ be differentially private in the type of the user $u_t$. It weakens the classic DP notion only in that the action suggested specifically to $u_t$ may be sensitive in her type (i.e., context and reward responses\footnote{Technically speaking, the type of the user is identified by the reward response she would give to all possible actions recommended based on her context information.}), as required by personalization. However, JDP is still a very strong definition since it protects $u_t$ from any arbitrary collusion of other users against her, so long as she does not herself reveal the action suggested to her. Formally, we let $\cM_{-t}(U_T) := \cM(U_T) \setminus \{a_t\} $ to denote all the actions prescribed by the deployed algorithm excluding the one  recommended to $u_t$ and based on it we have the definition of JDP as follows. 
\begin{definition}[Joint Differential Privacy (JDP)]
\label{def:JDP}
A learning process of linear contextual bandits is $(\epsilon,\delta)$-joint differentially private if its deployed algorithm $\cM:\cU^T \to \cA^T$ satisfies that for all $t \in [T]$, for all neighboring user sequences $U_T, U_T' \in \cU^T$ differing only on the $t$-th user and for all set of actions $\cA_{-t} \subseteq \cA^{T-1}$ given to all but the $t$-th user, \begin{align*}
    \prob{\cM_{-t}(U_T) \in \cA_{-t}} \le \exp(\epsilon) \prob{\cM_{-t}(U_T') \in \cA_{-t}} + \delta.
\end{align*}
\end{definition}

The above JDP definition assumes that all the $T$ users are unique, which is the standard event-level DP considered in existing similar works~\cite{shariff2018differentially,vietri2020private,sayakPO}. That is, since each user only contributes one event in the total $T$ rounds, two user sequences $U_T$ and $U'_T$ are said to be adjacent if they only differ at one round $t \in [T]$.

However, a more practical situation is that one user could contribute her data at multiple rounds, i.e., returning users. This motivates us to consider a user-level JDP, in which two user sequences $U_T$ and $U'_T$ are adjacent if one replaces all the data associated with user $u$ to $u'$ in $U_T$ results in $U'_T$. In this case, changing one user in the sequence could affect the data at multiple rounds. Accordingly, the output sequences need to remove all the actions at these rounds to avoid the conflict with personalization. Following the notations in~\cite{dwork2010differential}, we say $U_T$ and $U'_T$ are neighboring sequences if there exist $u, u'$ such that if one replace some of $u$ in $U_T$, the resultant sequence is $U'_T$. Formally, $U_T , U'_T$ are neighboring with neighboring indices $\cI$,  if there exist $u,u' \in \cU$ and index set $\cI \subseteq [T]$ such that $U_T|_{\cI: u \to u'} = U'_T$, in which  $U_T|_{\cI: u \to u'}$ means replacing $u$ by $u'$ in $U_T$ at all  indices in $\cI$. Meanwhile, we let $\cM_{-\cI}(U_T) := \cM(U_T) \setminus a_{\cI} $, where $a_{\cI}$ is the set of actions at indices in $\cI$. With these notations, we have the following formal definition. 

% Formally, we let $\cM_{-t}(U_T) := \cM(U_T) \setminus \{a_t\} $ to denote all the actions prescribed by the deployed algorithm excluding the one  recommended to $u_t$ and based on it we have the definition of JDP as follows. 
\begin{definition}[User-level JDP]
\label{def:JDP-user}
A learning process of linear contextual bandits is $(\epsilon,\delta)$-joint differentially private if its deployed algorithm $\cM:\cU^T \to \cA^T$ satisfies that for  all neighboring user sequences $U_T, U_T' \in \cU^T$ with neighboring indices given by $\cI$, and for all set of actions $\cA_{-\cI} \subseteq \cA^{T-|\cI|}$,
\begin{align*}
    \prob{\cM_{-\cI}(U_T) \in \cA_{-\cI}} \le \exp(\epsilon) \prob{\cM_{-\cI}(U_T') \in \cA_{-\cI}} + \delta.
\end{align*}
\end{definition}

\begin{remark}
A straightforward way to achieve user-level JDP via event-level JDP is to use group privacy property of DP~\cite{dwork2014algorithmic,vietri2020private}. In particular, suppose a mechanism is $(\epsilon,\delta)$-JDP (event-level), then it is $(k\epsilon,ke^{(k-1)\epsilon}\delta)$-JDP (user-level) if each user contributes at most $k$ rounds. This black-box approach leads to a large increase in $\delta$. We will show that a careful and direct analysis can improve this part while the linear increase in $\epsilon$ is unchanged. This makes sense since the sensitivity now increases by a factor of $k$.
\end{remark}

\section{Regret and Privacy Analysis Under Returning Users}\label{app:return}

We consider the following returning users case.

\begin{assumption}[Returning Users]
\label{ass:return}
Fix a batch size $B$, any particular user can potentially participates in \emph{all} $M = T/B$ batches, but within each batch $m \in [M]$, she only contributes once.
\end{assumption}

% \begin{theorem}[SDP under returning users]
% \label{thm:return-main}
% Let Assumption~\ref{ass:bounded} and Assumption~\ref{ass:return_ass} hold. Then, we have the following results for $\cP_{\text{Amp}}$ and $\cP_{\text{Vec}}$, respectively. 
% \begin{enumerate}
%     \item For any $\epsilon \in [0,\frac{2}{B}\sqrt{2T\log(2/\delta)}]$, $\delta \in (0,1]$ and $B = O(d^{-1/6}\epsilon^{-1/3}T^{2/3}(\log(T/\delta))^{1/3})$, let $\sigma_1 = \sigma_2 = \frac{16\sqrt{T\log(2/\delta)(\log(5T/\delta))}}{\epsilon{B}}$. Then, $\cP_{\text{Amp}}$ is $O(\epsilon,\delta)$-SDP with regret 
% \begin{align*}
%      R(T) = \tilde{O}\left(d^{5/6} T^{2/3}\epsilon^{-1/3}\left(\log(T/\delta)\right)^{1/3}\right).
% \end{align*}
%     \item For any $\epsilon \le 15$, $\delta \in (0,1/2)$ and $B = O(d^{-1/6}\epsilon^{-1/3}T^{2/3}(\log(Td^2/\delta))^{1/2})$, let 
% \begin{align*}
%     g = \max\{2\sqrt{B}, d, 4\}, \quad b = \frac{ 10^7\cdot \log(2/\delta)\cdot g^2 \cdot T\cdot \left(\log\left(\frac{8\cdot T(d^2+1)}{B\delta}\right)\right)^2}{\epsilon^2B^2},\quad p  = 1/4.
% \end{align*}
% Then, $\cP_{\text{Vec}}$ is $(\epsilon,\delta)$-SDP. Meanwhile, the corresponding regret satisfies 
% \begin{align*}
%       R(T) = \tilde{O}\left(d^{5/6} T^{2/3}\epsilon^{-1/3}\left(\log(d^2T/\delta)\right)^{1/2}\right).
% \end{align*}
% \end{enumerate}
% \end{theorem}

Under the above assumption, our previous SDP guarantee from one-round SDP protocol is no longer true. Instead, we now need to guarantee that outputs of all the batches together have a total privacy loss of $(\epsilon,\delta)$, since all of them may reveal the sensitive information of a given user if she participates in all the batches, i.e., worst-case scenario. To this end, we resort to advanced composition theorem~\cite{dwork2014algorithmic}, which is restated as follows for an easy reference.

\begin{theorem}[Advanced composition]
\label{thm:composition}
Given target privacy parameters $\epsilon' \in (0,1)$ and $\delta'>0$, to ensure $(\epsilon', k\delta + \delta')$-DP for the
composition of $k$ (adaptive) mechanisms, it suffices that each mechanism is $(\epsilon,\delta)$-DP with $\epsilon = \frac{\epsilon'}{2\sqrt{2k\log(1/\delta')}}$.
\end{theorem}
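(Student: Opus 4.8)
The plan is to prove this advanced composition bound via the \emph{privacy loss random variable} together with an Azuma--Hoeffding concentration argument, the classical route of Dwork--Rothblum--Vadhan (see also \cite{dwork2014algorithmic}). Fix a pair of neighbouring datasets $X, X'$ and let $\cM_1,\dots,\cM_k$ be the adaptively composed mechanisms, each $(\epsilon,\delta)$-DP with $\epsilon = \epsilon'/(2\sqrt{2k\log(1/\delta')})$. For a transcript $o_{1:k}$ produced by the composed mechanism $\cM$ run on $X$, define the per-step privacy loss $\mathcal{L}_i := \ln \tfrac{\Pr[\cM_i(X)=o_i\mid o_{1:i-1}]}{\Pr[\cM_i(X')=o_i\mid o_{1:i-1}]}$ and the total loss $\mathcal{L} := \sum_{i=1}^k \mathcal{L}_i$. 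The goal reduces to showing $\Pr_{o\sim \cM(X)}[\,\mathcal{L} > \epsilon'\,] \le k\delta + \delta'$, since a standard lemma converts such a tail bound into $(\epsilon', k\delta+\delta')$-indistinguishability: for any event $S$, $\Pr[\cM(X)\in S] = \Pr[\cM(X)\in S,\ \mathcal{L}\le\epsilon'] + \Pr[\cM(X)\in S,\ \mathcal{L}>\epsilon'] \le e^{\epsilon'}\Pr[\cM(X')\in S] + \Pr_{o\sim\cM(X)}[\mathcal{L}>\epsilon']$.

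First I would reduce the approximate-DP case to the pure-DP case up to a small failure event. By the standard ``dense DP'' decomposition, conditioned on any history $o_{1:i-1}$ each $(\epsilon,\delta)$-DP step $\cM_i$ agrees with an $(\epsilon,0)$-DP mechanism except on an event of probability at most $\delta$ under $\cM_i(X)$; a union bound over $i\in[k]$ charges an additive $k\delta$ to the final failure probability. On the complementary good event we may treat each step as pure $\epsilon$-DP, which gives the two facts we need: (i) $|\mathcal{L}_i| \le \epsilon$ pointwise, so conditioned on $o_{1:i-1}$ the variable $\mathcal{L}_i$ ranges over an interval of length at most $2\epsilon$; and (ii) conditioned on $o_{1:i-1}$, $\mathbb{E}[\mathcal{L}_i] = D_{\mathrm{KL}}(\cM_i(X)\Vert \cM_i(X')) \le \epsilon(e^\epsilon-1) \le 2\epsilon^2$, the last step using $\epsilon\le 1$, which holds here. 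Summing (ii) over $i$ gives a bias of at most $2k\epsilon^2 = (\epsilon')^2/(4\log(1/\delta')) \le \epsilon'/2$ whenever $\epsilon'<1$; this is exactly where the extra factor $2$ in the choice of $\epsilon$ is spent.

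Next, the sequence $Z_i := \mathcal{L}_i - \mathbb{E}[\mathcal{L}_i\mid o_{1:i-1}]$ is a martingale difference sequence whose terms (conditionally) lie in an interval of length at most $2\epsilon$, so Azuma--Hoeffding yields
\begin{align*}
\Pr\!\left[\sum_{i=1}^k Z_i > t\right] \le \exp\!\left(-\frac{t^2}{2k\epsilon^2}\right).
\end{align*}
Setting $t = \epsilon\sqrt{2k\log(1/\delta')}$ makes the right-hand side equal to $\delta'$, and with $\epsilon = \epsilon'/(2\sqrt{2k\log(1/\delta')})$ this value of $t$ equals $\epsilon'/2$. Combining the bias bound $\epsilon'/2$ with the fluctuation bound $t=\epsilon'/2$ gives $\mathcal{L} \le \epsilon'$ except on an event of probability $\delta'$ (on top of the $k\delta$ already charged in the reduction), which is precisely the claim after the privacy-loss-to-DP conversion.

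I expect the main obstacle to be the first step: the reduction from $(\epsilon,\delta)$-DP to pure $\epsilon$-DP ``off a $\delta$-event'' and its compatibility with adaptive conditioning. One has to argue that this decomposition can be applied step by step along the transcript without destroying the martingale structure invoked by Azuma--Hoeffding, and that the union bound over the $k$ bad events contributes exactly $k\delta$ (and not more). The concentration step itself and the constant bookkeeping --- checking $\epsilon\le 1$ so that $e^\epsilon - 1 \le 2\epsilon$, and $\epsilon'<1$ so that the two $\epsilon'/2$ contributions sum to $\epsilon'$ --- are routine.
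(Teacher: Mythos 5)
The paper does not prove this statement --- it is a verbatim-strength restatement of the advanced composition theorem of Dwork--Rothblum--Vadhan, cited to \cite{dwork2014algorithmic} purely for reference. Your sketch is exactly the standard proof from that source (privacy-loss martingale, reduction of $(\epsilon,\delta)$-DP to pure DP off a $\delta$-event per step, KL bias bound $\epsilon(e^\epsilon-1)\le 2\epsilon^2$, Azuma--Hoeffding with deviation $t=\epsilon'/2$), and the constant bookkeeping checks out, modulo the implicit requirement that $\delta'$ be bounded away from $1$ (you need $\log(1/\delta')\ge \epsilon'/2$ for the bias term to fit in $\epsilon'/2$, which holds for any $\delta'\le e^{-1/2}$ when $\epsilon'<1$).
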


\subsection{LDP Amplification Protocol}
\begin{theorem}[Formal statement of (i) in Theorem~\ref{thm:return-main}]
\label{thm:amp_return}
Let Assumption~\ref{ass:bounded} and Assumption~\ref{ass:return} hold. For any $\epsilon \in [0,\frac{2}{B}\log(2/\delta)\sqrt{2T}]$, $\delta \in (0,1]$ and $B \in [T]$, let $\sigma_1 = \sigma_2 = \frac{16\log(2/\delta)\sqrt{T(\log(5T/\delta))}}{\epsilon{B}}$. Then, Algorithm~\ref{alg:BOFUL} instantiated using $\cP_{\text{Amp}}$ is $O(\epsilon,\delta)$-SDP. Furthermore, for any $\alpha \in (0,1]$, setting $\lambda =  \Theta(\sqrt{T}\sigma_1 (\sqrt{d}+ \sqrt{\log(T/B\alpha)})$, it has the following regret
\begin{align*}
    \text{Reg}(T) = {O}\left(\frac{dT}{M}\log T + \sqrt{\frac{MT}{\epsilon}}d^{3/4} \log^{1/4} (T/\delta) \log^{1/2}(2/\delta)\log T \log(T/\alpha)\right).
\end{align*}
% \begin{align*}
%      R(T) = \tilde{O}\left(dB + \epsilon^{-1/2}Td^{3/4}B^{-1/2}(\log(T/\delta))^{1/2}\right)
% \end{align*}
\end{theorem}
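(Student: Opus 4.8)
The statement has two halves --- the $O(\epsilon,\delta)$-SDP guarantee and the high-probability regret bound --- and I would establish them in that order, reusing the machinery already in the excerpt.

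\emph{Privacy.} The new feature relative to the unique-user Theorem~\ref{thm:amp} is that under Assumption~\ref{ass:return} a single user may feed every one of the $M=T/B$ batches, so under user-level adjacency flipping one user perturbs one data point in \emph{each} batch dataset; it no longer suffices that each batch is SDP on its own, and I must bound the privacy loss of the composite mechanism $\cM_{\cP_{\text{Amp}}}$. The plan is a three-layer argument in which the target $\delta$ is split as $\delta = \delta_c + M\delta_a + O(MB\,\delta_0)$ with each piece a constant fraction of $\delta$. First, advanced composition (Theorem~\ref{thm:composition}, valid for adaptive mechanisms) makes the composite $(\epsilon, M\widetilde\delta + \delta_c)$-DP as soon as each of the $M$ batch mechanisms is $(\epsilon_c,\widetilde\delta)$-SDP with $\epsilon_c=\Theta(\epsilon/\sqrt{M\log(1/\delta_c)})$. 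Second, amplification-by-shuffling (Lemma~\ref{lem:shuffle_amp} with $n=B$) makes a single batch $(\epsilon_c,\widetilde\delta)$-SDP with $\widetilde\delta=\delta_a+O(B\delta_0)$ provided the per-user randomizer $\cR_{\text{Amp}}$ is $(\epsilon_0,\delta_0)$-LDP with $\epsilon_0 = \Theta(\epsilon_c\sqrt B/\sqrt{\log(1/\delta_a)})$; to be in the favourable $\widetilde\epsilon = O(\epsilon_0/\sqrt B)$ branch of that lemma I need $\epsilon_0\le 1$, and unwinding the substitutions gives $\epsilon_0 = \Theta(\epsilon B/\sqrt{T\log(1/\delta_c)\log(1/\delta_a)})$, so the hypothesis $\epsilon\le\frac{2}{B}\log(2/\delta)\sqrt{2T}$ is exactly the range on which $\epsilon_0\le1$. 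Third, since Assumption~\ref{ass:bounded} forces $\|\phi(c,a)y\|_2\le1$ and $\|\phi(c,a)\phi(c,a)^\top\|_F\le1$, the Gaussian mechanism inside $\cR_{\text{Amp}}$ with standard deviation $\sigma_1=\sigma_2=\Theta(\sqrt{\log(1/\delta_0)}/\epsilon_0)$ is $(\epsilon_0,\delta_0)$-LDP. Choosing $\delta_0=\Theta(\delta/T)$, $\delta_a=\Theta(\delta/M)$, $\delta_c=\Theta(\delta)$ and substituting $M=T/B$, the chain collapses to $\sigma_1=\sigma_2=\Theta(\log(2/\delta)\sqrt{T\log(5T/\delta)}/(\epsilon B))$ (the three slacks contribute only polylog factors in $T$ and $1/\delta$), establishing the claimed noise level and $O(\epsilon,\delta)$-SDP.

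\emph{Regret.} Here I would reduce to the generic bound of Lemma~\ref{cor:subG} by verifying Assumption~\ref{ass:subG}. In $\cP_{\text{Amp}}$ the analyzer of batch $m$ just sums the $B$ users' messages, so $n_m=\widetilde u_m-\sum_t\phi(c_t,a_t)y_t$ has i.i.d.\ mean-zero Gaussian coordinates of variance $B\sigma_1^2$ and $N_m=\widetilde V_m-\sum_t\phi(c_t,a_t)\phi(c_t,a_t)^\top$ is symmetric with i.i.d.\ on-and-above-diagonal Gaussian entries of variance $B\sigma_2^2$; over the first $m\le M$ batches at most $T$ users contribute, so $\sum_{m'\le m}n_{m'}$ and $\sum_{m'\le m}N_{m'}$ satisfy Assumption~\ref{ass:subG} with $\widetilde\sigma_1^2,\widetilde\sigma_2^2\le T\sigma_1^2$, i.e.\ $\sigma=\sqrt T\,\sigma_1$. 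Taking $\lambda=\Theta(\sqrt T\sigma_1(\sqrt d+\sqrt{\log(T/(B\alpha))}))$ as Lemma~\ref{cor:subG} prescribes and invoking it gives $\text{Reg}(T)=O(dB\log T + d\sqrt T\log(T/\alpha) + \sqrt{\sigma T}\,d^{3/4}\log T\log(T/\alpha))$ with probability at least $1-\alpha$; plugging in $\sqrt{\sigma T}=T^{3/4}\sqrt{\sigma_1}$ with $\sqrt{\sigma_1}=\Theta(\log^{1/2}(2/\delta)(T\log(5T/\delta))^{1/4}/\sqrt{\epsilon B})$ and $B=T/M$ turns the last term into $\Theta(\sqrt{MT/\epsilon}\,d^{3/4}\log^{1/4}(5T/\delta)\log^{1/2}(2/\delta)\log T\log(T/\alpha))$ and the first into $(dT/M)\log T$, so, absorbing the lower-order middle term, the stated bound follows.

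\emph{Main obstacle.} The crux is the privacy bookkeeping in the three-layer argument: allocating the single $\delta$ across the composition slack $\delta_c$, the shuffling slack $\delta_a$ and the Gaussian $\delta_0$, propagating them coherently through advanced composition and Lemma~\ref{lem:shuffle_amp}, and checking that the admissible range of $\epsilon$ is precisely the one keeping $\epsilon_0\le1$. This is also where both the extra polylog factors and the $M$-fold inflation of $\sigma^2$ relative to the unique-user Theorem~\ref{thm:amp} originate. Once $\sigma$ is fixed the regret half is a mechanical substitution into Lemma~\ref{cor:subG}, just as in the proofs of Theorems~\ref{thm:amp} and~\ref{thm:vec}.
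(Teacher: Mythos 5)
Your proposal is correct and follows essentially the same route as the paper: the paper likewise invokes advanced composition to set per-batch parameters $\epsilon_m=\epsilon/(2\sqrt{2M\log(2/\delta)})$, $\delta_m=\delta/(2M)$, then substitutes these into the unique-user Theorem~\ref{thm:amp} (which already packages the shuffling amplification of Lemma~\ref{lem:shuffle_amp} and the Gaussian-mechanism LDP step), and finally feeds $\sigma=\sqrt{T}\sigma_1$ into Lemma~\ref{cor:subG} exactly as you do. Your version is merely more explicit about splitting the $\delta$ budget across the three layers, which the paper leaves implicit.
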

% \begin{corollary}
% Under the same assumption in Theorem~\ref{thm:amp}. Let $B=1$, $\cP_{\text{Amp}}$ reduces to a standard $O(\epsilon,\delta)$-LDP protocol in the sequential setting with regret $\tilde{O}(\frac{\log(T/\delta)^{1/4}}{\sqrt{\epsilon}} d^{3/4}  T^{3/4})$
% \end{corollary}
The following corollary says that if the batch schedule also depends on privacy parameters, one can improve the dependence on $\epsilon$, i.e., from $\epsilon^{-1/2}$ to $\epsilon^{-1/3}$. 
\begin{corollary}[Utility-targeted]
\label{cor:amp_util_return}
Under the same assumption in Theorem~\ref{thm:amp_return} and $B = O(d^{-1/6}\epsilon^{-1/3}T^{2/3}(\log(T/\delta))^{1/2})$, Algorithm~\ref{alg:BOFUL} instantiated using $\cP_{\text{Amp}}$ achieves $O(\epsilon,\delta)$-SDP with regret 
\begin{align*}
    R(T) = \tilde{O}\left(d^{5/6} T^{2/3}\epsilon^{-1/3}\left(\log(T/\delta)\right)^{1/2}\right).
\end{align*}
% Simultaneously, $\cP_{\text{Amp}}$ also achieves $O(\epsilon,\delta)$-JDP and $O(\epsilon_0,\delta_0)$-LDP where 
% \begin{align*}
%     \epsilon_0= O\left(\epsilon^{2/3}T^{1/6} d^{-1/6} (\log(T/\delta))^{1/3} \right), \quad \delta_0 = O\left(\delta T^{-1}\right).
% \end{align*}
\end{corollary}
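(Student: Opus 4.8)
The plan is to prove Corollary~\ref{cor:amp_util_return} by combining the SDP guarantee of Theorem~\ref{thm:amp_return} with the accompanying regret bound, and then optimizing the free batch size $B$. First I would invoke Theorem~\ref{thm:amp_return} directly: with the stated choice of noise levels $\sigma_1=\sigma_2 = \frac{16\log(2/\delta)\sqrt{T\log(5T/\delta)}}{\epsilon B}$, Algorithm~\ref{alg:BOFUL} instantiated with $\cP_{\text{Amp}}$ is $O(\epsilon,\delta)$-SDP and, with the prescribed regularizer $\lambda$, satisfies
\begin{align*}
    \text{Reg}(T) = \widetilde O\!\left(\frac{dT}{M} + \sqrt{\frac{MT}{\epsilon}}\, d^{3/4}\log^{1/4}(T/\delta)\log^{1/2}(2/\delta)\right),
\end{align*}
where $M = T/B$. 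Since $B$ enters only through $M=T/B$ and the SDP conclusion holds for every $B\in[T]$ (subject to the admissibility constraint $\epsilon \le \tfrac{2}{B}\log(2/\delta)\sqrt{2T}$), the whole argument reduces to choosing $B$ to balance the two regret terms.

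The main (and essentially only) step is the balancing calculation. Writing the first term as $dB\log T$ (using $dT/M = dB$) and the second as $\sqrt{MT/\epsilon}\,d^{3/4} = \sqrt{T^2/(B\epsilon)}\,d^{3/4} = \frac{T}{\sqrt{B\epsilon}}d^{3/4}$ up to logarithmic factors, I would set $dB \asymp \frac{T}{\sqrt{B\epsilon}}d^{3/4}$, i.e. $B^{3/2} \asymp \frac{T}{\sqrt{\epsilon}}\,d^{-1/4}$, which gives $B \asymp d^{-1/6}\epsilon^{-1/3}T^{2/3}$ (the extra $(\log(T/\delta))^{1/2}$ factor in the stated choice of $B$ comes from also absorbing the $\log^{1/4}(T/\delta)\log^{1/2}(2/\delta)$ factor carried by the second term; one tracks this by keeping the polylog factor $L := \log^{1/4}(T/\delta)\log^{1/2}(2/\delta)$ explicit through the balancing, so that $B^{3/2}\asymp \frac{TL}{\sqrt\epsilon}d^{-1/4}$ and $B\asymp d^{-1/6}\epsilon^{-1/3}T^{2/3}L^{2/3}$, and $L^{2/3}$ is $\widetilde\Theta(1)$ but contributes the displayed $(\log(T/\delta))^{1/2}$ power). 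Plugging this $B$ back into $dB\log T$ yields $\widetilde O\!\left(d\cdot d^{-1/6}\epsilon^{-1/3}T^{2/3}\right) = \widetilde O\!\left(d^{5/6}T^{2/3}\epsilon^{-1/3}(\log(T/\delta))^{1/2}\right)$, matching the claimed bound; by construction the second term is of the same order, so the sum is $\widetilde O(d^{5/6}T^{2/3}\epsilon^{-1/3}(\log(T/\delta))^{1/2})$.

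I would also check the admissibility side condition: Theorem~\ref{thm:amp_return} requires $\epsilon \le \frac{2}{B}\log(2/\delta)\sqrt{2T}$, i.e. $B \le \frac{2\log(2/\delta)\sqrt{2T}}{\epsilon}$. With $B \asymp d^{-1/6}\epsilon^{-1/3}T^{2/3}$ this holds whenever $\epsilon^{-1/3}T^{2/3} \lesssim \epsilon^{-1}\sqrt{T}\,\widetilde\Theta(1)$, i.e. $\epsilon^{2/3}T^{1/6} \lesssim \widetilde\Theta(1)$; for the regret bound to be meaningful we are in the regime $T^{2/3}\epsilon^{-1/3}\ge 1$ and the constraint is satisfied for the relevant ranges of $\epsilon$ (in particular for the constant-order or smaller $\epsilon$ of interest, and one can note it explicitly or absorb it into the $O(\cdot)$ hypotheses inherited from Theorem~\ref{thm:amp_return}). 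The $\delta$-dependence is unchanged since it is inherited verbatim from Theorem~\ref{thm:amp_return}'s SDP statement.

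There is essentially no hard analytic obstacle here: the corollary is a pure optimization-over-$B$ consequence of an already-proved theorem, and the only care needed is bookkeeping of the polylogarithmic factors so that the final exponent on $\log(T/\delta)$ comes out as $1/2$, together with a sanity check that the chosen $B$ lies in the admissible range $[1,T]$ and respects the $\epsilon$-upper-bound hypothesis of Theorem~\ref{thm:amp_return}. If one wanted to present it cleanly, the whole proof is two or three lines: cite Theorem~\ref{thm:amp_return}, substitute $M = T/B$, optimize, and report the resulting regret and privacy guarantees.
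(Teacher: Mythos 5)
Your proposal is correct and follows essentially the same route as the paper: the paper's proof of this corollary is exactly the ``balanced choice of $B$'' in the regret bound of Theorem~\ref{thm:amp_return}, and your algebra (balancing $dB$ against $\frac{T}{\sqrt{B\epsilon}}d^{3/4}$ with the polylog factor tracked through the exponent $2/3$) reproduces both the stated $B$ and the stated $(\log(T/\delta))^{1/2}$ power. The one imprecision is your side remark that the admissibility condition $\epsilon\le\frac{2}{B}\log(2/\delta)\sqrt{2T}$ holds for constant-order $\epsilon$ --- with your $B$ it actually forces $\epsilon\lesssim T^{-1/4}$ up to logarithmic and $d$-dependent factors --- but since the corollary inherits that hypothesis verbatim from Theorem~\ref{thm:amp_return}, this does not affect the validity of the argument.
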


% \begin{corollary}[Privacy-targeted]
% \label{cor:amp_priv_return}
% Let Assumption~\ref{ass:bounded} hold. For any $\epsilon_0 \in [0,1]$ and $\delta_0 \in (0,1]$, let $\sigma_1 = \sigma_2 = \frac{4\sqrt{2\log(2.5/\delta_0)}}{\epsilon_0}$. Then, for all $B \in [T]$, $\cP_{\text{Amp}}$ is $(\epsilon_0,\delta_0)$-LDP. Further suppose $B = O(d^{-1/4}T^{3/4}\epsilon_0^{-1/2}(\log(1/\delta_0))^{1/4})$, then 
%  $\cP_{\text{Amp}}$ achieves regret  
% \begin{align*}
%     R(T) = \tilde{O}\left(d^{3/4}T^{3/4}\epsilon_0^{-1/2}(\log(1/\delta_0))^{1/4}\right).
% \end{align*}
% Simultaneously, $\cP_{\text{Amp}}$ achieves $O(\epsilon,\delta)$-SDP and $O(\epsilon,\delta)$-JDP where 
% \begin{align*}
%     \epsilon = O\left(\epsilon_0^{3/2}T^{-1/4} d^{1/4} (\log(1/\delta_0))^{1/4}\right), \quad \delta = O(T\delta_0).
% \end{align*}
% \end{corollary}

\begin{proof}[Proof of Theorem~\ref{thm:amp_return}]
First, by advanced composition in Theorem~\ref{thm:composition}, if we let each batch's privacy parameters be $\epsilon_m = \frac{\epsilon}{2\sqrt{2M\log(2/\delta)}}$ and $\delta_m = \delta/(2M)$, then final privacy guarantee is $(\epsilon,\delta)$-DP. Thus, we only need to replace $\epsilon$ by $\epsilon_m$ and $\delta$ by $\delta_m$ in Theorem~\ref{thm:amp}
\end{proof}

% \begin{proof}[Proof of Corollary~\ref{cor:amp_util_return}]
% The regret bound follows directly from Theorem~\ref{thm:amp_return} with a balanced choice of batch size $B$. As usual, the JDP guarantee follows from SDP guarantee and the application of Billboard lemma. For LDP, we note that by the change of variables, we can rewrite $\sigma_1 = \sigma_2 = \frac{4\sqrt{2\log(2.5/\delta_0)}}{\epsilon_0}$ where $\epsilon_0 = B\epsilon/(2\sqrt{2T\log(2/\delta)})$ and $\delta_0 = \delta/(2T)$. Thus, by standard Gaussian mechanism, we have $\cP_{\text{Vec}}$ achieves $(\epsilon_0,\delta_0)$-LDP. 
% \end{proof}

% \begin{proof}[Proof of Corollary~\ref{cor:amp_priv_return}]
% The LDP guarantee follows from standard Gaussian mechanism. To show the regret bound, we will use the result in Theorem~\ref{thm:amp_return}. In particular, comparing the values of $\sigma_1,\sigma_2$ in Corollary~\ref{cor:amp_priv_return} and the values in Theorem~\ref{thm:amp_return}, we can plug $\epsilon = \frac{2\epsilon_0\sqrt{2T\log(2/\delta)}}{{B}}$ and $\delta = 2T\delta_0$ into the regret bound in Theorem~\ref{thm:amp}. Then, with a balanced choice of $B$, we obtain the required regret. The SDP guarantee also follows from Theorem~\ref{thm:amp} with $\epsilon = \frac{2\epsilon_0\sqrt{2T\log(2/\delta)}}{{B}}$ and $\delta = 2T\delta_0$. Finally, as in the proof of Corollary~\ref{cor:amp_util}, the JDP guarantee follows from SDP guarantee and Billboard lemma.
% \end{proof}

\subsection{Vector Summation Protocol}
\begin{theorem}[Formal statement of (ii) in Theorem~\ref{thm:return-main}]
\label{thm:vec_return}
Let Assumption~\ref{ass:bounded} and Assumption~\ref{ass:return} hold. Then, for any $\epsilon \le 15$, $\delta \in (0,1/2)$ and $B \in [T]$, let 
\begin{align*}
    g = \max\{2\sqrt{B}, d, 4\}, \quad b = \frac{ 10^7\cdot \log(2/\delta)\cdot g^2 \cdot T\cdot \left(\log\left(\frac{8\cdot T(d^2+1)}{B\delta}\right)\right)^2}{\epsilon^2B^2},\quad p  = 1/4.
\end{align*}
Algorithm~\ref{alg:BOFUL} instantiated using $\cP_{\text{Vec}}$ is $(\epsilon,\delta)$-SDP. Furthermore, for any $\alpha \in (0,1]$, setting 
\begin{align*}
    \lambda = \Theta\left(\frac{T\sqrt{\log(2/\delta)}\log(d^2T/(B\delta))}{B} \left(\sqrt{d} + \sqrt{\log(T/(B\alpha))}\right)\right),
\end{align*}
% $\lambda = \Theta(\frac{T\sqrt{\log(2/\delta)}\log(d^2T/(B\delta))}{B})$,  
then it has the regret bound
\begin{align*}
     \text{Reg}(T) \!=\! {O}\left(\frac{dT}{M}\log T \!+\!\sqrt{\frac{MT}{\epsilon}}d^{3/4} \log^{3/4} (d^2M/\delta)\log T \log(T/\alpha) \right).
\end{align*}
% \begin{align*}
%      R(T) = \tilde{O}\left(dB + \epsilon^{-1/2}B^{-1/2}Td^{3/4}(\log(d^2T/(B\delta)))^{3/4}\right).
% \end{align*}
\end{theorem}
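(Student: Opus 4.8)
The plan is to follow the template of the proof of Theorem~\ref{thm:amp_return}, separating the argument into a privacy part—dispatched by advanced composition across the $M=T/B$ batches layered on top of the one-round SDP guarantee of $\cP_{\text{Vec}}$—and a regret part—dispatched by re-running the sub-Gaussian noise calculation from the proof of Theorem~\ref{thm:vec} with the enlarged binomial parameter $b$ and then feeding the resulting variance into the generic template of Lemma~\ref{cor:subG}. In short: for privacy, reduce to Theorem~\ref{thm:vec} with each batch's budget shrunk appropriately; for regret, track how much the shrunk budget inflates the per-coordinate noise.

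\textbf{Privacy.} Under Assumption~\ref{ass:return} a fixed user contributes at most once to each of the $M$ batches, so the composite mechanism $\cM_{\cP}$ is an $M$-fold (adaptive) composition of per-batch mechanisms, each of which touches that user's data at most once. By Theorem~\ref{thm:composition} with $\delta'=\delta/2$, it therefore suffices that each batch be $(\epsilon_m,\delta_m)$-SDP with $\epsilon_m=\frac{\epsilon}{2\sqrt{2M\log(2/\delta)}}$ and $\delta_m=\frac{\delta}{2M}$ to obtain the target $(\epsilon,\delta)$-SDP overall. For a single batch I would invoke the one-round SDP guarantee of the vector-summation protocol exactly as in the proof of Theorem~\ref{thm:vec} (Theorem 3.2 and Lemma 3.1 of~\cite{cheu2021shuffle}): with $p=1/4$, $g=\max\{2\sqrt B,d,4\}$ and $b=\frac{24\cdot10^4\,g^2\left(\log(4(d^2+1)/\delta_m)\right)^2}{\epsilon_m^2 B}$, the batch is $(\epsilon_m,\delta_m)$-SDP. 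Substituting $\epsilon_m,\delta_m$ and using $M=T/B$ converts this into exactly the stated choice of $b$ (after absorbing the numerical factor $24\cdot10^4\cdot8$ into $10^7$ and writing $8(d^2+1)M/\delta=8T(d^2+1)/(B\delta)$ inside the log); the hypotheses $\epsilon\le15$, $\delta\in(0,1/2)$ pass to $\epsilon_m\le15$, $\delta_m\in(0,1/2)$, so the one-round theorem applies. Unlike $\cP_{\text{Amp}}$, there is no extra $B$-dependent upper bound on $\epsilon$, since the one-round guarantee of $\cP_{\text{Vec}}$ does not rely on LDP amplification.

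\textbf{Regret.} I would reuse verbatim the analyzer-output decomposition from the proof of Theorem~\ref{thm:vec}: for each coordinate the noise injected within a batch of size $B$ is $\frac1g\sum_{i\in\cB}(\iota_{1,i}+\iota_{2,i})$ with the $\iota$'s independent, mean zero, and bounded in $[0,1]$, hence sub-Gaussian with per-batch variance $O\!\big(\frac{B+Bb}{g^2}\big)=O\!\big(\frac{Bb}{g^2}\big)$; plugging in the new $b$ this equals $O\!\big(\frac{T\log(2/\delta)\log^2(d^2M/\delta)}{\epsilon^2 B}\big)$. Since the $M$ per-batch noises use fresh independent randomness and are mean zero, the accumulated noise $\sum_{m'\le m}N_{m'}$ (resp.\ $\sum_{m'\le m}n_{m'}$) is sub-Gaussian with variance $\sigma^2=O\!\big(M\cdot\frac{Bb}{g^2}\big)=O\!\big(\frac{T^2\log(2/\delta)\log^2(d^2M/\delta)}{\epsilon^2 B^2}\big)$, i.e.\ $\sigma=O\!\big(\frac{T\sqrt{\log(2/\delta)}\,\log(d^2M/\delta)}{\epsilon B}\big)$; here one uses the corrected $B\cdot\Delta$ re-centering in Algorithm~\ref{alg:Avec} so that the noise is genuinely mean zero, and the on/above-diagonal independence required for the matrix part of Assumption~\ref{ass:subG}. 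Feeding this $\sigma$ and $\lambda=\Theta(\sigma(\sqrt d+\sqrt{\log(T/(B\alpha))}))$ into Lemma~\ref{cor:subG} gives $\text{Reg}(T)=O\big(dB\log T+d\sqrt T\log(T/\alpha)+\sqrt{\sigma T}\,d^{3/4}\log T\log(T/\alpha)\big)$. Simplifying with $B=T/M$: $dB=dT/M$; $\sqrt{\sigma T}=\sqrt{MT/\epsilon}\cdot\log^{1/4}(2/\delta)\log^{1/2}(d^2M/\delta)\le\sqrt{MT/\epsilon}\,\log^{3/4}(d^2M/\delta)$; and $d\sqrt T\le\frac d2(B+M)$ is absorbed into the first and third terms in the relevant parameter regime, which yields the claimed bound.

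\textbf{Main obstacle.} The delicate point is the privacy bookkeeping: one must be certain that the object to which advanced composition is applied is precisely the tuple of $M$ per-batch SDP mechanisms—which is exactly what Assumption~\ref{ass:return} guarantees by permitting each user at most one appearance per batch—and that the one-round guarantee of~\cite{cheu2021shuffle} survives the $\Theta(1/\sqrt M)$ shrinkage of $\epsilon_m$ and $\Theta(1/M)$ shrinkage of $\delta_m$, including the requirement $p=1/4$ needed (as in Theorem~\ref{thm:vec}) to control the \emph{tail} of the binomial noise rather than merely its variance. A secondary subtlety is checking that the accumulated private noise across batches satisfies Assumption~\ref{ass:subG} with independent, mean-zero, sub-Gaussian entries; this rests on independence of fresh randomness across batches together with the corrected de-biasing step, after which the remainder of the argument is routine substitution into Theorem~\ref{thm:composition}, Theorem~\ref{thm:vec}, and Lemma~\ref{cor:subG}.
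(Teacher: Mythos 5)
Your proposal is correct and follows exactly the paper's route: the paper's own proof is simply "apply advanced composition (Theorem~\ref{thm:composition}) with $\epsilon_m=\frac{\epsilon}{2\sqrt{2M\log(2/\delta)}}$, $\delta_m=\delta/(2M)$, then substitute into Theorem~\ref{thm:vec}," and you carry out precisely that substitution, additionally verifying that the stated $b$ and the final regret exponents emerge from it. The extra bookkeeping you supply (matching the constant $10^7$, re-deriving $\sigma$, and simplifying $\sqrt{\sigma T}$ via $T/\sqrt{B}=\sqrt{MT}$) is consistent with the paper's intended calculation.
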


\begin{corollary}[Utility-targeted]
Under the same assumption in Theorem~\ref{thm:vec_return}, $B = O(d^{-1/6}\epsilon^{-1/3}T^{2/3}(\log(Td^2/\delta))^{1/2})$, Algorithm~\ref{alg:BOFUL} instantiated using $\cP_{\text{Vec}}$ achieves $(\epsilon,\delta)$-SDP with regret 
\begin{align*}
    R(T) = \widetilde{O}\left(d^{5/6} T^{2/3}\epsilon^{-1/3}\left(\log(d^2T/\delta)\right)^{1/2}\right).
\end{align*}
% Simultaneously, $\cP_{\text{Vec}}$ also achieves $O(\epsilon,\delta)$-JDP and $O(\epsilon_0,\delta_0)$-LDP where 
% \begin{align*}
%     \epsilon_0 = O\left(\epsilon^{2/3}T^{1/6} d^{-1/6}(\log(d^2T/\delta))^{1/2} \right),\quad \delta_0 = O\left(\delta T^{-1/3}\epsilon^{-1/3}d^{-1/6} (\log(d^2T/\delta))^{1/2} \right).
% \end{align*}
\end{corollary}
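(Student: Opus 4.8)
The plan is to obtain this corollary as a direct consequence of Theorem~\ref{thm:vec_return} by optimizing the free batch parameter $B$ (equivalently the number of batches $M = T/B$) in the two-term regret bound proved there. The privacy claim requires nothing new: Theorem~\ref{thm:vec_return} already establishes $(\epsilon,\delta)$-SDP for \emph{every} admissible $B\in[T]$, with $g,b,p$ tuned as functions of $B,\epsilon,\delta$ (and with the regularizer $\lambda$ set accordingly). So the first step is merely to check that the prescribed $B = \Theta\bigl(d^{-1/6}\epsilon^{-1/3}T^{2/3}(\log(Td^2/\delta))^{1/2}\bigr)$ lies in $[T]$, which holds in the intended regime (this is where the implicit restriction of $T$ being not too small relative to $d$ and $1/\epsilon$, together with $\epsilon\le 15$, enters), and then inherit both the $(\epsilon,\delta)$-SDP guarantee and the high-probability regret bound verbatim from Theorem~\ref{thm:vec_return} with this $B$ substituted.

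For the regret, I would rewrite the bound of Theorem~\ref{thm:vec_return} in terms of $B$ via $M = T/B$:
\begin{align*}
    \text{Reg}(T) = O\!\left( dB\log T + \frac{T}{\sqrt{B\epsilon}}\, d^{3/4}\,\log^{3/4}\!\bigl(d^2 T/(B\delta)\bigr)\,\log T\,\log(T/\alpha)\right).
\end{align*}
Since $1\le M\le T$, the $B$-dependence inside the logarithm can be absorbed by the monotonicity bound $\log\bigl(d^2T/(B\delta)\bigr)\le \log(d^2T/\delta)$, after which the second term is $\widetilde O\bigl(\tfrac{T}{\sqrt{B\epsilon}}\,d^{3/4}\bigr)$ up to factors polylogarithmic in $T,d,1/\delta,1/\alpha$. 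Balancing the two terms, $dB \asymp \tfrac{T}{\sqrt{B\epsilon}}\,d^{3/4}$, yields $B^{3/2}\asymp T\,d^{-1/4}\epsilon^{-1/2}$, i.e.\ $B\asymp T^{2/3}d^{-1/6}\epsilon^{-1/3}$; retaining the $\log^{3/4}$ factor of the second term in this balance promotes it to $B\asymp d^{-1/6}\epsilon^{-1/3}T^{2/3}(\log(Td^2/\delta))^{1/2}$, exactly the stated choice. Substituting this $B$ back into the first term $dB\log T$ gives $\widetilde O\bigl(d^{5/6}T^{2/3}\epsilon^{-1/3}(\log(d^2T/\delta))^{1/2}\bigr)$, and the same substitution makes the (now logarithm-dominated) second term of the same order, so the two add up to the claimed bound, with the residual $\log T$ and $\log(1/\alpha)$ factors hidden inside $\widetilde O$. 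The JDP/LDP side claims, if needed, follow exactly as in Corollary~\ref{cor:vec_util} (Billboard lemma for JDP, the $B=1$ reduction for LDP).

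I do not expect a genuine obstacle here — the argument is arithmetic optimization over the bound of Theorem~\ref{thm:vec_return}. The only points needing mild care are (i) the apparent circularity of having $M=T/B$ appear inside a logarithm of the regret expression, which is dissolved by the crude bound $M\le T$ so that no self-referential solve for $B$ is actually needed, and (ii) confirming that the optimized $B$ is an admissible batch size, $B\in[T]$, which is precisely the regime condition under which the corollary is stated. Everything else is routine manipulation of the two-term bound.
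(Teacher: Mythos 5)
Your proposal is correct and matches the paper's approach: the paper proves this corollary (like Corollaries~\ref{cor:amp_util} and \ref{cor:amp_util_return}) simply by choosing a balanced $B$ in the two-term regret bound of Theorem~\ref{thm:vec_return}, with the $(\epsilon,\delta)$-SDP guarantee inherited for free since it holds for every $B\in[T]$. Your arithmetic (balancing $dB$ against $T d^{3/4}\log^{3/4}(\cdot)/\sqrt{B\epsilon}$ to get $B\asymp d^{-1/6}\epsilon^{-1/3}T^{2/3}\log^{1/2}(Td^2/\delta)$) and your handling of the $M$ inside the logarithm via $M\le T$ are exactly what is needed.
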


% \begin{corollary}[Privacy-targeted]
% Let Assumption~\ref{ass:bounded} hold. For any $\epsilon_0 \in [0,15]$ and $\delta_0 \in (0,1/2)$, let 
% \begin{align*}
%     g = \max\{d, 4\}, \quad b = \frac{24\cdot 10^4\cdot g^2\cdot \left(\log\left(\frac{4\cdot(d^2+1)}{\delta_0}\right)\right)^2}{\epsilon_0^2},\quad p  = 1/4,
% \end{align*}
%  Then, for any $B\in [T]$, $\cP_{\text{Vec}}$ is $(\epsilon_0,\delta_0)$-LDP. Further suppose $B = O(d^{-1/4}T^{3/4}\epsilon_0^{-1/2}(\log(d^2/\delta_0))^{1/2})$, then $\cP_{\text{Vec}}$ achieves regret 
% \begin{align*}
%      R(T) = \tilde{O}\left( d^{3/4}  T^{3/4}\frac{(\log(d^2/\delta_0))^{1/2}}{\sqrt{\epsilon_0}} \right).
% \end{align*}
% Simultaneously, $\cP_{\text{Vec}}$ also achieves $O(\epsilon,\delta)$-SDP and $O(\epsilon,\delta)$-JDP where
% \begin{align*}
%     \epsilon = O\left(\epsilon^{3/2}T^{-1/4}d^{1/4} \right), \quad \delta = O(\delta_0 T^{3/4}d^{1/4}\epsilon^{1/2}(\log(d^2/\delta_0))^{-1/2}).
% \end{align*}
% \end{corollary}

\begin{proof}[Proof of Theorem~\ref{thm:vec_return}]
First, by advanced composition in Theorem~\ref{thm:composition}, if we let each batch's privacy parameters be $\epsilon_m = \frac{\epsilon}{2\sqrt{2M\log(2/\delta)}}$ and $\delta_m = \delta/(2M)$, then final privacy guarantee is $(\epsilon,\delta)$-DP. Thus, we only need to replace $\epsilon$ by $\epsilon_m$ and $\delta$ by $\delta_m$ in Theorem~\ref{thm:vec}
\end{proof}

\subsection{JDP under Returning Users}
% \xingyu{I double-check that the refined analysis is not applicable to the group privacy, which is different from composition. Very subtle thing. Thus, the only thing is to use group privacy. See the remark after Theorem 2.2 of Dwork's book}

As mentioned before, existing algorithm with JDP guarantee assumes \emph{unique} users, i.e., event-level JDP given by Definition~\ref{def:JDP}. To handle returning users, we need to consider user-level JDP given by Definition~\ref{def:JDP-user}. One straightforward way is to resort to group privacy~\cite{dwork2014algorithmic}. That is, if any user appears at most $M_0$ rounds in the process, the original $(\epsilon, \delta)$-JDP algorithm proposed in~\cite{shariff2018differentially} now achieves $(M_0\epsilon, M_0\exp((M_0-1)\epsilon)\delta)$-JDP (user-level). However, this black-box will incur a large loss in the $\delta$ term. To overcome this, we note that a simple modification of the added noise in the original algorithm in~\cite{shariff2018differentially} will work. In particular, we scale up the noise variance by a multiplicative factor of $M_0^2$, if any user participates in at most $M_0$ rounds. This follows from the fact that flipping one user now would change the $\ell_2$ sensitivity of the expanded binary-tree nodes from $O(\sqrt{\log T})$ to $O(M_0\sqrt{\log T})$. Then, utilizing our derived generic regret bound in Lemma~\ref{cor:subG}, yields the following result.
% Roughly speaking, we have to decrease the privacy budget by a factor of $1/M$ to attain the same privacy guarantee as before.

\begin{proposition}[Restatement of Proposition \ref{prop:JDP-return}]
If any user participates in at most $M_0$ rounds, the algorithm in~\cite{shariff2018differentially}, with the above modification to handle user-level privacy, achieves the high-probability regret bound
\begin{align*}
   \text{Reg}(T) = \widetilde{O}\left(d\sqrt{T} + \sqrt{\frac{M_0T}{\epsilon}}d^{3/4} \log^{1/4} (1/\delta) \right).
\end{align*}
\end{proposition}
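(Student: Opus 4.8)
The plan is to reduce Proposition~\ref{prop:JDP-return} to the generic regret template of Lemma~\ref{cor:subG} by (i) quantifying how much noise the modified tree mechanism must inject under user-level privacy, and (ii) checking that this noise meets the sub-Gaussian conditions of Assumption~\ref{ass:subG} with batch size $B=1$. First I would recall the structure of the algorithm of~\cite{shariff2018differentially}: it privately maintains the running Gram matrix $\sum_{s\le t}\phi(c_s,a_s)\phi(c_s,a_s)^\top$ and feature-reward vector $\sum_{s\le t}\phi(c_s,a_s)y_s$ via a binary-tree (continual-observation) mechanism, so that the released statistic at any round $t$ equals the true sum plus a sum of at most $O(\log T)$ independent node-level Gaussian perturbations, and a sufficiently large regularizer makes the noisy Gram matrix positive definite. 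In the event-level (unique users) analysis, flipping one user alters a single leaf, which propagates to $O(\log T)$ ancestor nodes, each of whose contribution from that leaf has Frobenius (resp.\ Euclidean) norm at most $1$ by Assumption~\ref{ass:bounded}; hence the $\ell_2$ sensitivity of the vector of all affected tree nodes is $O(\sqrt{\log T})$, and the Gaussian mechanism calibrated to it with budget $(\epsilon,\delta)$ uses per-node standard deviation $\widetilde{O}(\sqrt{\log(1/\delta)}/\epsilon)$.

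Next I would carry out the key modification. Under the returning-users restriction of at most $M_0$ rounds per user, flipping all the data of a single user now changes up to $M_0$ leaves, so by the triangle inequality the $\ell_2$ sensitivity of the affected tree nodes grows to $O(M_0\sqrt{\log T})$. Calibrating the Gaussian mechanism to this enlarged sensitivity amounts to scaling every node's noise standard deviation by $M_0$ — equivalently the variance by $M_0^2$ — while leaving the $\delta$ term untouched; this is precisely the advantage over the black-box group-privacy route, which would instead blow $\delta$ up by a factor $M_0 e^{(M_0-1)\epsilon}$. I would then invoke the Billboard lemma (Lemma~9 of~\cite{hsu2016private}), exactly as in the proofs of Corollaries~\ref{cor:amp_util} and~\ref{cor:vec_util}: each recommended action is a deterministic function of the corresponding user's own context together with the released (privately computed) policy, which is a post-processing of the DP tree output, so $(\epsilon,\delta)$-DP of the released statistics yields user-level $(\epsilon,\delta)$-JDP of the action sequence.

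Finally I would plug into Lemma~\ref{cor:subG} with $B=1$. The cumulative noise added to the running Gram matrix and feature-reward vector is a sum of $O(\log T)$ independent node Gaussians, hence has independent, mean-zero, sub-Gaussian entries (independent across on-and-above-diagonal entries for the matrix part) with per-entry variance $\sigma^2=\widetilde{O}(M_0^2\log(1/\delta)/\epsilon^2)$, i.e.\ $\sigma=\widetilde{O}(M_0\sqrt{\log(1/\delta)}/\epsilon)$, so Assumption~\ref{ass:subG} holds. Setting $\lambda=\Theta(\sigma(\sqrt{d}+\sqrt{\log(T/\alpha)}))$ as prescribed, Lemma~\ref{cor:subG} gives $\text{Reg}(T)=\widetilde{O}(d\sqrt{T}+\sqrt{\sigma T}\,d^{3/4})$; substituting $\sigma$ yields $\sqrt{\sigma T}=\widetilde{O}(\sqrt{M_0 T/\epsilon}\,\log^{1/4}(1/\delta))$, hence the claimed bound $\text{Reg}(T)=\widetilde{O}(d\sqrt{T}+\sqrt{M_0 T/\epsilon}\,d^{3/4}\log^{1/4}(1/\delta))$.

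The main obstacle, I expect, is the privacy bookkeeping of the modified tree mechanism rather than the regret algebra: one must argue carefully that a returning user's (arbitrarily time-scattered) contributions touch at most $M_0$ leaves and therefore inflate the $\ell_2$ sensitivity of the affected tree nodes by a factor of exactly $M_0$ — no worse, thanks to the $\ell_2$ (rather than $\ell_1$) accounting and the tree mechanism's composition across its $O(\log T)$ levels — and that it is this sensitivity, not $\delta$, that governs the Gaussian noise scale, so that only the variance scales by $M_0^2$. A secondary point to verify is that after this rescaling the released noisy Gram matrix remains positive definite with high probability under the chosen $\lambda$, which follows from the same sub-Gaussian matrix concentration already used in the proof of Lemma~\ref{cor:subG}.
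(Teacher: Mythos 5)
Your proposal is correct and follows essentially the same route as the paper: the paper's proof likewise observes that flipping one returning user changes up to $M_0$ leaves of the binary tree, inflating the $\ell_2$-sensitivity of the expanded tree nodes from $O(\sqrt{\log T})$ to $O(M_0\sqrt{\log T})$ and hence the node noise variance by $M_0^2$, while the number of nodes contributing noise to any prefix sum remains $O(\log T)$, and then plugs the resulting $\sigma=\widetilde{O}(M_0\sqrt{\log(1/\delta)}/\epsilon)$ into Lemma~\ref{cor:subG}. Your additional details (the Billboard-lemma step for JDP and the explicit check of the regret algebra) are consistent with the paper's surrounding discussion.
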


\begin{proof}
The key idea behind the regret analysis in the central model for linear contextual bandits in~\cite{shariff2018differentially} is to utilize the following two properties of the so-called tree-based mechanism (or binary counting mechanism)~\cite{chan2010private}: (i) change of each leaf-node (corresponding to a user's data) only incurs the change of $l_2$-sensitivity of the expanded binary-tree by $O(\sqrt{\log T})$; (ii) for any $t \in [T]$, the summation of data from time $1$ to $t$ only involves at most $O({\log T)}$ tree nodes. Property (i) is used to compute the added noise at each node to guarantee privacy while property (ii) is used to compute the total noise in the private sum when bounding the regret. Now, in the case of returning users, if we flip one user's data, it will change the $l_2$-sensitivity of the expanded binary-tree by $O(M_0\sqrt{\log T})$, i.e., an additional $M_0$ factor in the sensitivity, which leads to the additional $M_0^2$ factor in the added noise. Property (ii) is the same as before, i.e., total number of noise is at most $O(\log T)$. Finally, by Lemma~\ref{cor:subG}, we have the result. 
% In particular, when calculating the added noise, previous works~\cite{shariff2018differentially} mainly use \emph{concentrated differential privacy} to remove the additional $\log(1/\delta)$ factor in the standard advanced composition for Gaussian mechanism. 
\end{proof}

\section{Batched Algorithms for Local and Central Models}
\label{sec:batched}
To start with, for the batched algorithm in the local model, one can simply replace the shuffler in Algorithm~\ref{alg:BOFUL} by an identity mapping while using the same local randomizer as in~\cite{zheng2020locally} (i.e., Gaussian mechanism). We call this algorithm $\emph{Batched-Local-LinUCB}$. Thanks to Lemma~\ref{cor:subG}, we have the following privacy and regret guarantees. 
\begin{proposition}
Let Assumption~\ref{ass:bounded} hold. Fix any $\epsilon_0 \in [0,1]$, $\delta_0 \in (0,1]$ and $\alpha \in [0,1]$, let $\sigma_1 = \sigma_2 = \frac{4\sqrt{2\log(2.5/\delta_0)}}{\epsilon_0}$. Then, for all $B \in [T]$, Bathed-Local-LinUCB is $(\epsilon_0,\delta_0)$-LDP and with probability at least $1-\alpha$
\begin{align*}
    \reg(T) = \tilde{O}\left(dB + T^{3/4}d^{3/4}\frac{(\log(1/\delta))^{1/4}}{\sqrt{\epsilon}}\log(T/\alpha)\right).
\end{align*}
\end{proposition}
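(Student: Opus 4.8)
The plan is to derive both claims as direct corollaries of tools already in place: the LDP property of the Gaussian local randomizer, and the generic batched-LinUCB regret template of Lemma~\ref{cor:subG}.

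\emph{Privacy.} Batched-Local-LinUCB is obtained from Algorithm~\ref{alg:BOFUL} by running exactly the local randomizer $\cR_{\text{Amp}}$ (Algorithm~\ref{alg:Ramp}) at each user and replacing the shuffler by the identity map. By Assumption~\ref{ass:bounded} the $\ell_2$ sensitivity of each user's release --- the vector $\phi(c_t,a_t)y_t$ together with the upper-triangular part of $\phi(c_t,a_t)\phi(c_t,a_t)^\top$ --- is bounded by an absolute constant. Hence the standard Gaussian-mechanism calculation, precisely the one used to show that $\cR_{\text{Amp}}$ with $\sigma_1=\sigma_2=4\sqrt{2\log(2.5/\delta_0)}/\epsilon_0$ is $(\epsilon_0,\delta_0)$-LDP in the proofs of Theorem~\ref{thm:amp} and Corollary~\ref{cor:amp_priv}, applies verbatim. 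Since the identity shuffler and the analyzer only post-process already-private messages, the composite mechanism is $(\epsilon_0,\delta_0)$-LDP for every batch size $B\in[T]$; the batch schedule plays no role in the privacy argument here.

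\emph{Regret.} I would next verify Assumption~\ref{ass:subG}. With the identity shuffler the analyzer merely sums the noisy messages, so the noise injected during batch $m$ is $n_m=\sum_{t\in\text{batch }m} z_t$ and $N_m=\sum_{t\in\text{batch }m} Z_t$, where $z_t\sim\cN(0,\sigma_2^2 I_d)$ and $Z_t$ is the symmetric Gaussian matrix with variance $\sigma_1^2$ on and above the diagonal, all independent across users. Because users are unique and there are at most $T$ of them, $\sum_{m'\le m} n_{m'}$ has independent mean-zero Gaussian entries of variance at most $T\sigma_2^2$, and $\sum_{m'\le m} N_{m'}$ is symmetric with independent mean-zero Gaussian entries (on and above the diagonal) of variance at most $T\sigma_1^2$; crucially the per-batch count $B$ cancels. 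Thus Assumption~\ref{ass:subG} holds with $\sigma^2 = T\sigma_1^2 = O(T\log(1/\delta_0)/\epsilon_0^2)$, i.e.\ $\sigma = O(\sqrt{T\log(1/\delta_0)}/\epsilon_0)$. Feeding this $\sigma$ and the prescribed $\lambda$ into Lemma~\ref{cor:subG} gives, with probability at least $1-\alpha$, $\reg(T) = O(dB\log T + d\sqrt T\log(T/\alpha)) + O(\sqrt{\sigma T}\,d^{3/4}\log T\log(T/\alpha))$. Substituting $\sqrt{\sigma T} = T^{3/4}(\log(1/\delta_0))^{1/4}/\sqrt{\epsilon_0}$ and absorbing $d\sqrt T \le d^{3/4}T^{3/4}$ (valid when $d\le T$) into the dominant term yields the claimed $\widetilde O\big(dB + T^{3/4}d^{3/4}(\log(1/\delta_0))^{1/4}\epsilon_0^{-1/2}\log(T/\alpha)\big)$.

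The statement carries no serious obstacle; the only point that deserves care is confirming that summing per-user symmetric Gaussian matrices preserves the symmetry and entrywise (above-diagonal) independence demanded by Assumption~\ref{ass:subG}, and --- more conceptually --- that the accumulated variance is governed by the number of distinct users ($\le T$), not by $MB$ or by any composition across batches. This is exactly why the privacy-dependent part of the regret is free of $B$, and it is the feature that distinguishes the present unique-users analysis from the returning-users treatment of Section~\ref{app:return}, where advanced composition forces an additional $M$ factor into the injected noise.
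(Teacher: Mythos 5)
Your proposal is correct and follows exactly the route the paper intends: the paper states this proposition as an immediate consequence of Lemma~\ref{cor:subG}, with the LDP guarantee coming from the same Gaussian-mechanism calibration used for $\cR_{\text{Amp}}$ in Corollary~\ref{cor:amp_priv} and the noise level $\sigma^2 = O(T\log(1/\delta_0)/\epsilon_0^2)$ matching the paper's own remark that the local model corresponds to $\sigma^2 \approx T/\epsilon^2$ in the generic template. Your observations that the cumulative noise variance is governed by the number of distinct users rather than the batch schedule, and that $\sqrt{\sigma T} = T^{3/4}\log^{1/4}(1/\delta_0)/\sqrt{\epsilon_0}$, are precisely the details the paper leaves implicit.
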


\begin{remark}
The above theorem indicates that it suffices to update every $B = \tilde{O}(T^{3/4})$ to ensure the same privacy and regret guarantees as in the sequential case.
\end{remark}

For the batched algorithm in the central model, we can make the following simple modification over the sequential one in~\cite{shariff2018differentially}, which relies on the seminal tree-based algorithm~\cite{chan2010private} at the central server (analyzer) to balance between privacy and regret. In the batched case, instead of updating the binary-tree nodes after every round, the server updates them only after each batch by treating the the sum of the statistics (i.e., vectors or matrices) within the batch as a single new observation. We call this algorithm \emph{Batched-Central-LinUCB}.
With this modification and Lemma~\ref{cor:subG}, we have the following privacy and regret guarantees. 
% That is, the binary-tree now only has $M$ leaf nodes and each internal node 
\begin{proposition}
Let Assumption~\ref{ass:bounded} hold. Fix any $\epsilon \in [0,1]$, $\delta \in (0,1]$ and $\alpha \in [0,1]$. Then, for all $B \in [T]$, Bathed-Central-LinUCB is $(\epsilon,\delta)$-JDP and with probability at least $1-\alpha$
\begin{align*}
    \reg(T) = \tilde{O}\left(dB + \sqrt{T}d^{3/4}\frac{(\log(1/\delta))^{1/4}}{\sqrt{\epsilon}}\log(T/\alpha)\right).
\end{align*}
\end{proposition}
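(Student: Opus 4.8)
The plan is to reduce the statement to the generic regret bound of Lemma~\ref{cor:subG} by exhibiting the noise that a \emph{batched} version of the tree-based (binary counting) mechanism injects, and to obtain the privacy claim through the standard combination of that mechanism with the Billboard lemma. First I would fix the algorithm precisely: run Algorithm~\ref{alg:BOFUL} with the shuffler replaced by the identity map, and have the server maintain, for each of the two statistics $\phi(c_t,a_t)\phi(c_t,a_t)^\top$ and $\phi(c_t,a_t)y_t$, a binary tree \cite{chan2010private} with $M=T/B$ leaves, the $m$-th leaf being the \emph{batch sum} $\sum_{t=t_{m-1}+1}^{t_m}(\cdot)$. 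Independent Gaussian noise is added to every tree node (the matrix tree perturbed only on and above the diagonal, then symmetrized), and the private prefix statistics $\widetilde V_m,\widetilde u_m$ that feed the ellipsoid $(\hat\theta_m,V_m)$ are read off as the sum of the $O(\log M)$ noisy nodes covering the prefix $[1,t_m]$.

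For privacy I would argue as follows. Since all $T$ users are unique (event-level JDP), the data of a single user lies in exactly one leaf of each tree, hence influences at most $L=O(\log M)=O(\log(T/B))$ nodes; by Assumption~\ref{ass:bounded} each such node moves by $O(1)$ in Euclidean / Frobenius norm, so the $\ell_2$ sensitivity of the full vector of node values is $O(\sqrt{L})$. Calibrating the per-node Gaussian noise to standard deviation $\Theta\big(\sqrt{L\log(1/\delta)}/\epsilon\big)$ then makes the whole released sequence of noisy nodes $(\epsilon,\delta)$-DP by the Gaussian mechanism. Finally, the action recommended to user $t$ is a deterministic function of her private context $c_t$ together with the common signal $(\hat\theta_{m-1},V_{m-1})$, which is post-processing of the DP node sequence; the Billboard lemma (Lemma~9 of \cite{hsu2016private}) then upgrades this to $(\epsilon,\delta)$-JDP.

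For the regret, the remaining work is to verify Assumption~\ref{ass:subG} and extract $\sigma$. Each prefix statistic $\widetilde V_m$ (resp.\ $\widetilde u_m$) equals the non-private partial sum plus a sum of at most $L$ independent node noises; every coordinate of that total noise is Gaussian with variance $O\big(L\cdot L\log(1/\delta)/\epsilon^2\big)=\widetilde O(\log(1/\delta)/\epsilon^2)$, and the matrix version keeps the entries on and above the diagonal independent, exactly as Assumption~\ref{ass:subG} demands. Taking $\sigma^2=\widetilde O(\log(1/\delta)/\epsilon^2)$ and $\lambda$ as in Lemma~\ref{cor:subG} (which also secures the regularity Assumption~\ref{ass:reg}) yields
\begin{align*}
    \reg(T) = \widetilde O\!\left(dB + \sqrt T\, d^{3/4}\,\frac{(\log(1/\delta))^{1/4}}{\sqrt\epsilon}\,\log(T/\alpha)\right),
\end{align*}
which is the claimed bound (the non-private $d\sqrt T$ term being subsumed in the relevant privacy regime and the polylog-$T$ factors absorbed into $\widetilde O$).

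I expect the main obstacle to be the privacy/noise bookkeeping for the batched tree rather than anything in the regret step: one must confirm that collapsing an entire batch into a single leaf does not change the $O(\sqrt{\log M})$ node-sensitivity under the unique-user assumption, that the per-node calibration above gives $(\epsilon,\delta)$-DP for the \emph{entire} stream of released partial sums (not merely one of them), and that the prefix-sum noise retains precisely the entrywise (vectors) and on-and-above-diagonal (matrices) independence structure required by Assumption~\ref{ass:subG}, so that Lemma~\ref{cor:subG} applies without modification. Once these are in place, the proposition is a direct substitution.
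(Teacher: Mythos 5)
Your proposal is correct and follows essentially the same route as the paper: replace the per-round tree-based mechanism of \cite{shariff2018differentially} with one whose $M=T/B$ leaves are batch sums, calibrate the per-node Gaussian noise to the $O(\sqrt{\log M})$ sensitivity of the expanded tree, obtain JDP via the Billboard lemma, and read the regret off the generic bound of Lemma~\ref{cor:subG} with $\sigma^2=\widetilde O(\log(1/\delta)/\epsilon^2)$. You in fact supply more of the bookkeeping (the $\ell_2$-sensitivity of the node vector, the verification of the independence structure in Assumption~\ref{ass:subG}) than the paper, which states this proposition with only the algorithmic description as justification.
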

\begin{remark}
The above theorem indicates that it suffices to update every $B = \tilde{O}(\sqrt{T})$ to attain the same privacy-regret trade-off as in the sequential case.
\end{remark}

\section{Additional Experimental Results}\label{app:sim}

\begin{figure}[h]
% 		\begin{subfigure}[t]{.3\linewidth}
% 			\includegraphics[width = 2.0in]{Figures/Bernoulli (d=5,eps=0.2,delta=0.1,B=20).pdf}
% 			\caption{1 }
% 		\end{subfigure}\ \
% 		\begin{subfigure}[t]{.3\linewidth}
% 			\includegraphics[width = 2.0in]{Figures/Bernoulli (d=5,eps=1,delta=0.1,B=20).pdf}
% 			\caption{2}
% 		\end{subfigure}\ \
% 		\begin{subfigure}[t]{.3\linewidth}
% 			\includegraphics[width = 2.0in]{Figures/Bernoulli (d=5,eps=10,delta=0.1,B=20).pdf}
% 			\caption{3}
% 		\end{subfigure}\\
% 		 \vspace{0mm}
% 		\vspace{0mm}
		\begin{subfigure}[b]{.48\textwidth}
		\centering
			\includegraphics[width = 2.1in]{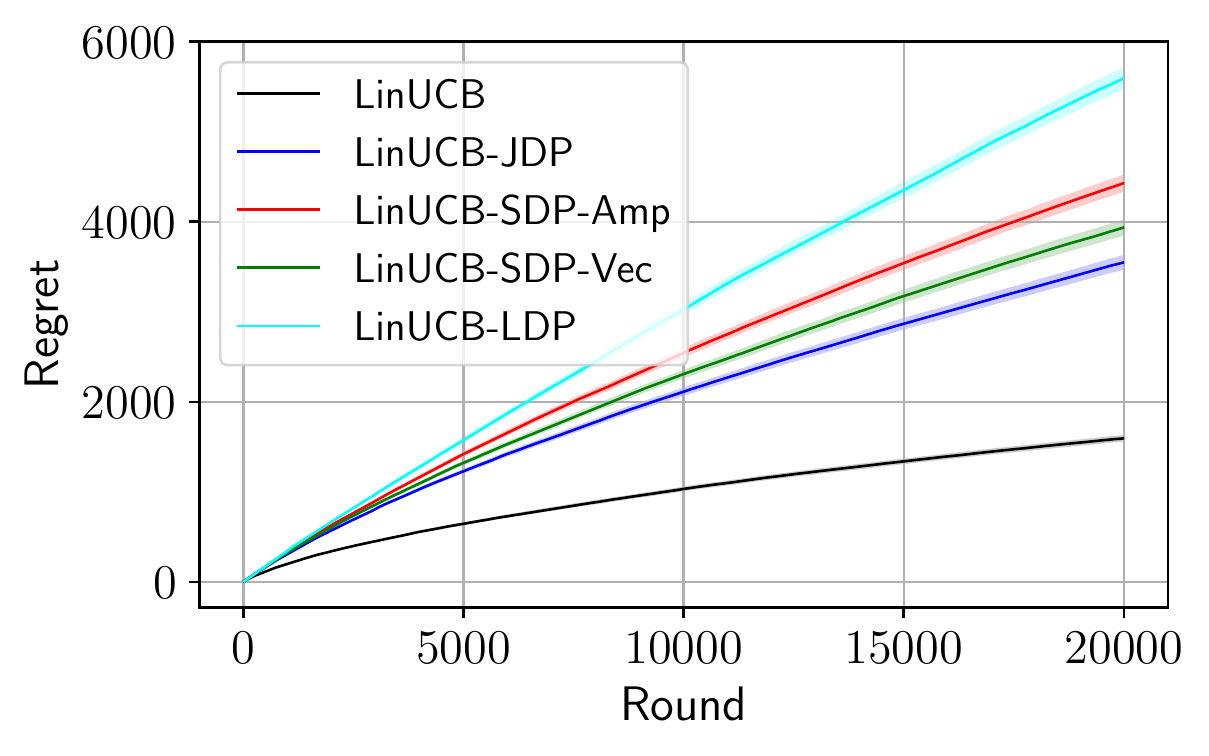}
			\caption{$d=10$}
		\end{subfigure} 
		\begin{subfigure}[b]{.48\textwidth}
		\centering
			\includegraphics[width = 2.1in]{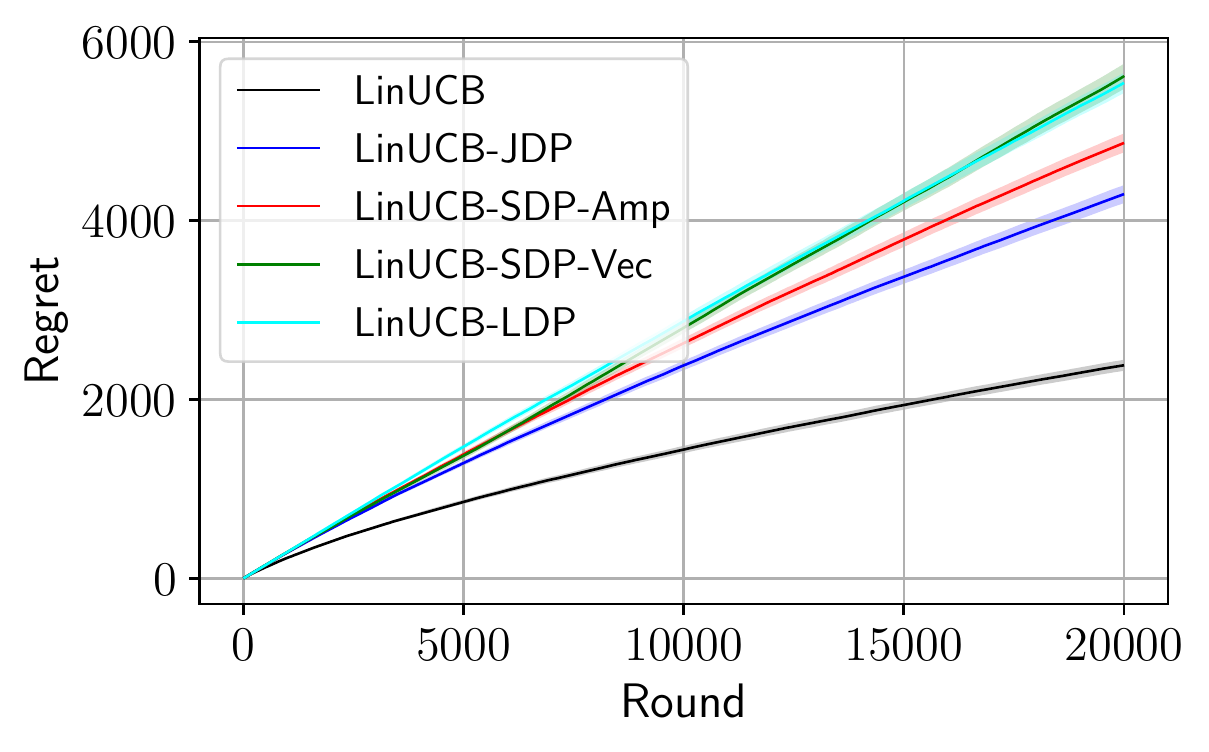}
			\caption{$d=15$}
		\end{subfigure} 
% 		\begin{subfigure}[t]{.3\linewidth}
% 			\includegraphics[width = 2.1in]{Figures/new_eps_10.pdf}
% 			\caption{$\epsilon = 10$}
% 		\end{subfigure}
		 \vspace{0mm}
		\caption{\footnotesize{Comparison of cumulative regret for LinUCB (non-private), LinUCB-JDP (central model), LinUCB-SDP (shuffle model) and LinUCB-LDP (local model) with privacy level $\epsilon=1$ for varying feature dimension $d=10$ (a) and $d=15$ (b). In all cases, regret of LinUCB-SDP lies perfectly in between LinUCB-JDP and LinUCB-LDP, achieving finer regret-privacy trade-off.}  }\label{fig:all_algos_extra}
		\vspace{0mm}
\end{figure}
\end{appendix}

\end{document}